\def\secref#1{\wasyparagraph\ref{#1}}
\def\eqref#1{equation~\ref{#1}}
\def\1{\bm{1}}
\DeclareMathAlphabet{\mathsfit}{\encodingdefault}{\sfdefault}{m}{sl}
\SetMathAlphabet{\mathsfit}{bold}{\encodingdefault}{\sfdefault}{bx}{n}
\def\gS{{\mathcal{S}}}
\def\gX{{\mathcal{X}}}
\DeclareMathOperator*{\argmin}{arg\,min}
\newcommand{\X}{\mathcal{X}}
\newcommand{\Sd}{\mathbb{S}}
\newcommand{\markov}{\mathcal{M}}
\newcommand{\reciprocal}{\mathcal{R}^{\textup{ref}}}
\newcommand{\process}{\mathcal{P}}
\DeclareMathOperator{\proj}{proj}
\newcommand{\KL}[2]{\text{KL}\left(#1\Vert #2\right)}
\definecolor{scorebad}{HTML}{D55E00}     
\definecolor{scoremidbad}{HTML}{E69F00}  
\definecolor{scoremidgood}{HTML}{F0E442} 
\definecolor{scoregood}{HTML}{009E73}    
\newcommand{\wrapcell}[2]{%
  \if\relax\detokenize{#1}\relax
    #2%
  \else
    #1{#2}%
  \fi
}
\newcommand{\scorecell}[2][]{%
  \ifdim #2 pt<0.5pt
    \cellcolor{scorebad!45}{\wrapcell{#1}{\num[round-precision=2]{#2}}}%
  \else\ifdim #2 pt<0.75pt
    \cellcolor{scoremidbad!45}{\wrapcell{#1}{\num[round-precision=2]{#2}}}%
  \else\ifdim #2 pt<0.85pt
    \cellcolor{scoremidgood!45}{\wrapcell{#1}{\num[round-precision=2]{#2}}}%
  \else
    \cellcolor{scoregood!45}{\wrapcell{#1}{\num[round-precision=2]{#2}}}%
  \fi\fi\fi
}
\newcommand{\scoreklcell}[2][]{%
  \ifdim #2 pt<0.5pt
    \cellcolor{scoregood!45}{\wrapcell{#1}{\num[round-precision=1]{#2}}}%
  \else\ifdim #2 pt<2pt
    \cellcolor{scoremidgood!45}{\wrapcell{#1}{\num[round-precision=1]{#2}}}%
  \else\ifdim #2 pt<10pt
    \cellcolor{scoremidbad!45}{\wrapcell{#1}{\num[round-precision=1]{#2}}}%
  \else
    \cellcolor{scorebad!45}{\wrapcell{#1}{\num[round-precision=1]{#2}}}%
  \fi\fi\fi
}
\newtheorem{theorem}{Theorem}[section]
\newtheorem{proposition}{Proposition}[section]
\renewcommand{\eqref}[1]{(\ref{#1})}
\title{\fontsize{16.8pt}{16.8pt}\selectfont Entering the Era of Discrete Diffusion Models: \\ A Benchmark for Schr\"odinger Bridges and \\ Entropic Optimal Transport}
\author{Xavier Aramayo Carrasco\thanks{Equal contribution}\ \ \thanks{Correspondence to: \texttt{xavier.aramayo2@gmail.com}}\\
    Applied AI Institute, \\
    Moscow, Russia
    \And
    Grigoriy Ksenofontov${^*}$ \\
    Applied AI Institute,\\
    MIRAI\thanks{Moscow Independent Research Institute of Artificial Intelligence} ,\\
    Moscow, Russia
    \And
    Aleksei Leonov \\
    AI Foundation and Algorithm Lab\\
    MIRAI$^{\ddagger}$,\\
    Moscow, Russia
    \And
    \hspace{15mm}Iaroslav Koshelev \\
    \hspace{15mm}AI Foundation and Algorithm Lab\\
    \hspace{15mm}Moscow, Russia
    \And
    \hspace{-15mm}Alexander Korotin \\
    \hspace{-15mm}Applied AI Institute,\\
    \hspace{-15mm}AXXX,\\
    \hspace{-15mm}Moscow, Russia
}
\begin{document}

\maketitle
\vspace{-3mm}
\begin{abstract}
    \vspace{-3mm}
    The Entropic Optimal Transport (EOT) problem and its dynamic counterpart, the Schr\"odinger bridge (SB) problem, play an important role in modern machine learning, linking generative modeling with optimal transport theory. While recent advances in discrete diffusion and flow models have sparked growing interest in applying SB methods to discrete domains, there remains no reliable way to assess how well these methods actually solve the underlying problem. We address this challenge by introducing a benchmark for SB on discrete spaces. Our construction yields pairs of probability distributions with analytically known SB solutions, enabling rigorous evaluation. As a byproduct of building this benchmark, we obtain two new SB algorithms, DLightSB and DLightSB-M, and additionally extend prior related work to construct the $\alpha$-CSBM algorithm. We demonstrate the utility of our benchmark by evaluating both existing and new solvers in high-dimensional discrete settings. This work provides the first step toward proper evaluation of SB methods on discrete spaces, paving the way for more reproducible future studies. The code for the benchmark and all associated experiments is available at 
    \vspace{-2mm}
    \begin{center}
        \url{https://github.com/gregkseno/catsbench}.
    \end{center}
\end{abstract}

\vspace{-5mm}
\section{Introduction}
\label{section:introduction}
\vspace{-2mm}
The Entropic Optimal Transport \citep[EOT]{cuturi2013sinkhorn} problem and its dynamic counterpart, the Schr\"odinger bridge \citep[SB]{schrodinger1931umkehrung}, have recently attracted significant attention in the machine learning community due to their relevance for generative modeling and unpaired learning. A variety of methods have been developed to solve these problems in \emph{continuous data spaces} such as \citep{daniels2021score,gushchin2023entropic,gushchin2024adversarial,mokrov2023energy,vargas2021solving,chen2021likelihood,shi2023diffusion,de2024schr,korotin2024light,gushchin2024light}.  

At the same time, much real world data are \emph{discrete by nature}, including text \citep{austin2021structured, gat2024discrete}, molecular graphs \citep{vignac2022digress, qin2024defog, luo2024crystalflow}, and protein sequences \citep{campbell2024generative}. Others are \emph{discrete by construction}, such as vector-quantized representations of images and audio \citep{van2017neural, esser2021taming}. 

Given the prevalence of such discrete data and the rapid progress in discrete diffusion/flow models \citep{hoogeboom2021argmax, austin2021structured, campbell2022continuous, lou2023discrete, sahoo2024simple, campbell2024generative, gat2024discrete}, research on SBs has attracted growing attention in recent years. For instance, several works have already taken first steps in this direction (\citealp[DDSBM]{kim2024discrete};\citealp[CSBM]{ksenofontov2025categorical}), adapting diffusion methodologies from (\citealp[DiGress]{vignac2022digress};\citealp[D3PM]{austin2021structured}), respectively. Yet, beyond these initial studies, practical, broadly applicable solvers for discrete-space EOT/SB remain largely absent.

To make progress on this front, it is also important to have reliable ways to evaluate SB solvers. In practice, EOT/SB methods are often assessed using proxy metrics such as FID \citep{heusel2017gans} or mean-squared-error between input and output. While being useful, these metrics only indirectly reflect whether a method truly solves the EOT/SB problem, since they can be strongly influenced by parameterization, regularization, and other implementation details. Evaluation benchmarks address this limitation by providing a controlled setting in which solvers can be compared against known SB solutions. This allows performance differences to be attributed directly to the underlying algorithms. Consistent with this goal, several optimal transport (OT) benchmarks have been proposed \citep{korotin2021neural, korotin2022kantorovich}, and similar efforts have recently appeared for continuous-state SB \citep{gushchin2023building}. However, no analogous benchmark currently exists for discrete data.

From the discussion above, two key limitations in discrete-space EOT/SB research emerge: \textbf{(1)} lack of a benchmark to assess the performance of the solvers and \textbf{(2)} the limited availability of discrete-space EOT/SB solvers. In this work, we address both issues; our \textbf{contributions} are detailed below: 
\vspace{-6mm}
\begin{itemize}[leftmargin=*]
    \item \textbf{Methodology.} We present a general methodology to create pairs of discrete probability distributions with known SB solutions (\wasyparagraph\ref{subsection:benchmark_main_theorem}). To overcome tractability issues of the methodology in discrete spaces, we introduce a CP-based parameterization (\wasyparagraph\ref{subsection:benchmark_parameterization}). This parameterization yields a closed-form SB and enables a practically feasible benchmark construction.
    
    \item \textbf{Algorithms.} We present various adaptations of continuous-state SB algorithms to discrete settings and evaluate them on our benchmark. Specifically, we develop the following algorithms:
    \begin{itemize}
        \item DLightSB (\wasyparagraph\ref{section:dlight_sb}) and DLightSB-M (\wasyparagraph\ref{section:dlight_sb_m}) -- solvers obtained as a byproduct of the benchmark, mirroring LightSB and LightSB-M \citep{korotin2024light, gushchin2024light};
        \item $\alpha$-CSBM (\wasyparagraph\ref{section:alpha_csbm}) -- a solver that combines CSBM \citep{ksenofontov2025categorical} with the online update strategy of $\alpha$-DSBM \citep{de2024schr}.
    \end{itemize}
\end{itemize}

\textbf{Notation.}  We consider a discrete state space $\X = \Sd^D$, where $\Sd = \{0, 1, \dots, S-1\}$ is the set of $S$ categories and $D$ is the dimensionality. Each $x \in \X$ is a $D$-dimensional vector $x \!= \!(x^1, \dots, x^D)$. Time is discretized as $\{t_n\}_{n=0}^{N+1}$ with \mbox{$0 \!= \!t_0 \!< \!t_1 \!< \!\dots \!< \!t_N \!<\! t_{N+1}\! = \!1$}. This gives $N\!+\!2$ time points and defines the \emph{path space} $\X^{N+2}$ with the tuple \mbox{$x_{\text{in}} \!\coloneqq \!(x_{t_1}, \dots, x_{t_N}) \!\in\! \X^N$} collecting the intermediate states. The set $\process(\X^{N+2})$ comprises all discrete time stochastic processes on the path space, with $\markov(\X^{N+2}) \!\subset \!\process(\X^{N+2})$ denoting the subset of \emph{Markov processes}. Any $q \!\in \!\markov(\X^{N+2})$ admits forward and backward representations: \mbox{$q(x_0, x_{\text{in}}, x_1)=q(x_0)\prod_{n=1}^{N+1} q(x_{t_n} | x_{t_{n-1}})=q(x_1)\prod_{n=1}^{N+1} q(x_{t_{n-1}} | x_{t_n})$}. Finally, $q(\cdot|\cdot)$ is used to denote \textit{conditional} ($x_0\!\rightarrow\!x_1$) and \textit{transition} ($x_{t_{n-1}}\!\rightarrow\!x_{t_{n}}$) distributions.

\vspace{-2mm}
\section{Background}
\label{section:background_problem_statement}
\vspace{-2mm}
This section recalls EOT and SB, which form the core framework for our benchmark and methods. We begin with the dynamic SB formulation and its connection to the static SB problem (\wasyparagraph\ref{subsection:background_sb_discrete}). We then recall different reference processes examples (\wasyparagraph\ref{subsection:formulation_reference_process}) that induce practical cost functions and thereby connect SB to the EOT framework (\wasyparagraph\ref{subsection:background_eot}). Finally, we introduce the discrete-space generative EOT/SB task and specify the benchmark evaluation criteria used in this work (\wasyparagraph\ref{subsection:background_problem_setting}).

\vspace{-2mm}
\subsection{Dynamic and Static Schr\"odinger Bridges on Discrete Spaces}
\label{subsection:background_sb_discrete}
\vspace{-1mm}
\textbf{Dynamic Schr\"odinger Bridge.} The original SB problem \citep{schrodinger1931umkehrung, schrodinger1932theorie, leonard2013survey} seeks to find a process $q^* \in \process(\X^{N+2})$ interpolating between an initial distribution $p_0$ at $t_0=0$ and a final distribution $p_1$ at $t_{N+1}=1$. This distribution is found by minimizing the Kullback-Leibler (KL) divergence with respect to a given \textit{Markov reference process} $q^{\text{ref}} \in \markov(\X^{N+2})$ subject to the marginal constraints $p_0(x_0)=q(x_0)$ and $p_1(x_1)=q(x_1)$. One finds the following \textit{optimal process}:

\vspace{-2mm}
\begin{equation}
    \label{equation:dynamic_sb}
    q^*=\argmin_{q \in \Pi_N(p_0, p_1)} \KL{q(x_0, x_{\text{in}}, x_1)}{q^{\text{ref}}(x_0, x_{\text{in}}, x_1)},
\end{equation}
\vspace{-2mm}

where $\Pi_N(p_0, p_1) \subset \process(\X^{N+2})$ denotes the subset of $\mathcal{X}$-valued stochastic processes which have $p_0$ and $p_1$ as marginals at times $t_0=0$ and $t_{N+1}=1$, respectively. In other words, the dynamic SB problem seeks the stochastic process $q^*$ that minimally deviates from a reference process $q^{\text{ref}}$ while respecting the boundary distributions $p_0$ and $p_1$. 

\textbf{Static Schr\"odinger Bridge.} The previously introduced dynamic SB problem also allows a static formulation. Linking them begins with observing that \eqref{equation:dynamic_sb} admits the following decomposition:

\vspace{-2mm}
\begin{equation}
    \label{equation:static_sb_decomp}
    \min_{q \in \Pi_N(p_0, p_1)}\Bigr[ \KL{q(x_0, x_1)}{q^{\text{ref}}(x_0, x_1)} + \mathbb{E}_{q(x_0,x_1)}\KL{q(x_{\text{in}}|x_0,x_1)}{q^{\text{ref}}(x_{\text{in}}|x_0,x_1)}\Bigr].
\end{equation}
\vspace{-2mm}

We further note that the conditional KL term in \eqref{equation:static_sb_decomp} vanishes when $q(x_{\text{in}}|x_0,x_1) = q^{\text{ref}}(x_{\text{in}}|x_0,x_1)$. Thus, we restrict $q$ to the set of processes that satisfy this condition. This set is known as \emph{the reciprocal class} of $q^{\text{ref}}$ and it is denoted by $\reciprocal(\X^{N+2}) \subset \process(\X^{N+2})$. Under this restriction, the optimization reduces to the first KL term alone, leading directly to the static SB problem:

\vspace{-2mm}
\begin{equation}
    q^*(x_0, x_1)=\argmin_{q \in \Pi(p_0, p_1)} \KL{q(x_0, x_1)}{q^{\text{ref}}(x_0, x_1)} 
    \label{equation:static_sb},
\end{equation}
\vspace{-2mm}

where $\Pi(p_0,p_1)\subset\process(\X^2)$ denotes the set of joint distributions with marginals $p_0$ and $p_1$, and $q^*(x_0,x_1)$ is \textit{the optimal joint distribution}.

Notably, the static SB formulation is closely related to an EOT problem. This connection is established since the reference process induces a corresponding cost function. To make this link explicit, in the next sections, we first introduce commonly used reference processes on discrete spaces and then derive the corresponding connection between static SB and EOT.

\vspace{-2mm}
\subsection{Examples of suitable Reference Processes }
\label{subsection:formulation_reference_process}
\vspace{-1mm}

The key ingredient in both SB formulations is the Markov reference process $q^{\text{ref}}$. In discrete space, it is typically modeled as a discrete-time Markov chain with strictly positive transitions, i.e., $q^{\text{ref}}(x_{t_n} | x_{t_{n-1}})\! > \!0$ for all $(x_{t_{n-1}}, x_{t_n})$. As in standard discrete diffusion models \citep{austin2021structured}, we focus on {factorizable reference processes} $q^{\text{ref}}(x_{t_n} | x_{t_{n-1}})\!=\!\prod^{D}_{d=1}q^{\text{ref}}(x_{t_{n}}^{d} | x_{t_{n-1}}^{d})$ and thus present the construction in the one-dimensional case. We further assume time-homogeneity ($q^{\text{ref}}(x^d_{t_n} | x^d_{t_{n-1}})\!=\!Q^{\text{ref}} \in [0,1]^{S\times S}$ for all $n$), so that the cumulative transition distributions for $n$-steps are defined by the matrix power ${\overline{Q}_{n}^{\text{ref}}\!=\![Q^{\text{ref}}]^{n}}$.

\vspace{-1mm}
\textbf{Remark.} The reference process $q^{\text{ref}}$ can also be defined in continuous time, where transitions are specified by transition rates (see, e.g., \citep{campbell2022continuous}). In this setting, controlling the dynamics is often less direct; moreover, discrete-time Markov chains form a strictly larger class, since not every chain admits a continuous-time analogue (the embeddability problem \citep{kingman1962imbedding}). For the convenience of benchmark construction, we therefore focus on the discrete-time setting.

\vspace{-1mm}
We now introduce two popular diffusion-like transitions: uniform \citep{hoogeboom2021argmax, campbell2022continuous} and Gaussian-like \citep{austin2021structured}.

\vspace{-2mm}
\paragraph{Uniform Reference Process ($q^{\text{unif}}$).} For unordered data, where no relation exists between categories, a natural choice is the so-called uniform transition matrix. In this case, for each dimension $d$, the elements of the transition matrix $Q^{\text{ref}}$ are defined by

\vspace{-3mm}
\begin{align}
    [Q^{\text{ref}}]_{x^d_{t_{n-1}},x^d_{t_n}}=\begin{cases}
      1-\gamma, & \text{if } x^d_{t_n}=x^d_{t_{n-1}}, \\
      \frac{\gamma}{S-1}, & \text{if } x_{t_n}^d\neq x_{t_{n-1}}^d,
    \end{cases}
    \label{equation:piref_uniform}
\end{align}
\vspace{-2mm}

where $\gamma\in [0,1]$ is a stochasticity parameter. This reference process assigns equal probability to transitioning into any different category, thereby ignoring any inherent ordering among categories. In Appendix~\ref{appendix:static_q_ref}, we provide \underline{a closed-form expression} for $q^{\text{ref}}(x_1^d | x_0^d)=\overline{Q}^{\text{ref}}_{N+1}$ in the uniform case.

\vspace{-2mm}
\paragraph{Gaussian Reference Process ($q^{\text{gauss}}$).} For ordered data, where categories are expected to exhibit meaningful relations, a Gaussian-like transition matrix is more appropriate. With the stochasticity parameter $\gamma > 0$ and the maximum category distance $\Delta = S-1$, the transition probabilities are  

\vspace{-2mm}
\begin{equation}
    [Q^{\text{ref}}]_{x^d_{t_{n-1}}, x^d_{t_n}} =
    \frac{\exp\!\left(-\frac{4(x^d_{t_n}-x^d_{t_{n-1}})^2}{(\gamma \Delta)^2}\right)}
    {\sum\limits_{\delta=-\Delta}^{\Delta} \exp\!\left(-\frac{4\delta^2}{(\gamma \Delta)^2}\right)},
    \qquad x^d_{t_n} \neq x^d_{t_{n-1}}.
\end{equation}

The diagonal entries take the remaining probability so that each row sums to 1.


\vspace{-2mm}
\subsection{Entropic Optimal Transport on Discrete Spaces}
\label{subsection:background_eot}
\vspace{-1mm}
Following the construction of the Markov reference process, the static SB problem (\S\ref{equation:static_sb}) takes a form equivalent to the EOT problem \citep{cuturi2013sinkhorn}. Concretely, expressing $q^{\text{ref}}(x_0,x_1)=q^{\text{ref}}(x_0)q^{\text{ref}}(x_1|x_0)$ and setting $q^{\text{ref}}(x_0)=p_0(x_0)$ lets us rewrite the minimization in \eqref{equation:static_sb} as
\begin{align}
    \label{equation:static_to_eot}
    \min_{q \in \Pi(p_0, p_1)}&\KL{q(x_0, x_1)}{q^{\text{ref}}(x_0, x_1)} =\notag\\[-2mm]
    &= \min_{q \in \Pi(p_0, p_1)} \sum_{x_0, x_1} q(x_0, x_1)
       \log \frac{q(x_0,x_1)}{q^{\text{ref}}(x_0)q^{\text{ref}}(x_1|x_0)} \notag\\
    &= \min_{q \in \Pi(p_0, p_1)} -H(q)
       - \sum_{x_0, x_1} q(x_0, x_1)\log q^{\text{ref}}(x_1| x_0)
       - \!\!\!\underbrace{\sum_{x_0, x_1} q(x_0, x_1)\log q^{\text{ref}}(x_0)}_{=-H(p_0)=\text{const}} \\[-2mm]
    &= \min_{q \in \Pi(p_0, p_1)} \mathbb{E}_{q(x_0,x_1)}\!\bigl[-\log q^{\text{ref}}(x_1|x_0)\bigr]
       - H(q) - \text{const} \notag\\
    &= \min_{q \in \Pi(p_0, p_1)} \mathbb{E}_{(x_0,x_1)\sim q}\!\bigl[c(x_0, x_1)\bigr]
       - H(q) - \text{const}, \notag
\end{align}
where $H(q)$ is the entropy of $q$, while $H(p_0)$ remains constant when minimizing over $q$. Thus, the static SB formulation becomes equivalent to the entropy-regularized optimal transport problem with cost $c(x_0,x_1) = -\log q^{\text{ref}}(x_1|x_0)$. This establishes a direct correspondence between both SB formulations and EOT, which we collectively referred to as the EOT/SB problem, enabling the construction of a unified benchmark for all three problems.

\vspace{-2mm}
\subsection{Discrete Generative Problem Setup and Evaluation Protocol}
\label{subsection:background_problem_setting}
\vspace{-1mm}
Building on this background, we formalize the \emph{generative discrete-space EOT/SB task}. This is a well-established problem in the literature \citep{kim2024discrete,ksenofontov2025categorical}. In short, the goal is to learn an optimal conditional distribution that transports a probability distribution on discrete spaces using available empirical samples. Formally, we consider the following setup:
\begin{tcolorbox}[colback=gray!10, 
    colframe=black, 
    sharp corners, 
    boxrule=0.8pt,
    boxsep=2pt,       
    left=4pt,         
    right=4pt,        
    top=4pt,          
    bottom=4pt        
]
    We assume the learner is given empirical datasets $\{x_0^{(i)}\}_{i \in I_0}\!$ and $\{x_1^{(j)}\}_{j \in I_1}$, $x_0^{(i)}, x_1^{(j)}\in \X$, consisting of i.i.d. samples from the unknown distributions $p_0, p_1 \in \process(\X)$ where $\X$ is a discrete state space. Then, the task is to use these samples to find a solution $q^*$ to the EOT/SB problem (\ref{equation:dynamic_sb}, \ref{equation:static_sb}, \ref{equation:static_to_eot}) between $p_0$ and $p_1$ for a given reference $q^{\text{ref}}$. Moreover, the solution should support out-of-sample generation so that for any new $(x_0^{\text{new}})$ one can generate $x_1^{\text{new}} \!\sim \!q^{\text{model}}(x_1|x_0^{\text{new}})$.
\end{tcolorbox}

Despite recent progress in developing discrete-space SB methods for this task, there is still no standard evaluation protocol. The main obstacle is the lack of discrete datasets with known ground-truth EOT/SB solutions, i.e., pairs of marginals $(p_0,p_1)$ for which the optimal conditional $q^*(x_1 | x_0)$ is available. Access to such ground truth enables direct comparison with a learned model $q^{\text{model}}(x_1 | x_0)$, allowing evaluation of how accurately a method solves the underlying EOT/SB problem rather than relying on proxy metrics. Inspired by \citep{gushchin2023building}, we therefore propose a pipeline that generates ground-truth benchmark instances, applicable to discrete-space solvers.



\textbf{Remark.} Our paper is not related to the discrete EOT, which includes solvers such as the Sinkhorn algorithm \citep{cuturi2013sinkhorn} or gradient-based methods \citep{dvurechensky2018computational}. These approaches are designed for a non-generative problem setting, see \citep[\wasyparagraph2.3]{ksenofontov2025categorical}. They treat samples as empirical distributions $p_0(x_0)\!=\!\tfrac{1}{|I_0|}\!\sum_{i \in I_0} \delta_{x_0^{(i)}}$, $p_1(x_1)\!=\!\tfrac{1}{|I_1|}\!\sum_{j \in I_1} \delta_{x_1^{(j)}}$. The resulting joint distribution is then a bi-stochastic $|I_0| \times |I_1|$ matrix, which does not support out-of-sample generation. While some extensions attempt to provide inference for unseen data \citep{hutter2021minimax,pooladian2021entropic,manole2021plugin,deb2021rates}, they are designed for continuous spaces ($\X\!=\!\mathbb{R}^D$) rather than the discrete ($\X\!=\!\Sd^D$) considered in our work.

\vspace{-2mm}
\section{Benchmark}
\label{section:benchmark}
\vspace{-2mm}
In this section, we address the absence of evaluation benchmarks for discrete-space EOT/SB solvers by proposing a novel benchmark construction. In \wasyparagraph\ref{subsection:benchmark_main_theorem} we present the theoretical foundations of the construction. Next, we make it tractable via a CP parameterization in \wasyparagraph\ref{subsection:benchmark_parameterization}. Finally, we use this parameterization to construct a high-dimensional Gaussian mixture benchmark in \wasyparagraph\ref{subsection:benchmark_pairs_construction}. We provide \underline{detailed proofs} for all stated theoretical results in Appendix~\ref{appendix:proofs}.

\vspace{-2mm}
\subsection{Main Theorem for Benchmark Construction}
\label{subsection:benchmark_main_theorem}
\vspace{-1mm}
For an initial distribution $p_0 \in \process(\X)$, we aim to construct a target distribution $p_1 \in \process(\X)$ such that the optimal joint distribution $q^{*}(x_0,x_1)$ between them is known by our construction. The resulting pair $(p_0, p_1)$ together with $q^*$ can then be used as benchmark data for evaluating SB methods. Our following theorem plays the key role in the construction of benchmark pairs. 
\begin{theorem}[Benchmark Pair Construction for Discrete-Space EOT/SB]
    Let $p_0\in \mathcal{P}(\X)$ be a given initial distribution on a discrete space $\X$ and $v^*:\X \rightarrow \mathbb{R}$ be a given scalar-valued function. Consider a joint distribution $q^* \in \mathcal{P}(\X^{2})$ such that $q^*(x_0) = p_0(x_0)$ and \mbox{$q^*(x_1| x_0) \propto v^*(x_1)q^{\textup{ref}}(x_1| x_0)$} define $p_1(x_1) \coloneqq q^*(x_1)$ as its second marginal. Then $q^*$ together with the reference process $q^{\textup{ref}}$ defines the discrete-space EOT/SB (\ref{equation:dynamic_sb},\ref{equation:static_sb},\ref{equation:static_to_eot}) between $p_0$ and $p_1$.
    \label{theorem:benchmark_construction}
\end{theorem}

\vspace{-1mm}
Similar results in continuous spaces appear in \citep{gushchin2023building}. In Theorem~\ref{theorem:benchmark_construction} we provide a discrete-space analog which shows that any pair $(p_0, v^{*})$ induces a corresponding pair $(p_0, p_1)$ with a closed-form $q^*(x_1|x_0)$ on \textbf{discrete space}. We refer to the latter pair as a \textit{benchmark pair}. However, this construction specifies $q^{*}(x_1 | x_0)$ only up to proportionality, necessitating the normalized form:

\vspace{-2mm}
\begin{equation}
    q^*(x_1| x_0)=\tfrac{1}{c^*(x_0)}v^*(x_1) q^{\text{ref}}(x_1| x_0),
    \label{equation:normalized_conditional}
\end{equation}
\vspace{-2mm}

where $c^*(x_0) \coloneqq \sum_{x_1 \in \X}v^*(x_1)q^{\text{ref}}(x_1| x_0)$ is the normalization constant. Although this provides a closed-form expression for $q^*$, implementing it in high-dimensional spaces ($|\mathcal{X}| = S^D$) remains computationally challenging. In particular, evaluating the normalizing constant and sampling from $q^*$ are non-trivial tasks. To address these challenges, we introduce a CP-parameterization that allows efficient computation and sampling, as detailed in the next section.

\vspace{-2mm}
\subsection{Practical Parameterization}
\vspace{-1mm}
\label{subsection:benchmark_parameterization}
We parameterize the scalar-valued function $v^*$ using a rank-1 Canonical Polyadic (CP) decomposition, which captures interactions across dimensions and provides a compact yet expressive representation. Such decompositions act as universal approximators, capable of modeling complex functions when the rank is sufficiently large \citep{cohen2016expressivepowerdeeplearning,basharin2025fasterlanguagemodelsbetter}. Thus, $v^*$ is written as
\begin{equation}
    v^{*}(x_1) \;=\; \sum_{k=1}^K \beta_k \prod_{d=1}^D r_k^d[x_1^d].
    \label{equation:cp_decomposition}
\end{equation}
Expression \eqref{equation:cp_decomposition} defines a mixture of $K$ \emph{factorizable distributions}, each with weight $\beta_k \ge 0$. For each mixture component $k$ and dimension $d$, probabilities are defined by non-negative vectors $r_k^d \in \mathbb{R}^S_+$, referred to as \emph{CP cores}, where $r_k^d[x_1^d]$ denotes the probability of state $x_1^d$. 

\begin{proposition}[Tractable Parameterization of Conditional Distributions]
    \label{prop:tractable_parameterization}
    Let $q^{\textup{ref}}$ be a factorizable Markov reference process on a discrete space $\X$. Using the CP decomposition of the scalar-valued function $v^*$ in \eqref{equation:cp_decomposition}, the optimal conditional distribution satisfies:
    \begin{minipage}{0.48\linewidth} 
        \vspace{-2mm}
        \begin{gather} 
            q^*(x_1|x_0) = \notag\\[-1mm] \frac{1}{c^*(x_0)} \sum_{k=1}^{K} \beta_k \prod_{d=1}^{D} \Big[r_k^d[x_1^d]q^{\textup{ref}}(x_1^d| x^d_0)\Big]; 
            \label{equation:conditional_closed_form} 
        \end{gather} 
    \end{minipage} 
    \hfill 
    \begin{minipage}{0.48\linewidth} 
        \vspace{-2mm}
        \begin{gather} 
            c^*(x_0) = \notag\\[-1mm] \sum_{k=1}^{K} \beta_k \prod_{d=1}^{D} \left(\sum_{x_1^d=0}^{S-1} r_k^d[x_1^d] q^{\textup{ref}}(x_1^d|x_0^d)\right) 
            \label{equation:normalizing_constant} 
        \end{gather} 
    \end{minipage}
    
    where $c^*(x_0)$ is the normalization constant. This formulation expresses $q^*(x_1|x_0)$ as a mixture of $K$ factorizable distributions, each weighted by a scalar coefficient $\beta_k$.
\end{proposition}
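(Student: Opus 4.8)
The plan is to treat the proposition as a purely algebraic simplification of an expression whose optimality is already settled. By Theorem~\ref{theorem:benchmark_construction} and \eqref{equation:normalized_conditional}, the optimal conditional is the normalized product $q^*(x_1|x_0)=\tfrac{1}{c(x_0)}\,v(x_1)\,q^{\text{ref}}(x_1|x_0)$ with normalizer $c(x_0)=\sum_{x_1\in\X}v(x_1)\,q^{\text{ref}}(x_1|x_0)$, so nothing about the SB needs re-deriving here. The entire argument reduces to substituting the rank-$K$ CP form $v(x_1)=\sum_{k=1}^{K}\beta_k\prod_{d=1}^{D}r_k^d[x_1^d]$ and the factorized reference $q^{\text{ref}}(x_1|x_0)=\prod_{d=1}^{D}q^{\text{ref}}(x_1^d|x_0)$ into these two formulas and exploiting multilinearity (the distributive law). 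I would organize it in three short steps: simplify the numerator, simplify the normalizer, then assemble.

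For the numerator I would plug both factorizations into $v(x_1)q^{\text{ref}}(x_1|x_0)$. Since the reference product over $d$ does not depend on the mixture index $k$, I can move it inside the sum over $k$ and merge the two $d$-indexed products coordinatewise, obtaining
\[
v(x_1)\,q^{\text{ref}}(x_1|x_0)=\sum_{k=1}^{K}\beta_k\prod_{d=1}^{D}\big[\,r_k^d[x_1^d]\,q^{\text{ref}}(x_1^d|x_0)\,\big],
\]
which is precisely the numerator appearing in \eqref{equation:conditional_closed_form}. For the normalizer I would write $\sum_{x_1\in\X}$ explicitly as the multi-index sum $\sum_{x_1^1=0}^{S-1}\!\cdots\!\sum_{x_1^D=0}^{S-1}$ over $\X=\Sd^D$, interchange the finite sum over $k$ with this multi-index sum, and then apply the product–sum factorization
\[
\sum_{x^1=0}^{S-1}\!\cdots\!\sum_{x^D=0}^{S-1}\prod_{d=1}^{D}f_d(x^d)=\prod_{d=1}^{D}\left(\sum_{x^d=0}^{S-1}f_d(x^d)\right)
\]
with $f_d(x_1^d)=r_k^d[x_1^d]\,q^{\text{ref}}(x_1^d|x_0)$, which yields \eqref{equation:normalizing_constant}.

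Dividing the numerator by $c(x_0)$ then gives \eqref{equation:conditional_closed_form}, and non-negativity of the weights $\beta_k$ and cores $r_k^d$ shows that each summand is an (unnormalized) factorizable distribution, so $q^*(\cdot|x_0)$ is indeed a mixture of $K$ factorizable distributions, which is exactly what permits dimension-wise ancestral sampling. Since $K$, $D$, and $S$ are all finite, there is no genuine analytical obstacle in this proof; the only point that warrants care is justifying the product–sum factorization and the interchange of the sum over components with the sum over states, but both are immediate consequences of the distributive law for finite sums and products.
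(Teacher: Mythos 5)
Your proposal is correct and follows essentially the same route as the paper's own proof: merging the $k$-independent reference factors into the CP cores coordinatewise, then interchanging the sum over components with the multi-index sum over $\X$ and applying the product--sum (separability) factorization to obtain $c(x_0)$. The only addition beyond the paper's argument is your explicit remark on non-negativity of $\beta_k$ and $r_k^d$ justifying the mixture interpretation, which is a harmless strengthening rather than a different method.
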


A key consequence of Proposition \ref{prop:tractable_parameterization} is that $c^*(x_0)$ and $q^*(x_1 | x_0)$ are computationally tractable, since both expressions factorize into products of one-dimensional sums. In particular, instead of summing over the full joint space of size $S^D$, computing $c^*(x_0)$ requires only $K$ products of $D$ scalar sums over $S$ states, reducing the complexity from $\mathcal{O}(S^D)$ to $\mathcal{O}(KDS)$. The evaluation of $q^*(x_1|x_0)$ is similar. In practice, computations are performed using log-sum-exp operations for numerical stability. To sample from $q^*(x_1|x_0)$, we use ancestral sampling: we first choose a mixture component $k$ with probability proportional to $\beta_k \prod_{d=1}^{D} \sum_{x_1^d} r_k^d[x_1^d]\, q^{\textup{ref}}(x_1^d \mid x_0^d)$, and then, conditioned on $k$, sample each coordinate $x_1^d$ independently from $q^*(x_1^d \mid x_0^d) \propto r_k^d[x_1^d]\, q^{\textup{ref}}(x_1^d \mid x_0^d)$.

Previously, we described the benchmark construction in the static SB (equivalently, EOT) setting, with focus on $q^*(x_1 | x_0)$. Nevertheless, a similar result can be obtained for the dynamic SB and its forward Markov representation defined by transition distributions $q^*(x_{t_n} | x_{t_{n-1}})$. More precisely, these transitions can be obtained by reweighting the reference process transitions $q^{\text{ref}}(x_{t_n} | x_{t_{n-1}})$ with time-dependent scalar-valued functions $\phi^*_t$ \citep[Thm.~2]{georgiou2015positive}:

\vspace{-5mm}

\begin{equation} 
    \label{equation:normalized_transitional} 
    q^*(x_{t_{n}}| x_{t_{n-1}}) = q^{\text{ref}}\left(x_{t_{n}}| x_{t_{n-1}}\right)\frac{\phi^{*}_{t_n}(x_{t_{n}})}{\phi^{*}_{t_{n-1}}(x_{t_{n-1}})}, \qquad \phi^{*}_{t_{n}}(x_{t_n}) =\mathbb{E}_{q^{\mathrm{ref}}(x_1| x_{t_n})} \big[v^{*}(x_1)\big]. 
\end{equation}

\vspace{-3mm}

The next proposition provides a tractable form of the corresponding transition distributions.

\begin{proposition}[Tractable Parameterization of Conditional SB Transition Distributions]
    \label{proposition:sb_trainsition_distribution}
    Let $q^{\textup{ref}}$ be a factorizable Markov reference process on a discrete space $\X$. Using the CP decomposition of the scalar-valued function $v^*$ in \eqref{equation:cp_decomposition} together with the definition of the time-dependent functions $\phi_{t_n}^*$ in \eqref{equation:normalized_transitional}, the optimal transition distributions satisfy:
    \vspace{-2mm}
    \begin{equation}
        \label{equation:transition_closed_form}
        q^{*}(x_{t_{n}} | x_{t_{n-1}}) \propto  q^{\textup{ref}}(x_{t_{n}} | x_{t_{n-1}}) \sum_{k=1}^K \beta_k \prod_{d=1}^D u_{k,t_{n}}^d[x_{t_{n}}^d].
        \vspace{-2mm}
    \end{equation}
    where $u^d_{k, t_{n}}\left[x^d_{t_{n}}\right] = \sum^{S-1}_{x^d_1=0}q^{\textup{ref}}(x_1^d | x^d_{t_n})r^d_k\left[x^d_1\right]$. Sampling is done via ancestral sampling.
\end{proposition}

\vspace{-2mm}

By the same argument as for $q^*(x_1 | x_0)$, the transition distributions $q^*(x_{t_n} | x_{t_{n-1}})$ are tractable. In practice, normalization of \eqref{equation:transition_closed_form} is carried out in the log-domain by subtracting the log-sum-exp over all states to obtain numerically stable transition probabilities.

\vspace{-2mm}
\subsection{High-dimensional Gaussian Mixtures Benchmark Construction}
\label{subsection:benchmark_pairs_construction}
\vspace{-1mm}
We now instantiate the proposed construction. We set $p_0$ as a discretized Gaussian on $D\!\in\!\{2,16,64\}$ dimensions with $S\!=\!50$ categories. For $v^*$, we use $K\!=\!4$ components with uniformly initialized weights $\beta \!\in\! \mathbb{R}^K$. The CP cores are initialized using discretized Gaussian probabilities, with means uniformly sampled on a sphere of radius 5 and a standard deviation fixed to $\{1.5,1.5,2.5\}$ for the lowest to highest dimensions, respectively. Given $p_0$ and $v^*$, we then construct $p_1$. This initialization produces a target $p_1$ resembling a discretized Gaussian mixture with a clear visual structure. We construct pairs under different reference processes: $q^{\text{gauss}}$ with $\gamma \in \{0.02, 0.05\}$ and $q^{\text{unif}}$ with $\gamma \in \{0.005, 0.01\}$, using $N+1=128$ for both. Figure \ref{subfigure:benchmark pairs} shows the resulting benchmark pairs.

\vspace{-2mm}
\section{Solvers for Evaluation}
\label{section:solvers_for_evaluation}
\vspace{-2mm}
In this section, we recall available discrete EOT/SB solvers as well as addressing the limited availability of such approaches. We begin by recalling \emph{Categorical Schr\"odinger Bridge Matching (CSBM)} \citep{ksenofontov2025categorical}, which is an existing SB method tailored to categorical distributions. We then introduce \emph{$\alpha$-CSBM}, which incorporates the online update strategy of \citep{de2024schr} into the CSBM framework. Next, we propose \emph{Discrete Light Schr\"odinger Bridge (DLightSB)} obtained as a byproduct of our benchmark construction (\wasyparagraph\ref{section:benchmark}) and extending \citep{korotin2024light} to the discrete setting. Finally, we present \emph{Discrete Light Schr\"odinger Bridge Matching (DLightSB-M)}, a dynamic extension of DLightSB following \citep{gushchin2024light}. 

\vspace{-2mm}
\subsection{Categorical Schr\"odinger Bridge Matching (CSBM)}
\label{section:csbm}
\vspace{-1mm}
In \citep[Theorem 3.1]{ksenofontov2025categorical}, the discrete-space dynamic SB problem is addressed by the \emph{discrete-time Iterative Markovian Fitting (D-IMF) procedure}, whose convergence is established by extending the discrete-time existence theorem of (\citealp[Theorem 3.6]{gushchin2024adversarial}) to the discrete space and time setting. The exponential convergence rates of D-IMF can be found in \citep{sokolov2025exponentialconvergencerateiterative}. This constructive method uses the fact that the dynamic SB $q^*$ is both reciprocal ($q^* \in \reciprocal(\X^{N+2})$) and Markov ($q^* \in \markov(\X^{N+2})$). The D-IMF algorithm alternates between projections onto these two sets, starting from an initial process $q^0(x_0, x_1)q^{\text{ref}}(x_{\text{in}}|x_0,x_1)$, where $q^0(x_0, x_1)\in\Pi(p_0,p_1)$, e.g., $p_0(x_0)p_1(x_1)$, and converges to the SB $q^{*}$ in $\text{KL}$. Namely, 
\vspace{-1mm}
\begin{equation*}
    \begin{tikzpicture}[>=Stealth, baseline=(A.base)]
      \node (A) at (0,0) {$q^{2l (+2)}$};
      \node (B) at (6,0) {$q^{2l+1}$};
      \draw[->, transform canvas={yshift=0.7ex}, shorten >=2pt, shorten <=2pt] (A) to node[above] {$\proj_{\markov}$} (B);
      \draw[->, transform canvas={yshift=-0.7ex}, shorten >=2pt, shorten <=2pt] (B) to node[below] {$\proj_{\reciprocal}$} (A);
      \node[overlay, anchor=west] at ($(B.east)+(0.8em,0)$) {$l=0,1,\dots$};
    \end{tikzpicture}
    \vspace{-1mm}
\end{equation*}
where 
\begin{gather}
    \label{equation:reciprocal_projection}
    [\proj_{\reciprocal}(q)](x_0, x_{\text{in}}, x_1) 
    = q(x_0, x_1) q^{\text{ref}}(x_\text{in} | x_0, x_1), 
    \quad \forall q \in \process(\X^{N+2}), \\
    \label{equation:markovian_projection}
    \!\!\![\proj_{\markov}(q)](x_0, x_{\text{in}}, x_1) 
    = \!\!\!\!\argmin_{m \in \markov(\X^{N+2})} 
       \!\!\!\!\KL{q(x_0, x_{\text{in}}, x_1)}{m(x_0, x_{\text{in}}, x_1)}, 
       \quad \!\!\forall q \in \reciprocal(\X^{N+2}).
\end{gather}

\vspace{-2mm}
\textbf{Loss.} Fortunately, the reciprocal part \eqref{equation:reciprocal_projection} is straightforward via ancestral sampling. In turn, to make Markovian step \eqref{equation:markovian_projection} tractable, the authors parameterize the transitions of $m$, writing $m = q_\theta$, and minimize the following objective, defined up to an additive constant independent of $\theta$:
\vspace{-2mm}
\begin{multline}
    \label{equation:csbm_loss_forward}
    \mathcal{L}(\theta) = \mathbb{E}_{q(x_0,x_1)}\Bigg[\sum_{n=1}^{N}\mathbb{E}_{q^{\textup{ref}}(x_{t_{n-1}} | x_0, x_1)}    
    \Big[\textup{KL}\!\left(q^{\textup{ref}}(x_{t_{n}} | x_{t_{n-1}},x_1) \,\|\, q_\theta(x_{t_{n}} | x_{t_{n-1}})\right)\Big] 
    - \\[-2mm] - \mathbb{E}_{q^{\textup{ref}}(x_{t_N} | x_0, x_1)}\!\left[\log q_\theta(x_1 | x_{t_N})\right]\Bigg].
\end{multline}

\vspace{-3mm}
In practice, the D-IMF procedure is implemented in a bidirectional manner (see \citep[\S 3.2.5]{ksenofontov2025categorical}). That is, the forward and backward models are trained alternately at each Markovian step. Notably, the KL loss can be replaced by any divergence from the Bregman family (e.g., the mean squared error (MSE)), introducing an additional hyperparameter in our experimental setup. For details on this equivalence, see \citep[App.~C.1]{ksenofontov2025categorical}.

\textbf{Remark.} A continuous-time IMF was introduced in the Discrete Diffusion Schr\"odinger Bridge Matching \citep[DDSBM]{kim2024discrete} paper, which performs the Markovian projection \eqref{equation:markovian_projection} by matching the generator matrices of continuous-time Markov chains. As it reduces to the same loss and inference process due to the necessity to discretize time, we report results only for CSBM.

\vspace{-2mm}
\subsection{$\alpha$-Categorical Schr\"odinger Bridge Matching ($\alpha$-CSBM)}
\label{section:alpha_csbm}
\vspace{-1mm}
Running the IMF procedure bidirectionally is often beneficial, see \citep{kholkin2026diffusion}, but it doubles the computational burden, since two neural networks must be trained to model the forward and backward representations. To slightly mitigate this cost, recent work has proposed an online alternative to IMF, called $\alpha$-IMF \citep{de2024schr, peluchetti2025bm}.

In this approach, the exact projections in \eqref{equation:reciprocal_projection} and \eqref{equation:markovian_projection} are replaced by partial updates \citep[Eq.~9]{de2024schr}. Concretely, rather than running each projection to full convergence, one performs a single optimization step at each iteration $l$. Although each update is incomplete, the alternating steps still steer the learned distribution toward the double projection $\proj_{\reciprocal}(\proj_{\markov}(\cdot))$, as in IMF. Inspired by these advances in the continuous setting, we extend the same ideas to CSBM (\secref{section:csbm}), viewing the discrete version of $\alpha$-IMF as a heuristic analogue of the original procedure.

\textbf{Loss.} Since the method does not require each projection to fully converge, we can take a single optimization step for both representation directions simultaneously. This allows us to extend the CSBM bidirectional setup (\secref{section:csbm}) by jointly updating models using objective \eqref{equation:csbm_loss_forward} in both directions:

\vspace{-2mm}
\begin{equation}
    \mathcal{L}(\theta) = \tfrac{1}{2} \Big( 
        \KL{\overrightarrow{r_\mathrm{sg}}(x_0, x_{\text{in}}, x_1)}{\overleftarrow{q_\theta}(x_0, x_{\text{in}}, x_1)} + \KL{\overleftarrow{r_\mathrm{sg}}(x_0, x_{\text{in}}, x_1)}{\overrightarrow{q_\theta}(x_0, x_{\text{in}}, x_1)} 
    \Big),
\end{equation}

where $\rightarrow$ and $\leftarrow$ denote the direction of the Markov representation (typically implemented by conditioning a neural network on a direction variable), and $r_\mathrm{sg}$ denotes $\proj_{\reciprocal}(q_\theta)$ with stop-gradient.

\textbf{Limitation.} To avoid evaluating the full space of size $S^D$, transition probabilities $q_\theta(x_{t_n} | x_{t_{n-1}})$ are factorized across dimensions, reducing it to $D\times S$. However, this parametrization constitutes a key limitation of ($\alpha$-)CSBM, as it introduces approximation error.

\vspace{-2mm}
\subsection{Discrete Light Schr\"odinger Bridge (DLightSB)}
\label{section:dlight_sb}
\vspace{-1mm}
Now we take a different direction and propose new solvers for discrete-space EOT/SB problems. Specifically, we focus on methods that arise directly as a byproduct of our benchmark construction. We begin by introducing a new static SB solver, which we entitle DLightSB. In what follows, we adopt the same CP benchmark parameterization (\wasyparagraph\ref{subsection:benchmark_parameterization}) for the conditional distribution $q(x_1 | x_0)$. In particular, we treat the weights $\beta_k$ and the CP cores $r_k^d$ as learnable parameters and collect them in $\theta=\{\beta_k, r_k^d\}$, yielding the model $q_\theta(x_1 | x_0)$ with scalar-valued function $v_\theta(x_1)$.

\textbf{Loss.} To optimize the parameters $\theta$, we follow the approach of \citet{korotin2024light}. Concretely, we derive a discrete objective that is equivalent to the direct KL objective $\KL{q^*}{q_{\theta}}$ up to an additive constant. Importantly, this reformulation removes the dependence on the unknown optimal joint distribution $q^*$ and enables a feasible optimization procedure, as shown in the following proposition.
\begin{proposition}[Feasible Discrete Reformulation of the Direct KL Objective]
Under the parametrization \eqref{equation:conditional_closed_form} of $q_{\theta}(x_1 | x_0)$, the objective $\KL{q^*}{q_\theta}$ admits the following reformulation:

\vspace{-2mm}
    \begin{equation*}
        \KL{q^{*}}{q_{\theta}}=\mathcal{L}(\theta)-\mathcal{L}^*,\ \text{where}
    \end{equation*}
    \begin{equation}
        \mathcal{L}(\theta)=\mathbb{E}_{p_0(x_0)}\big[\log c_{\theta}(x_0)\big] -\mathbb{E}_{p_1(x_1)}\big[\log v_{\theta}(x_1)\big],
        \label{equation:dlight_sb_loss}
    \end{equation}
    
    and $\mathcal{L^*}\in \mathbb{R}$ is a constant value not depending on $\theta$, therefore, it can be omitted.
    \label{proposition:dlight_sb_loss}
\end{proposition}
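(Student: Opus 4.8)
The plan is to expand the KL divergence between the joint couplings $q^{*}$ and $q_{\theta}$ directly and show that, after substituting the two parametric forms, the $\theta$-dependent part collapses into $\mathcal{L}(\theta)$ while everything else is an additive constant. First I would use that both couplings are built on the same source marginal, $q^{*}(x_0)=q_{\theta}(x_0)=p_0(x_0)$, so that the source-marginal contribution cancels in the log-ratio and only the conditional ratio survives:
\begin{equation*}
    \KL{q^{*}}{q_{\theta}}=\sum_{x_0,x_1}q^{*}(x_0,x_1)\log\frac{q^{*}(x_1|x_0)}{q_{\theta}(x_1|x_0)}.
\end{equation*}

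Next I would substitute the characterization \eqref{equation:normalized_conditional} for $q^{*}$ and the analogous model form $q_{\theta}(x_1|x_0)=\tfrac{1}{c_{\theta}(x_0)}v_{\theta}(x_1)q^{\text{ref}}(x_1|x_0)$. The reference factor $q^{\text{ref}}(x_1|x_0)$ is identical in numerator and denominator and cancels, leaving
\begin{equation*}
    \log\frac{q^{*}(x_1|x_0)}{q_{\theta}(x_1|x_0)}=\log c_{\theta}(x_0)-\log c^{*}(x_0)+\log v^{*}(x_1)-\log v_{\theta}(x_1).
\end{equation*}
Taking the expectation under $q^{*}(x_0,x_1)$ and splitting these four terms is the crux of the argument. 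Since $c_{\theta}$ and $c^{*}$ depend on $x_0$ alone and the first marginal of $q^{*}$ is $p_0$, their expectations reduce to sums against $p_0$; since $v_{\theta}$ and $v^{*}$ depend on $x_1$ alone and the \emph{second} marginal of $q^{*}$ is $p_1$ (by Theorem~\ref{theorem:benchmark_construction}), their expectations reduce to sums against $p_1$. Collecting the two $\theta$-dependent terms yields exactly $\mathcal{L}(\theta)=\sum_{x_0}p_0(x_0)\log c_{\theta}(x_0)-\sum_{x_1}p_1(x_1)\log v_{\theta}(x_1)$, while the remaining terms define the $\theta$-independent constant $\mathcal{L}^{*}=\sum_{x_0}p_0(x_0)\log c^{*}(x_0)-\sum_{x_1}p_1(x_1)\log v^{*}(x_1)$.

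The main obstacle is conceptual rather than computational: I would have to recognize that the cross term $\mathbb{E}_{q^{*}(x_0,x_1)}[\log v_{\theta}(x_1)]$ rewrites as $\sum_{x_1}p_1(x_1)\log v_{\theta}(x_1)$, depending only on the accessible target marginal $p_1$ and not on the joint $q^{*}$. This is precisely what makes $\mathcal{L}(\theta)$ a \emph{feasible} surrogate: although $\KL{q^{*}}{q_{\theta}}$ nominally involves the unknown optimal coupling, its gradient-relevant part is expressible through the boundary marginals $p_0,p_1$ alone, mirroring the continuous LightSB reduction of \citep{korotin2024light}. A minor bookkeeping point to verify is that $\mathcal{L}^{*}$ is finite and independent of $\theta$, which follows immediately since it is built only from the fixed $v^{*},c^{*}$ and the fixed marginals, so it can be dropped from the optimization.
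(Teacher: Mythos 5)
Your proof is correct, and it reaches the stated identity by a genuinely (if modestly) different decomposition than the paper's. The paper expands $\KL{q^{*}}{q_{\theta}}$ into entropy plus cross-entropy, substitutes the parametric form \emph{only} for the model $q_{\theta}$, and absorbs $-H(q^{*})-\sum_{x_0}p_0(x_0)\log p_0(x_0)-\mathbb{E}_{q^{*}}[\log q^{\text{ref}}(x_1|x_0)]$ into the constant $-\mathcal{L}^{*}$; that argument works for an arbitrary coupling $q^{*}\in\Pi(p_0,p_1)$ and uses nothing about $q^{*}$ beyond its two marginals. You instead cancel the shared source marginal via the KL chain rule, substitute the characterization \eqref{equation:normalized_conditional} into \emph{both} $q^{*}$ and $q_{\theta}$, and let the reference kernel cancel in the log-ratio; this requires the additional fact that the SB conditional itself has the form $v^{*}(x_1)q^{\text{ref}}(x_1|x_0)/c^{*}(x_0)$, which is valid in this setting by Theorem~\ref{theorem:benchmark_construction} (and is how the benchmark $q^{*}$ is built), but makes your proof slightly less general than the paper's. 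The payoff of your route is a more transparent constant, $\mathcal{L}^{*}=\sum_{x_0}p_0(x_0)\log c^{*}(x_0)-\sum_{x_1}p_1(x_1)\log v^{*}(x_1)$, i.e.\ $\mathcal{L}$ evaluated at the true potentials, which immediately exhibits $\mathcal{L}^{*}=\min_{\theta}\mathcal{L}(\theta)$ and identifies the KL with the suboptimality gap $\mathcal{L}(\theta)-\mathcal{L}^{*}$; the paper's constant looks different but necessarily equals yours, since both satisfy $\mathcal{L}^{*}=\mathcal{L}(\theta)-\KL{q^{*}}{q_{\theta}}$ for all $\theta$. Both proofs hinge on the same key mechanism you correctly isolated: the cross terms $\mathbb{E}_{q^{*}}[\log c_{\theta}(x_0)]$ and $\mathbb{E}_{q^{*}}[\log v_{\theta}(x_1)]$ collapse onto the accessible marginals $p_0$ and $p_1$, which is what makes the objective feasible.
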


Notably, the expectations in Proposition \ref{proposition:dlight_sb_loss} can be efficiently estimated via Monte Carlo sampling, and the resulting objective can be optimized using stochastic gradient descent with respect to $\theta$.

\vspace{-2mm}
\subsection{Discrete Light Schr\"odinger Bridge Matching (DLightSB-M)}
\label{section:dlight_sb_m}

\vspace{-2mm}
Finally, we introduce DLightSB-M, a dynamic variant of DLightSB, which also uses our benchmark construction. Inspired by \citep{gushchin2024light}, we introduce a \emph{discrete-space optimal projection} that recovers the SB through a single projection step. In particular, rather than projecting a reciprocal process $r \in \reciprocal(\X^{N+2})$ onto the Markov set $\markov(\X^{N+2})$ as in the D-IMF procedure \eqref{equation:markovian_projection}, we directly project $r$ onto the set of all SBs, defined as follows:

\vspace{-3mm}
\begin{gather}
\label{equation:sb_set}
    \gS(\X^{N+2}) \coloneqq 
    \Big\{
        q^{\mathrm{SB}} \in \process(\X^{N+2}) \ \text{such that} \ 
        \exists\, q^{\mathrm{SB}}_0, q^{\mathrm{SB}}_1 \in \process(\X),\nonumber \\
        q^{\mathrm{SB}} \!=\!
        \argmin_{q \in \Pi_N(q^{\mathrm{SB}}_0, q^{\mathrm{SB}}_1)}
        \!\KL{q}{q^{\mathrm{ref}}}
    \Big\}.
\end{gather}
\vspace{-3mm}

To justify this projection in discrete settings, we show that \citep[Theorem 3.1]{gushchin2024light} extends to an arbitrary Markov reference process $q^{\text{ref}}$.

\begin{proposition}[Discrete-Space Optimal Projection with an Arbitrary Reference Process]
    \label{proposition:optimal_projection}
    Let $r \in \reciprocal(\X^{N+2})$ be a reciprocal process defined with a reference process $q^{\textup{ref}}\in\markov(\X^{N+2})$ and a joint distribution $r(x_0, x_1) \in \Pi(p_0, p_1)$. Then, the optimal projection of $r$ onto the set of all SBs $\gS(\X^{N+2})$ is the SB $q^*$ between the desired marginals $p_0$ and $p_1$, i.e.,
    
    \vspace{-2mm}
    \begin{equation}
        \label{equation:optimal_projection}
        q^* = \argmin_{q^{\textup{SB}} \in \gS(\X^{N+2})} \KL{r}{q^{\textup{SB}}}.
    \end{equation}
\end{proposition}

\vspace{-2mm}
\textbf{Loss.} To complete the construction, we need to parameterize $q^{\text{SB}}$ so that the minimization is restricted to $\mathcal{S}(\mathcal{X}^{N+2})$. To address this we recall that $q^{\text{SB}}\!\in\!\markov(\X^{N+2})$ (\wasyparagraph\ref{section:csbm}) and it is fully determined by its forward transitions $q(x_{t_n} | x_{t_{n-1}})$. We therefore adopt the closed-form transition parameterization \eqref{equation:transition_closed_form}, with parameters $\theta\!=\!\{\beta_k, r_k^d\}$ as in DLightSB, yielding the transitions $q_\theta(x_{t_n} | x_{t_{n-1}})$ that define an SB. Finally, under this parameterization, the discrete-space optimal projection \eqref{equation:optimal_projection} can be optimized using the Markovian projection objective \eqref{equation:csbm_loss_forward} via stochastic gradient descent.

Importantly, since the reference process $q^{\text{ref}}$ factorizes across dimensions, the KL and cross-entropy (CE) terms in \eqref{equation:csbm_loss_forward} admit an equivalent reformulation as a sum over dimensions. Concretely, each term reduces to a sum over dimensions of KL or CE between $q^{\text{ref}}(x^d_{t_n} | x^d_{t_{n-1}}, x^d_1)$ and the model's $d$-th transition. The latter is obtained by summing out all dimensions except the $d$-th one (see \eqref{equation:marginal_sb_trainsition_distribution} in the proof of Proposition~\ref{proposition:sb_trainsition_distribution}), which is tractable under our parameterization.

\textbf{Limitation.} In high dimensions, the CP parameterization requires a large number of components $K$ to capture complex structure. This makes DLightSB(-M) methods computationally demanding.

\vspace{-3mm}
\section{Evaluation}
\label{section:evaluation}
\vspace{-2mm}
In this section, we first introduce the evaluation metrics and describe the baselines (\secref{subsection:evaluation_metrics}). We then evaluate the solvers from \wasyparagraph\ref{section:solvers_for_evaluation} on our benchmark setup (\secref{subsection:benchmark_pairs_construction}) and report the results in \secref{subsection:results}.

\vspace{-2mm}
\subsection{Metrics and Baselines}
\label{subsection:evaluation_metrics}
\vspace{-1mm}

Evaluating generative models on discrete data is challenging because widely used metrics, such as generative perplexity for text or FID for images \citep{heusel2017gans}, are domain-specific. Following tabular-data evaluation work \citep{zhang2024mixedtypetabulardatasynthesis, shi2025tabdiff}, we adopt the \emph{Shape Score} and \emph{Trend Score} metrics. Further, we introduce an EOT/SB-specific metric given by the \emph{Trajectory KL}.

\textbf{Shape and Trend Score.} These metrics quantify how well a solver matches the benchmark at the level of dimension-wise and pair-wise marginals. For each coordinate $d$ (Shape) and each pair $(d_m,d_n)$ (Trend), we compute an adjusted total variation score and then average over dimensions:
\begin{minipage}{0.43\linewidth}
    \vspace{-2mm}
    \begin{gather*}
        \text{SSM}^d = \\[-2mm] 1 - \frac{1}{2}\sum_{x^d=0}^{S-1} | \tilde{q}^*(x^d) - \tilde{q}_\theta(x^d) |;
    \end{gather*}
\end{minipage}
\hfill
\begin{minipage}{0.52\linewidth} 
    \vspace{-2mm}
    \begin{gather*}
        \text{TSM}^{d_m,d_n} = \\[-2mm] 1\!-\!\frac{1}{2}\sum_{x^{d_m}=0}^{S-1}\sum_{x^{d_n}=0}^{S-1} | \tilde{q}^*(x^{d_m}, x^{d_n}) - \tilde{q}_\theta(x^{d_m}, x^{d_n}) |,
    \end{gather*}
\end{minipage}

where the tilde denotes empirical distribution, i.e., $\tilde{q}(x)\!=\!\tfrac{1}{|I|}\!\sum_{i \in I} \delta_{x^{(i)}}$.

\textbf{Conditional Metrics.} In our evaluation, we primarily report conditional variants of these metrics. These are computed by averaging SSM and TSM over sampled $x_0 \sim p_0$ and multiple generated $x_1$ for each $x_0$. This approach provides a direct measure of the fidelity of the learned conditional distribution $q_\theta(x_1|x_0)$ and quantifies how well the EOT/SB solver solves the underlying problem.

\textbf{Trajectory KL divergence.} Additionally, we utilize the dynamic SB and its Markov property to compute the forward and reverse KL divergences between processes, namely $\KL{q^*(x_0, x_\text{in}, x_1)}{q_\theta(x_0, x_\text{in}, x_1)}$ and $\KL{q_\theta(x_0, x_\text{in}, x_1)}{q^*(x_0, x_\text{in}, x_1)}$, respectively, i.e.,
\begin{equation*}
    \KL{q^*(x_0, x_\text{in}, x_1)}{q_\theta(x_0, x_\text{in}, x_1)}
    =
    \sum_{n=1}^{N+1}\sum_{d=1}^D
    \KL{q^*(x^d_{t_n} | x_{t_{n-1}})}{q_\theta(x^d_{t_n} | x_{t_{n-1}})}.
\end{equation*}
An analogous decomposition holds for the reverse KL divergence. This metric relies on factorization across dimensions specifically for methods that model only marginals (e.g., CSBM, \secref{section:csbm}). In contrast, for methods that do not impose such a restriction (e.g., DLightSB, \secref{section:dlight_sb}), the required transitions are obtained by marginalizing out all coordinates except the $d$-th dimension, see \eqref{equation:marginal_sb_trainsition_distribution}.

\textbf{Baselines.} To provide a simple point of reference for our benchmark, we consider three simple baselines. The first, \textit{Independent}, ignores the joint distribution between $x_0$ and $x_1$ and samples $x_1$ directly from $p_1$. The second, \textit{Reference}, takes $x_1$ to be a sample from the reference process $q^{\text{ref}}$. The third, \textit{Feature-wise SB}, solves an SB problem separately for each dimension using the analytical D-IMF procedure \citep[Section 4.1]{ksenofontov2025categorical} with empirically estimated marginals $p_0$ and $p_1$. This baseline assumes factorization across dimensions and samples each $x_1^d$ independently.

\begin{figure}[!t]
   \centering
   \captionsetup[subfigure]{font=small}
   \begin{minipage}{0.02\textwidth}
      \centering
       \vspace{-20mm}
       \rotatebox{90}{\small $q^{\text{gauss}}$/$\gamma\!=\!0.02$}
  \end{minipage}
   \begin{subfigure}[b]{0.156\linewidth}
       \includegraphics[width=0.995\linewidth]{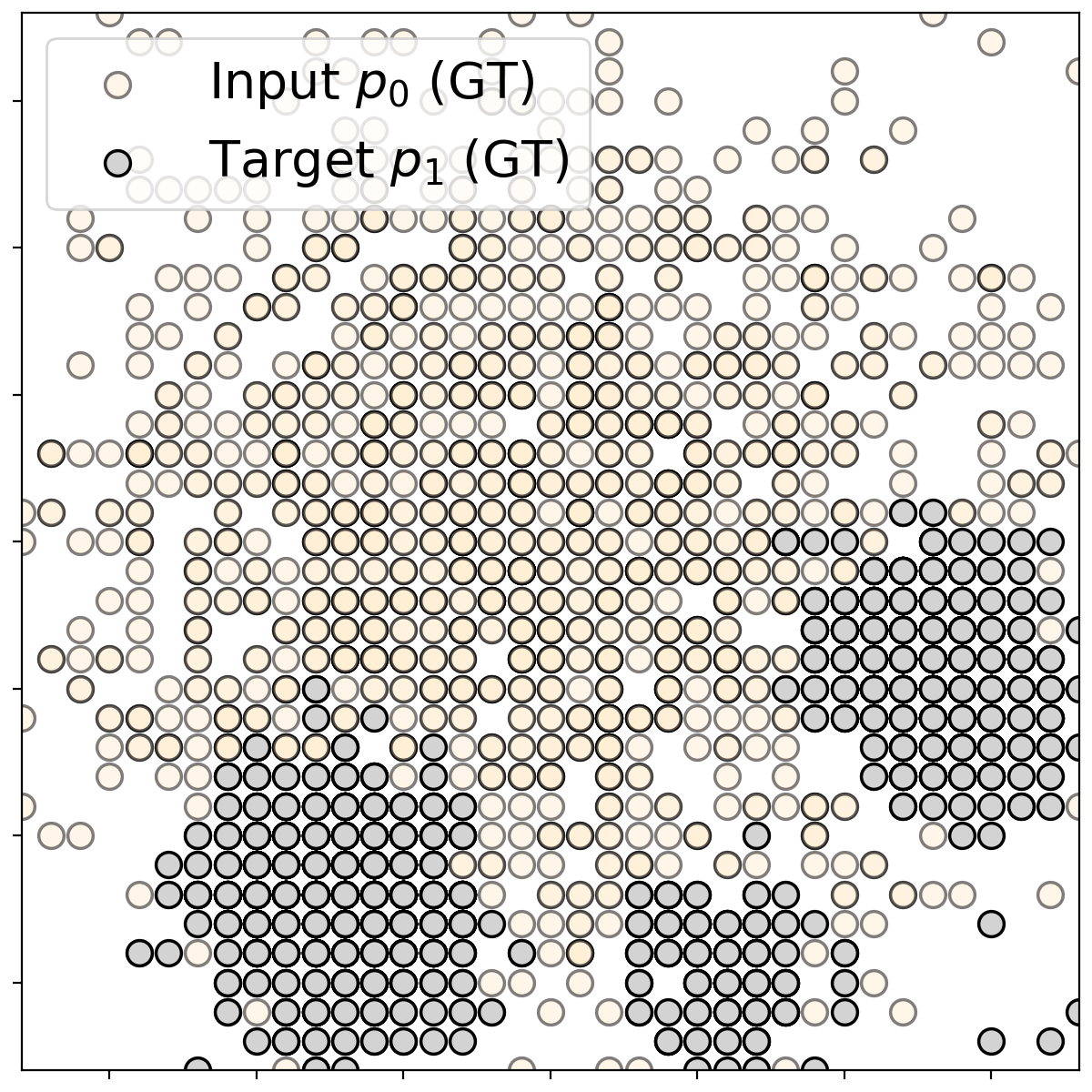}
   \end{subfigure}
   \begin{subfigure}[b]{0.156\linewidth}
       \includegraphics[width=0.995\linewidth]{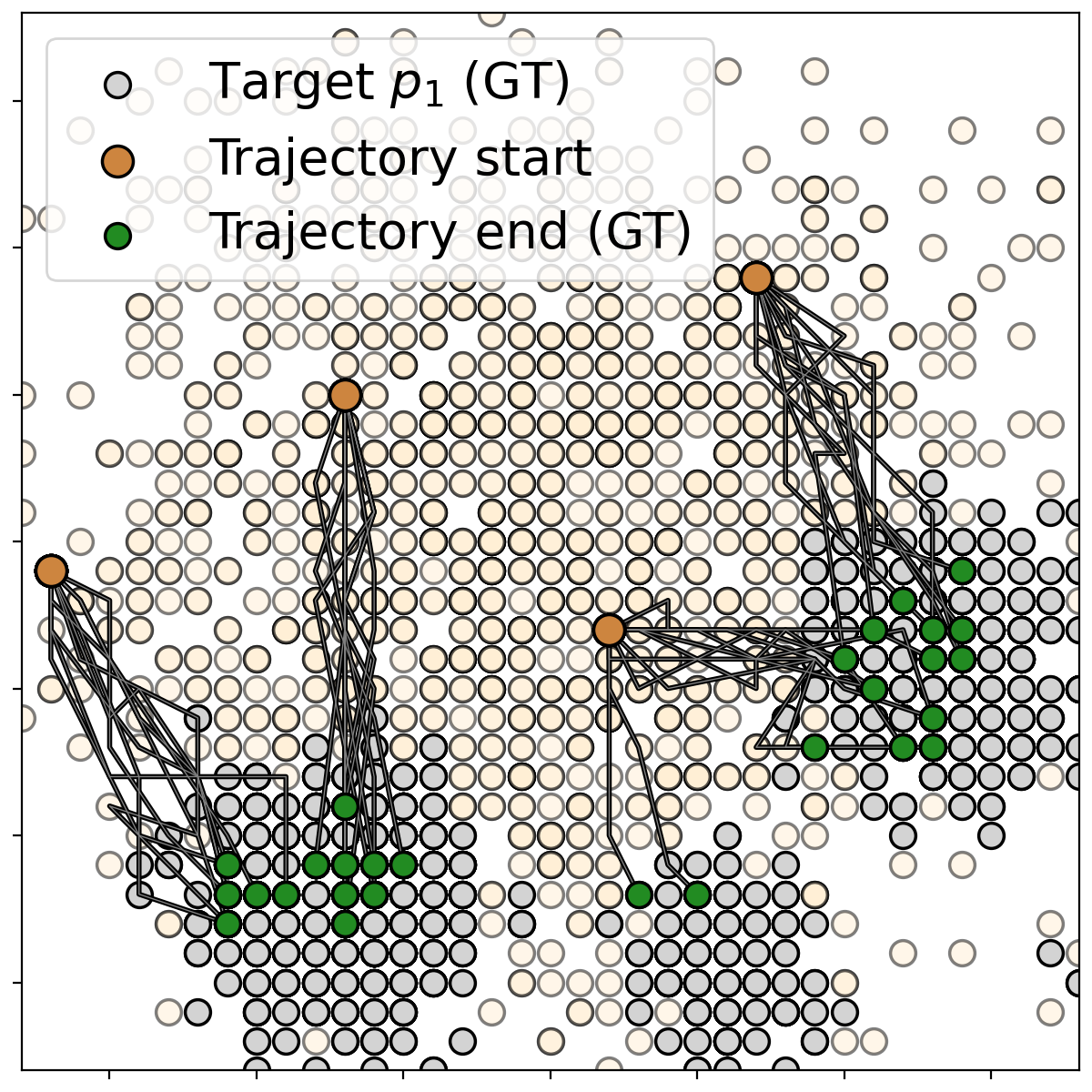}
   \end{subfigure}
   \begin{subfigure}[b]{0.156\linewidth}
       \includegraphics[width=0.995\linewidth]{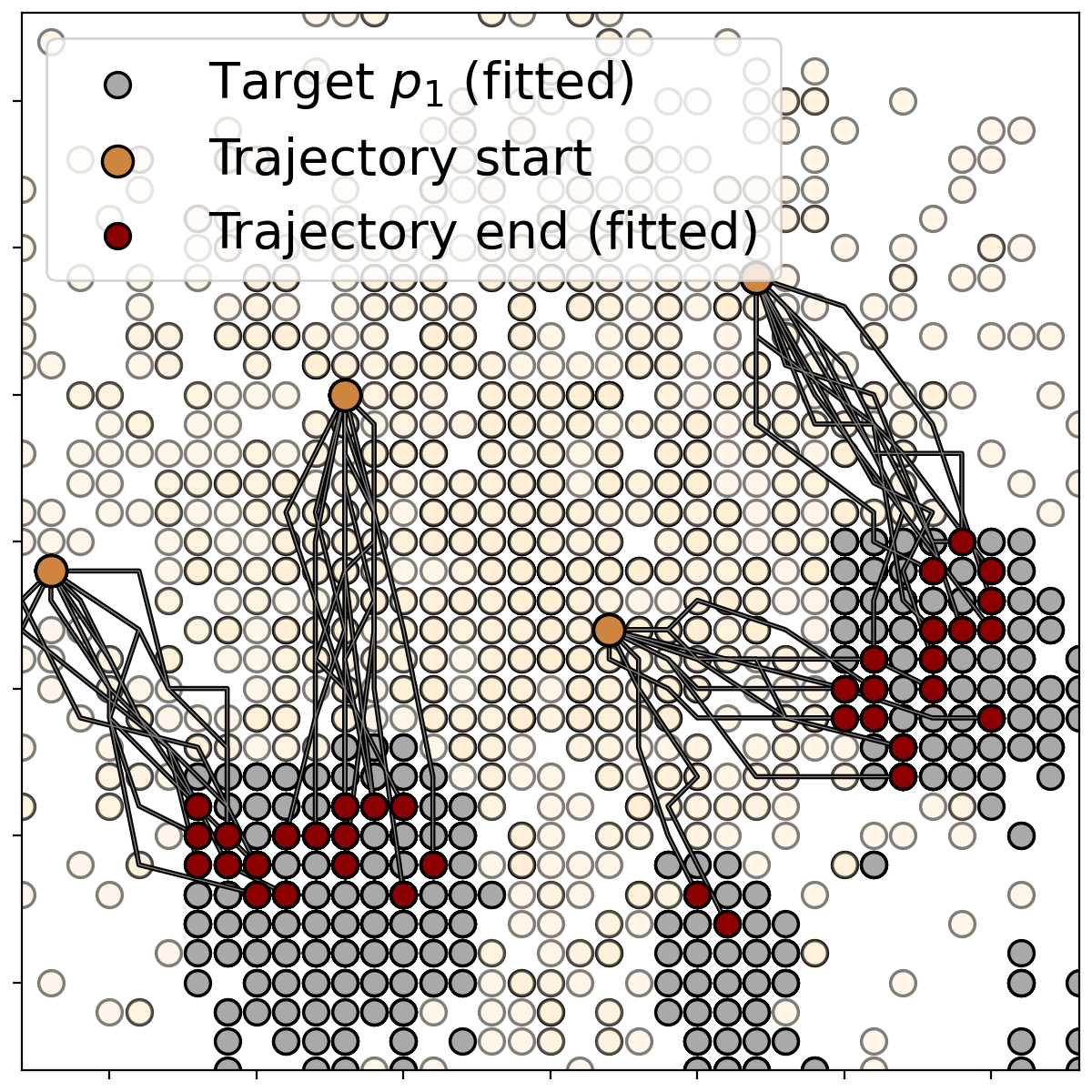}
   \end{subfigure}
   \begin{subfigure}[b]{0.156\linewidth}
       \includegraphics[width=0.995\linewidth]{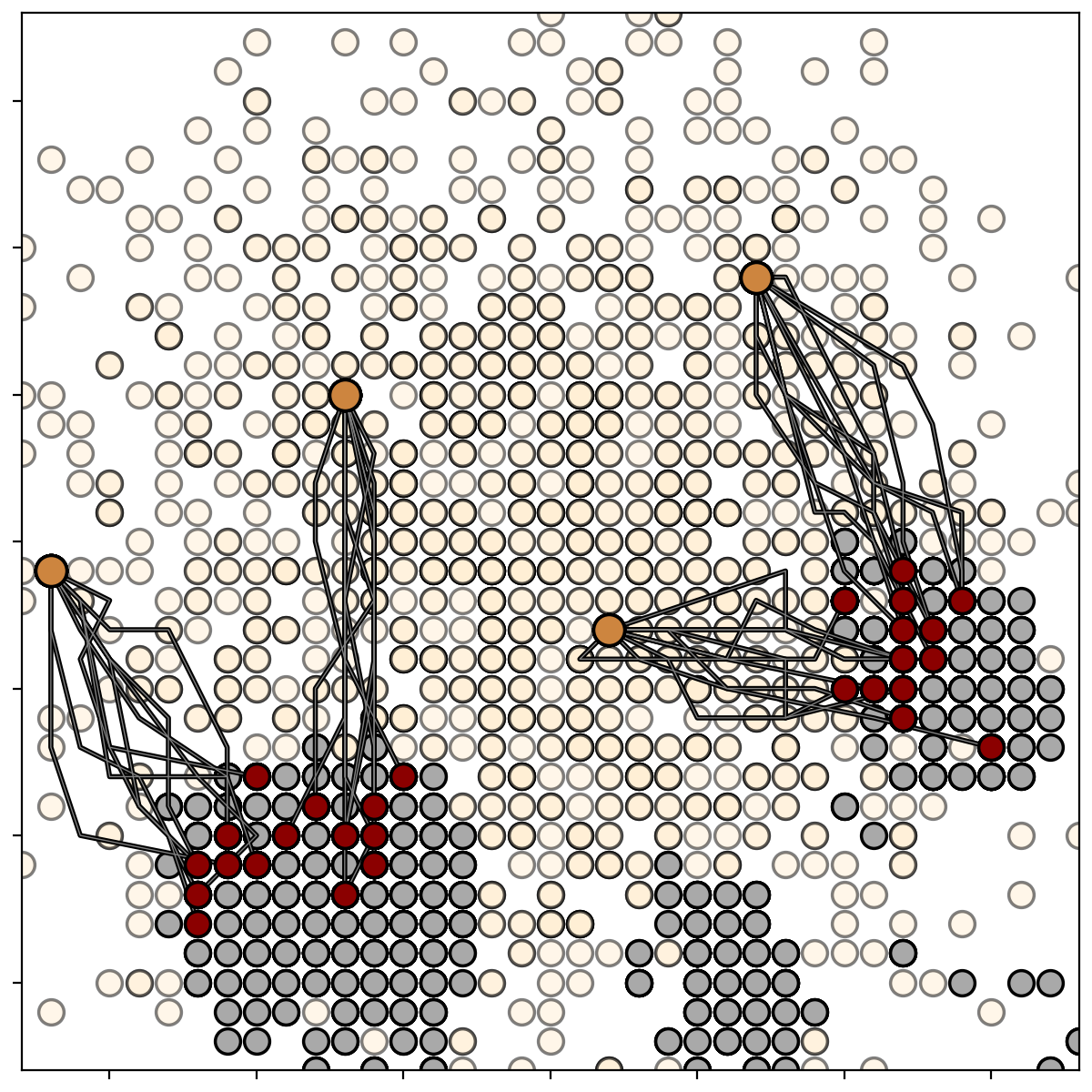}
   \end{subfigure}
   \begin{subfigure}[b]{0.156\linewidth}
       \includegraphics[width=0.995\linewidth]{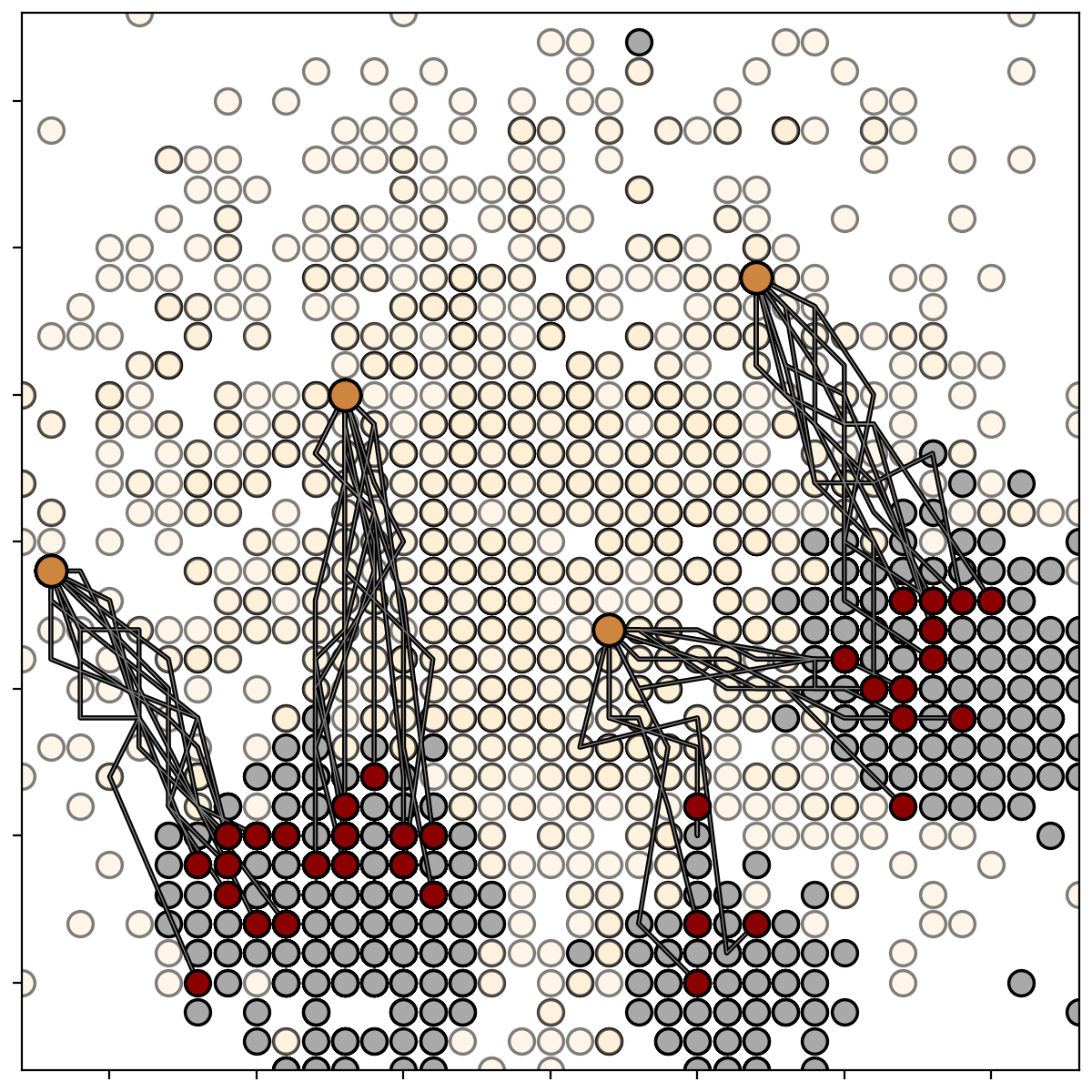}
   \end{subfigure}
   \begin{subfigure}[b]{0.156\linewidth}
       \includegraphics[width=0.995\linewidth]{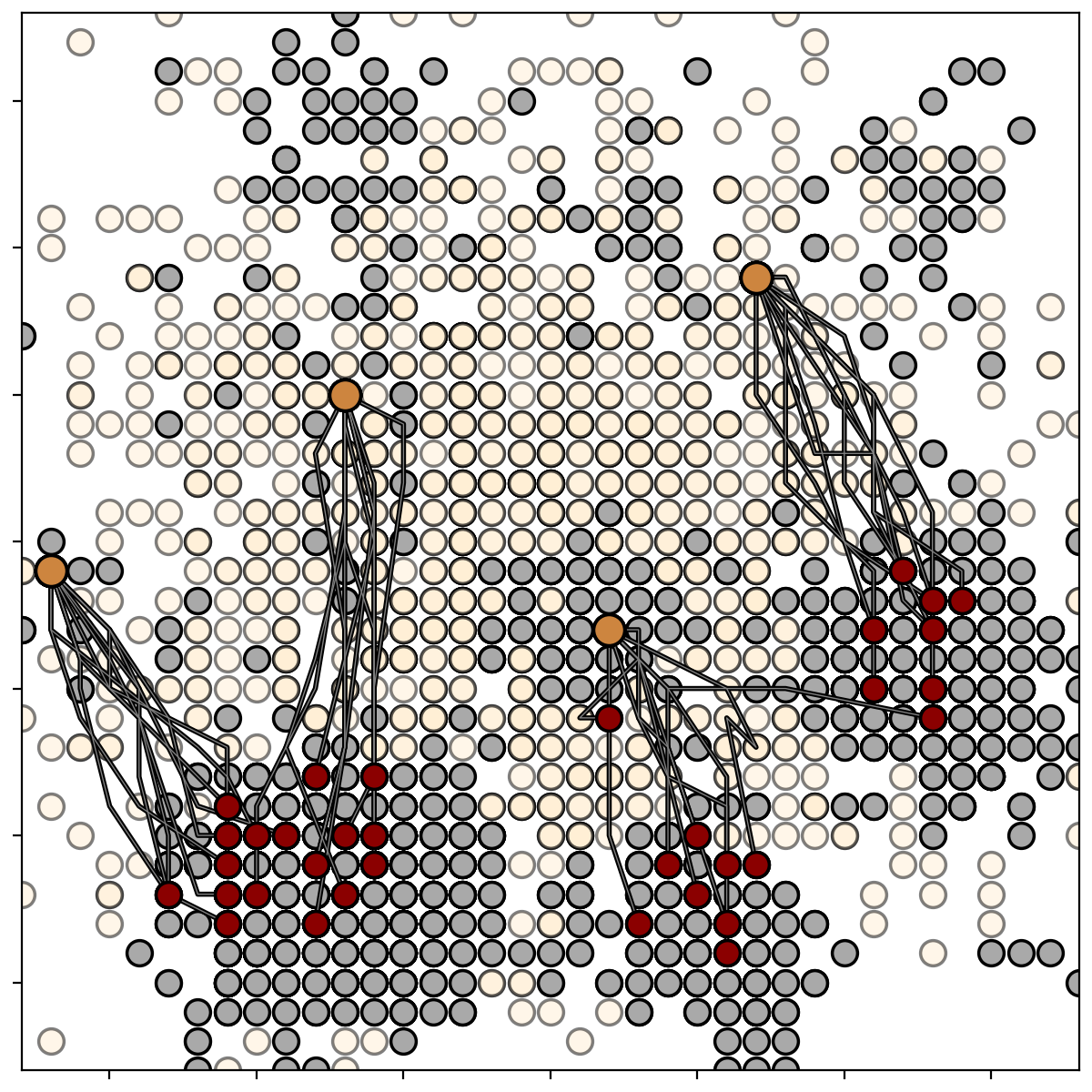}
   \end{subfigure}
   \vspace{1mm} 

    \begin{minipage}{0.02\textwidth}
      \centering
       \vspace{-32mm}
       \rotatebox{90}{\small $q^{\text{unif}}$/$\gamma\!=\!0.005$}
  \end{minipage}
   \begin{subfigure}[b]{0.156\linewidth}
       \includegraphics[width=0.995\linewidth]{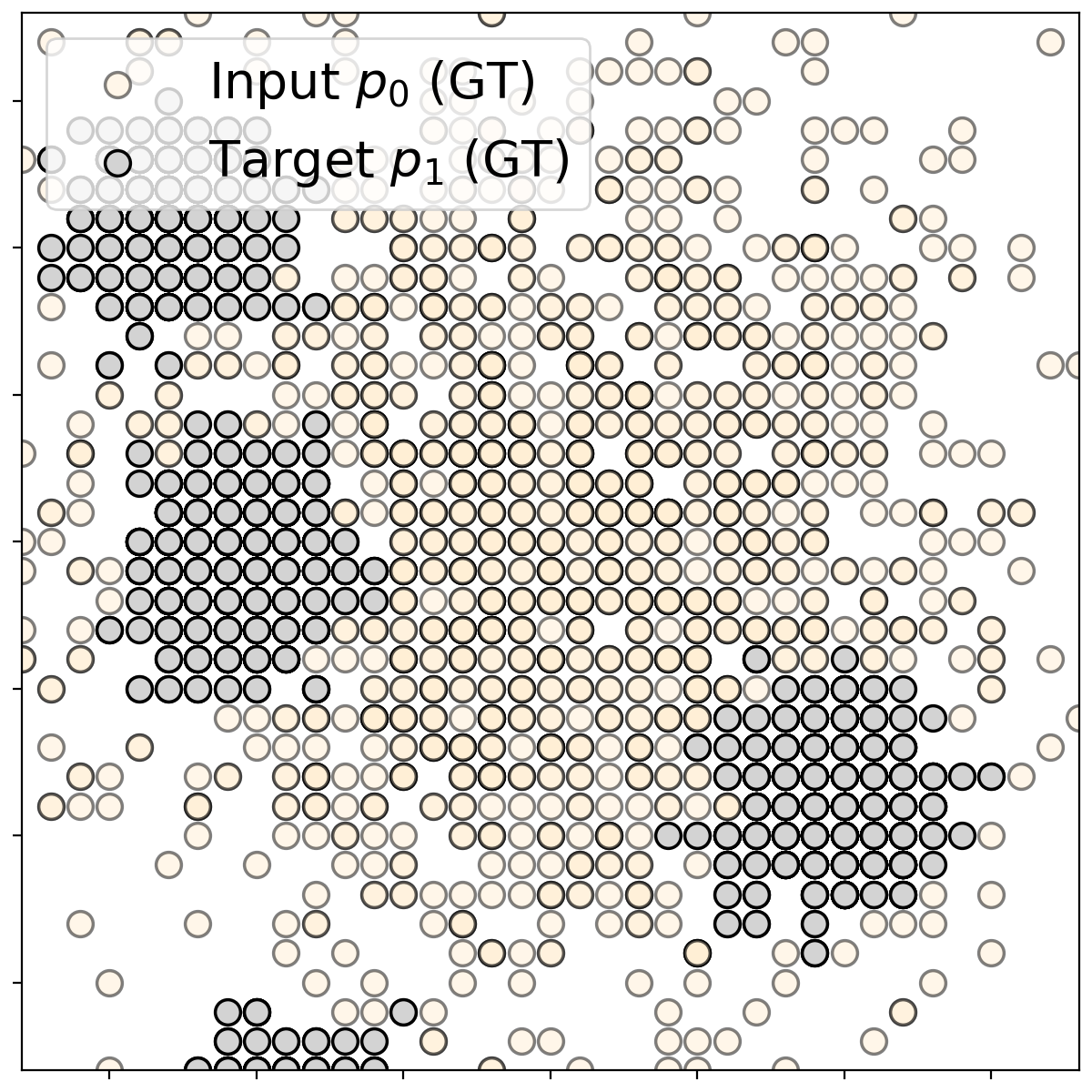}
       \caption{Input/Target}
   \end{subfigure}
   \begin{subfigure}[b]{0.156\linewidth}
       \includegraphics[width=0.995\linewidth]{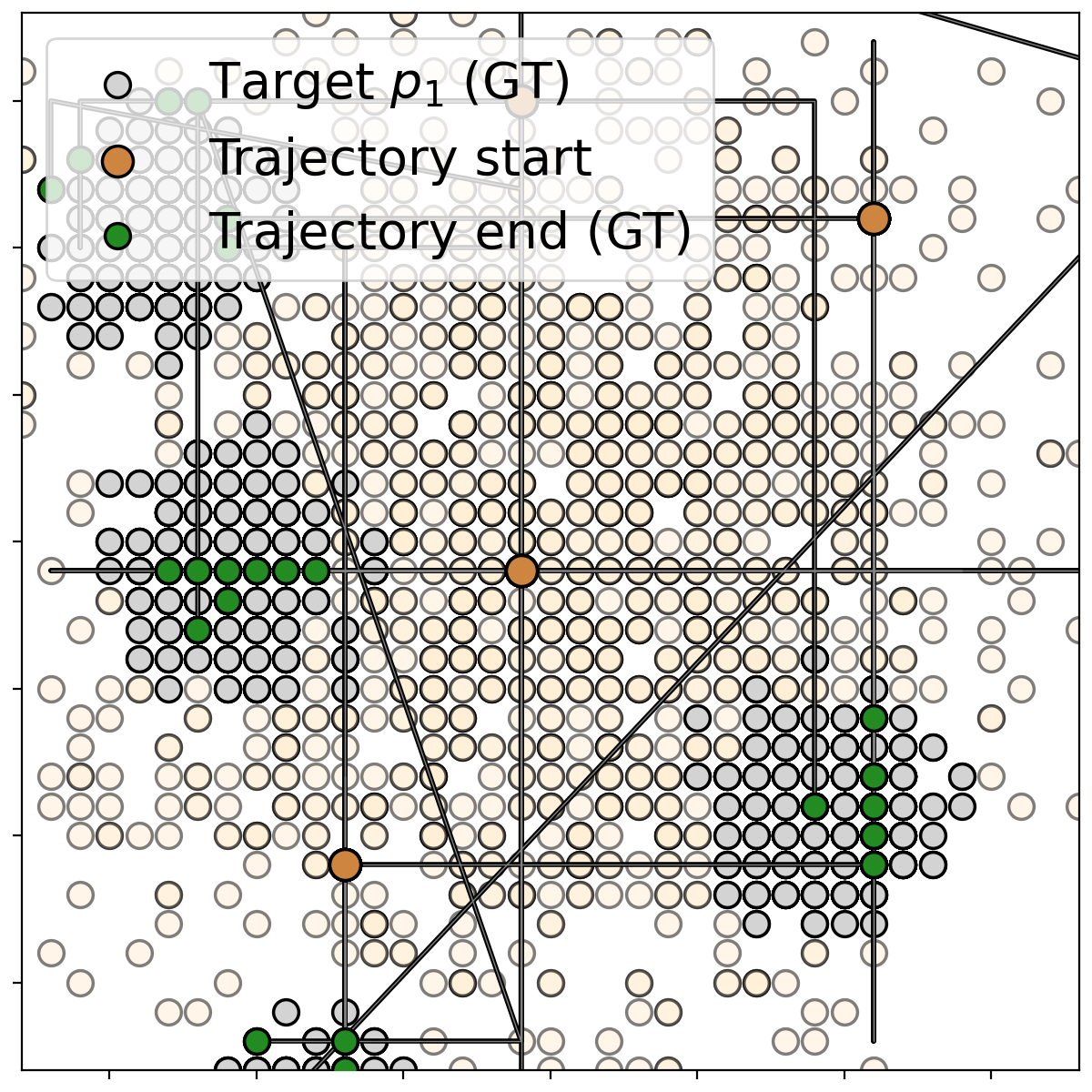}
       \caption{Benchmark}
       \label{subfigure:benchmark pairs}
   \end{subfigure}
   \begin{subfigure}[b]{0.156\linewidth}
       \includegraphics[width=0.995\linewidth]{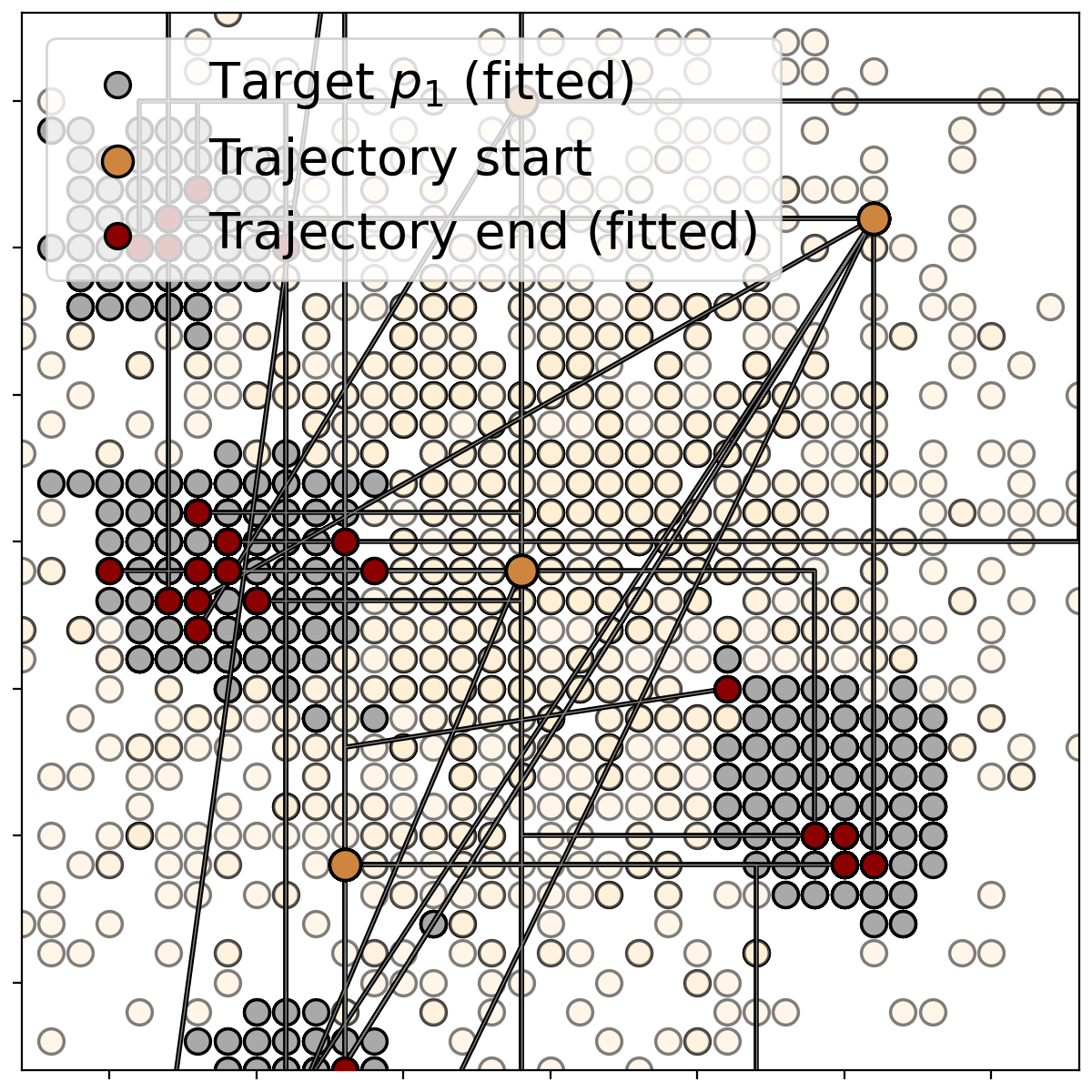}
       \caption{CSBM}
   \end{subfigure}
   \begin{subfigure}[b]{0.156\linewidth}
       \includegraphics[width=0.995\linewidth]{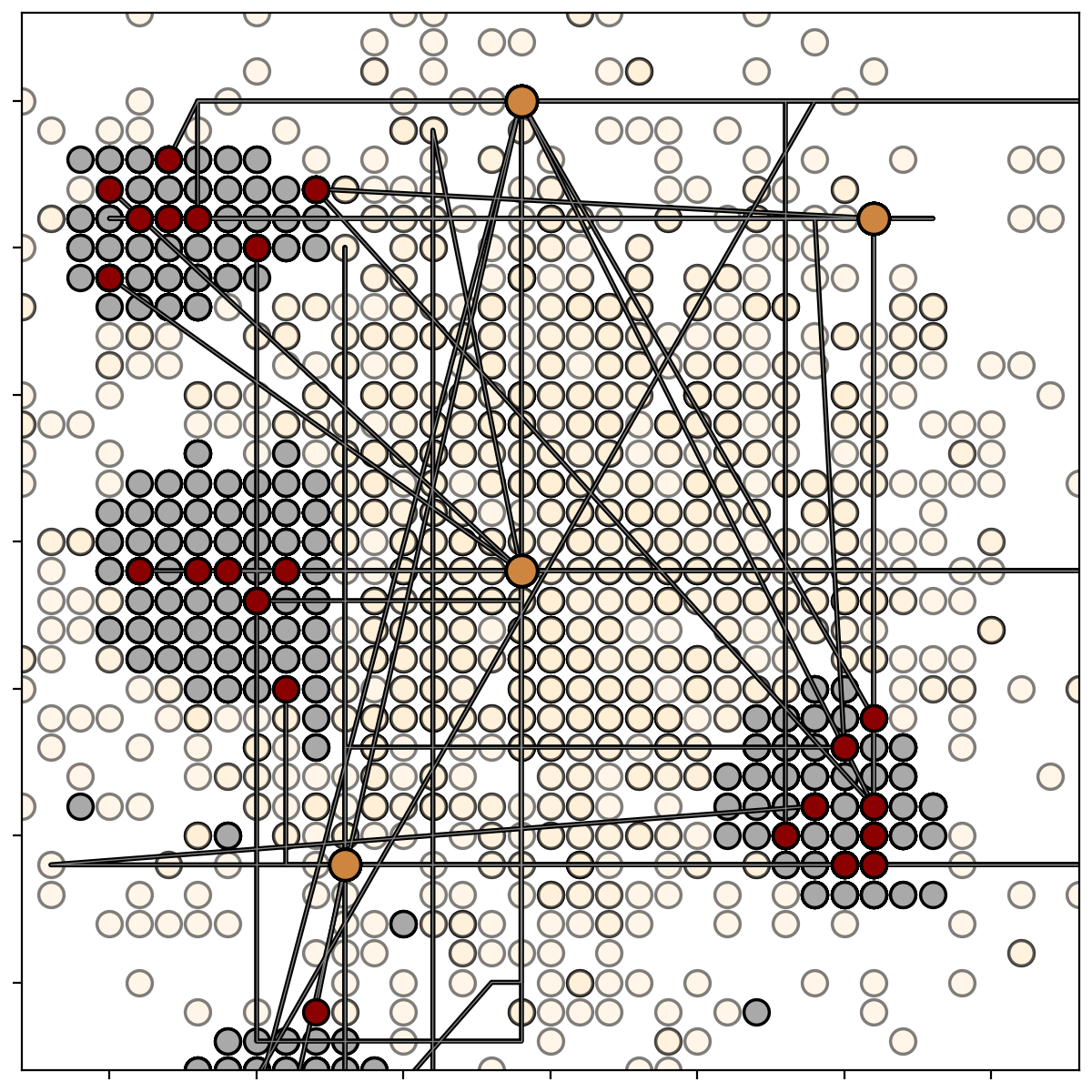}
       \caption{$\alpha$-CSBM}
   \end{subfigure}
   \begin{subfigure}[b]{0.156\linewidth}
       \includegraphics[width=0.995\linewidth]{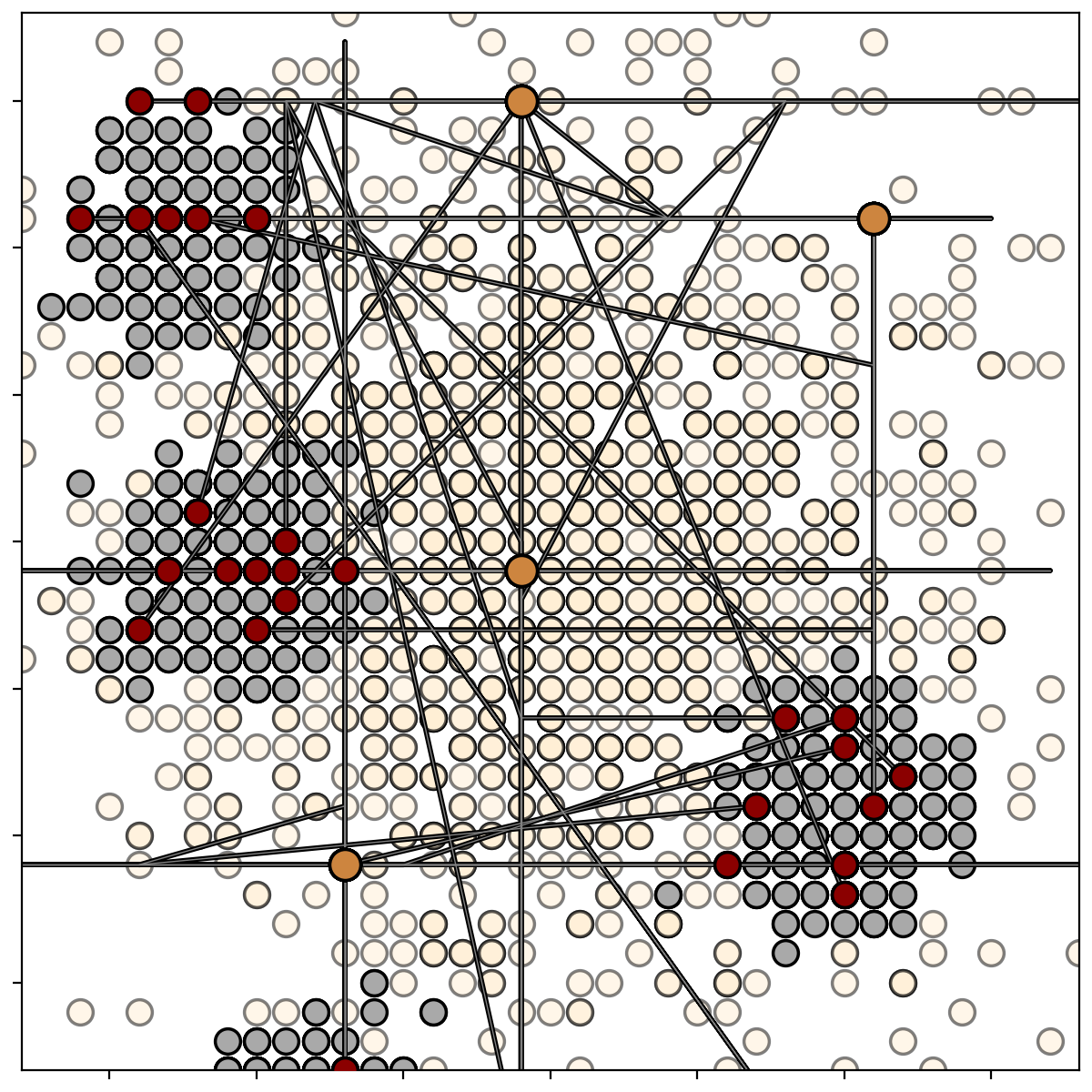}
       \caption{DLightSB}
   \end{subfigure}
   \begin{subfigure}[b]{0.156\linewidth}
       \includegraphics[width=0.995\linewidth]{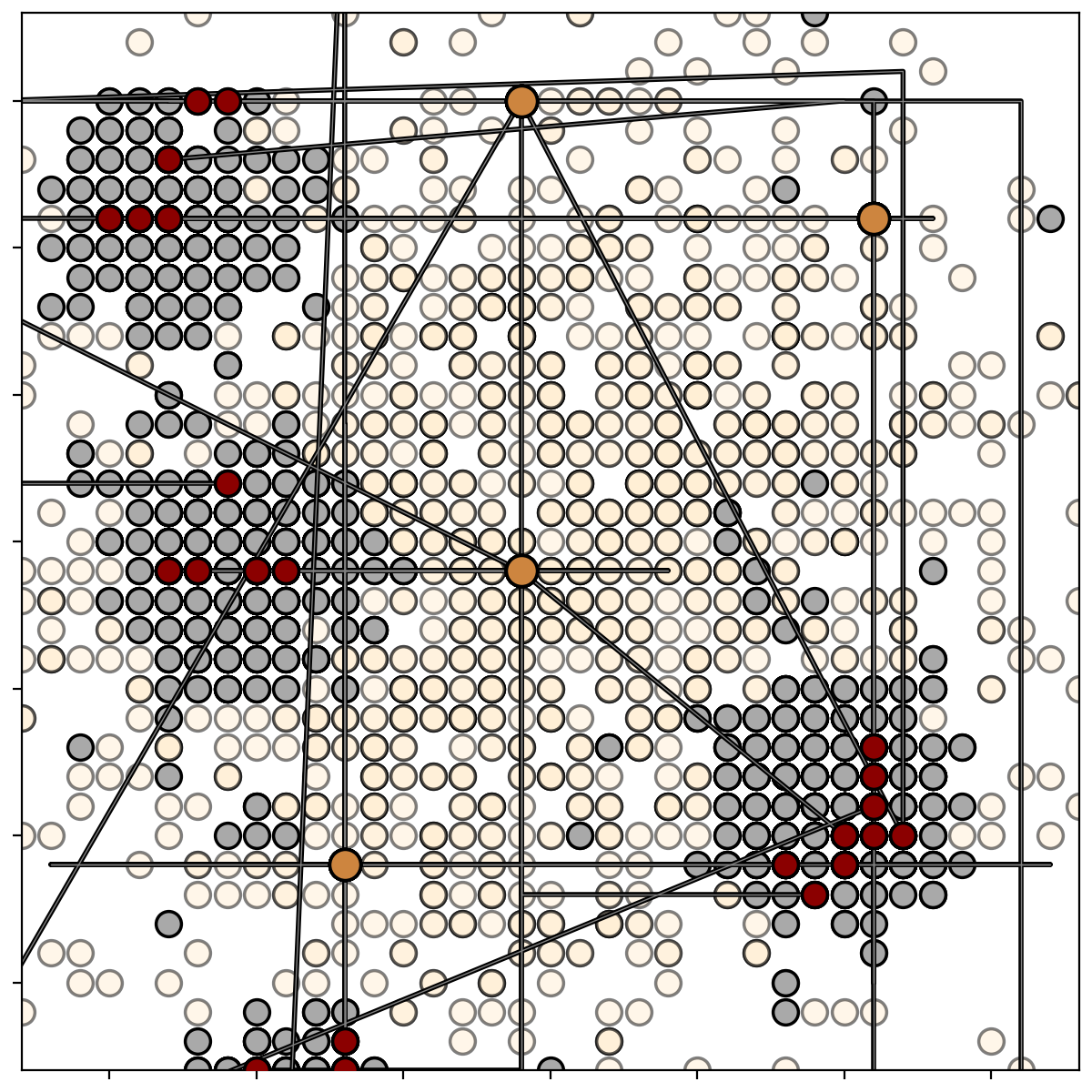}
       \caption{DLightSB-M}
   \end{subfigure}

   \vspace{-3mm}
   
   \caption{\centering Samples from all considered methods on our high-dimensional Gaussian mixture benchmark. 
   \textbf{Top row}: $q^{\text{unif}}$ ($\gamma = 0.005$). 
   \textbf{Bottom row}: Gaussian benchmark ($\gamma = 0.02$). 
   CSBM, $\alpha$-CSBM, and DLightSB-M were trained with KL loss ($N\!+\!1=64$).}
   \label{figure:hd_benchmarks_main}
   \vspace{-4mm}
\end{figure}

\vspace{-2mm}
\subsection{Results}
\label{subsection:results}
\vspace{-2mm}
\textbf{Benchmark Usage Protocol.} To simulate the generative discrete-space EOT/SB task discussed in \secref{subsection:background_problem_setting}, we use our benchmark pair constructor differently for training and testing. For \emph{training}, we sample $x_0^\textup{train} \sim p_0$ and $x_1^\textup{train} \sim p_1$ independently. We further remove dataset-size limitations that could affect a direct comparison of methods by infinitely sampling the training data, as enabled by our benchmark. For \emph{testing}, unconditional metrics and the Trajectory KL are computed on a fixed set of $20\ 000$ precomputed benchmark pairs $(x_0,x_1) \sim q^*(x_0, x_1)$. Conditional metrics are evaluated on $1\ 000$ $x_1$ samples for each of $157$ unique $x_0$, where the $x_0$ are drawn from this $20\ 000$ set and the corresponding $x_1$ are sampled from both the method and the benchmark. We also compare different training setups by varying $N$ across CSBM, $\alpha$-CSBM, and DLightSB-M. For the same set of methods, we experiment with two loss functions: KL and MSE.

\begin{table*}[!t]
    \centering
    \makebox[0.5\paperwidth][c]{%
    \begin{subtable}[t]{0.459\paperwidth}
        \centering
        \begin{adjustbox}{width=\linewidth}
        \begin{tabular}{rcccccccccccccc}
            \toprule
            & & & \multicolumn{4}{c}{$D\!=\!2$} & \multicolumn{4}{c}{$D\!=\!16$} & \multicolumn{4}{c}{$D\!=\!64$} \\
            \cmidrule(lr){4-7} \cmidrule(lr){8-11} \cmidrule(l){12-15}
            & & & \multicolumn{2}{c}{gaussian} & \multicolumn{2}{c}{uniform} & \multicolumn{2}{c}{gaussian} & \multicolumn{2}{c}{uniform} & \multicolumn{2}{c}{gaussian} & \multicolumn{2}{c}{uniform} \\
            \cmidrule(lr){4-5} \cmidrule(lr){6-7} \cmidrule(l){8-9} \cmidrule(lr){10-11} \cmidrule(lr){12-13} \cmidrule(l){14-15}
            Method & Loss & $N\!+\!1$ & {$0.02$} & {$0.05$} & {$0.005$} & {$0.01$} & {$0.02$} & {$0.05$} & {$0.005$} & {$0.01$} & {$0.02$} & {$0.05$} & {$0.005$} & {$0.01$} \\
            \midrule
            \multirow{1}{*}{\textit{Independent}} & -- & -- &  \scorecell{0.507} & \scorecell{0.828} & \scorecell{0.633} & \scorecell{0.642} & \scorecell{0.700} & \scorecell{0.742} & \scorecell{0.498} & \scorecell{0.571} & \scorecell{0.653} & \scorecell{0.659} & \scorecell{0.543} & \scorecell{0.611} \\
            \multirow{1}{*}{\textit{Reference}} & -- & -- &  \scorecell{0.167} & \scorecell{0.453} & \scorecell{0.393} & \scorecell{0.419} & \scorecell{0.294} & \scorecell{0.291} & \scorecell{0.296} & \scorecell{0.329} & \scorecell{0.406} & \scorecell{0.371} & \scorecell{0.358} & \scorecell{0.348} \\
            \multirow{1}{*}{\textit{Feature-wise SB}} & -- & -- &  \scorecell{0.712} & \scorecell{0.873} & \scorecell{0.875} & \scorecell{0.890} & \scorecell{0.808} & \scorecell{0.756} & \scorecell{0.612} & \scorecell{0.665} & \scorecell[\underline]{0.931} & \scorecell{0.661} & \scorecell{0.734} & \scorecell{0.701} \\
            \midrule
            \multirow{1}{*}{DLightSB} & \multirow{1}{*}{--} & -- & \scorecell[\textbf]{0.948} & \scorecell[\textbf]{0.917} & \scorecell[\textbf]{0.931} & \scorecell[\textbf]{0.933} & \scorecell{0.849} & \scorecell[\textbf]{0.934} & \scorecell[\textbf]{0.919} & \scorecell[\textbf]{0.933} & \scorecell[\textbf]{0.943} & \scorecell[\textbf]{0.907} & \scorecell[\textbf]{0.931} & \scorecell[\textbf]{0.923} \\
            \midrule
            \multirow{4}{*}{CSBM} & \multirow{2}{*}{KL} & 16 & \scorecell{0.719} & \scorecell{0.713} & \scorecell{0.872} & \scorecell{0.884} & \scorecell{0.786} & \scorecell{0.744} & \scorecell{0.643} & \scorecell{0.647} & \scorecell{0.858} & \scorecell{0.849} & \scorecell{0.654} & \scorecell{0.684} \\
             &  & 64 & \scorecell[\underline]{0.877} & \scorecell{0.866} & \scorecell{0.904} & \scorecell[\underline]{0.916} & \scorecell[\underline]{0.870} & \scorecell{0.822} & \scorecell{0.667} & \scorecell{0.711} & \scorecell{0.888} & \scorecell{0.852} & \scorecell{0.624} & \scorecell{0.681} \\
             & \multirow{2}{*}{MSE} & 16 & \scorecell{0.481} & \scorecell{0.712} & \scorecell{0.777} & \scorecell{0.816} & \scorecell{0.772} & \scorecell{0.707} & \scorecell{0.522} & \scorecell{0.546} & \scorecell{0.734} & \scorecell{0.804} & \scorecell{0.592} & \scorecell{0.615} \\
             &  & 64 & \scorecell{0.285} & \scorecell{0.822} & \scorecell{0.766} & \scorecell{0.765} & \scorecell{0.815} & \scorecell{0.810} & \scorecell{0.589} & \scorecell{0.643} & \scorecell{0.660} & \scorecell[\underline]{0.870} & \scorecell{0.622} & \scorecell{0.631} \\
            \midrule
            \multirow{4}{*}{$\alpha$-CSBM} & \multirow{2}{*}{KL} & 16 & \scorecell{0.663} & \scorecell{0.729} & \scorecell{0.894} & \scorecell{0.889} & \scorecell{0.722} & \scorecell{0.789} & \scorecell{0.662} & \scorecell{0.677} & \scorecell{0.872} & \scorecell{0.810} & \scorecell{0.626} & \scorecell{0.667} \\
             &  & 64 & \scorecell{0.770} & \scorecell{0.870} & \scorecell{0.897} & \scorecell{0.914} & \scorecell[\textbf]{0.886} & \scorecell{0.883} & \scorecell{0.659} & \scorecell{0.727} & \scorecell{0.892} & \scorecell{0.829} & \scorecell{0.627} & \scorecell{0.662} \\
             & \multirow{2}{*}{MSE} & 16 & \scorecell{0.527} & \scorecell{0.708} & \scorecell{0.846} & \scorecell{0.870} & \scorecell{0.715} & \scorecell{0.779} & \scorecell{0.571} & \scorecell{0.627} & \scorecell{0.740} & \scorecell{0.768} & \scorecell{0.603} & \scorecell{0.622} \\
             &  & 64 & \scorecell[\underline]{0.880} & \scorecell{0.858} & \scorecell{0.836} & \scorecell{0.876} & \scorecell{0.700} & \scorecell{0.844} & \scorecell{0.596} & \scorecell{0.657} & \scorecell{0.653} & \scorecell{0.844} & \scorecell{0.611} & \scorecell{0.642} \\
            \midrule
            \multirow{4}{*}{DLightSB-M} & \multirow{2}{*}{KL} & 16 & \scorecell{0.862} & \scorecell[\underline]{0.914} & \scorecell[\underline]{0.924} & \scorecell[\textbf]{0.929} & \scorecell{0.811} & \scorecell{0.909} & \scorecell[\textbf]{0.921} & \scorecell[\underline]{0.920} & \scorecell{0.834} & \scorecell{0.758} & \scorecell{0.684} & \scorecell{0.732} \\
             &  & 64 & \scorecell{0.846} & \scorecell[\underline]{0.908} & \scorecell[\underline]{0.917} & \scorecell[\underline]{0.922} & \scorecell[\underline]{0.870} & \scorecell[\underline]{0.921} & \scorecell[\textbf]{0.918} & \scorecell[\underline]{0.921} & \scorecell{0.830} & \scorecell{0.853} & \scorecell{0.697} & \scorecell[\underline]{0.841} \\
             & \multirow{2}{*}{MSE} & 16 & \scorecell{0.709} & \scorecell[\underline]{0.907} & \scorecell{0.824} & \scorecell{0.900} & \scorecell{0.619} & \scorecell[\underline]{0.922} & \scorecell[\underline]{0.868} & \scorecell{0.877} & \scorecell{0.683} & \scorecell{0.829} & \scorecell[\underline]{0.776} & \scorecell{0.604} \\
             &  & 64 & \scorecell{0.685} & \scorecell[\underline]{0.905} & \scorecell{0.790} & \scorecell{0.887} & \scorecell{0.582} & \scorecell[\underline]{0.915} & \scorecell[\underline]{0.874} & \scorecell{0.849} & \scorecell{0.574} & \scorecell{0.829} & \scorecell{0.530} & \scorecell{0.759} \\
            \bottomrule
        \end{tabular}
        \end{adjustbox}
        \caption{ \centering Conditional Shape Score ($\uparrow$)}
        \label{table:conditional_shape_score}
    \end{subtable}
    \begin{subtable}[t]{0.336\paperwidth}
        \centering
        \begin{adjustbox}{width=\linewidth}
        \begin{tabular}{rcccccccccccccc}
            \toprule
            \multicolumn{4}{c}{$D\!=\!2$} & \multicolumn{4}{c}{$D\!=\!16$} & \multicolumn{4}{c}{$D\!=\!64$} \\
            \cmidrule(lr){1-4} \cmidrule(lr){5-8} \cmidrule(l){9-12}
            \multicolumn{2}{c}{gaussian} & \multicolumn{2}{c}{uniform} & \multicolumn{2}{c}{gaussian} & \multicolumn{2}{c}{uniform} & \multicolumn{2}{c}{gaussian} & \multicolumn{2}{c}{uniform} \\
            \cmidrule(lr){1-4} \cmidrule(lr){3-4} \cmidrule(l){5-6} \cmidrule(lr){7-8} \cmidrule(lr){9-10} \cmidrule(l){11-12}
            {$0.02$} & {$0.05$} & {$0.005$} & {$0.01$} & {$0.02$} & {$0.05$} & {$0.005$} & {$0.01$} & {$0.02$} & {$0.05$} & {$0.005$} & {$0.01$} \\
            \midrule
            \scorecell{0.471} & \scorecell{0.779} & \scorecell{0.521} & \scorecell{0.551} & \scorecell{0.569} & \scorecell{0.676} & \scorecell{0.369} & \scorecell{0.458} & \scorecell{0.478} & \scorecell{0.509} & \scorecell{0.351} & \scorecell{0.432} \\
            \scorecell{0.085} & \scorecell{0.277} & \scorecell{0.230} & \scorecell{0.273} & \scorecell{0.111} & \scorecell{0.082} & \scorecell{0.090} & \scorecell{0.095} & \scorecell{0.195} & \scorecell{0.121} & \scorecell{0.136} & \scorecell{0.110} \\ 
            \scorecell{0.585} & \scorecell{0.630} & \scorecell{0.704} & \scorecell{0.738} & \scorecell{0.686} & \scorecell{0.532} & \scorecell{0.351} & \scorecell{0.373} & \scorecell{0.838} & \scorecell{0.404} & \scorecell{0.532} & \scorecell{0.456} \\        \midrule
            \textbf{\scorecell{0.908}} & \textbf{\scorecell{0.851}} & \textbf{\scorecell{0.873}} & \textbf{\scorecell{0.872}} & \scorecell{0.763} & \textbf{\scorecell{0.838}} & \textbf{\scorecell{0.839}} & \textbf{\scorecell{0.838}} & \textbf{\scorecell{0.854}} & \textbf{\scorecell{0.747}} & \textbf{\scorecell{0.817}} & \textbf{\scorecell{0.773}} \\
            \midrule
            \scorecell{0.643} & \scorecell{0.610} & \scorecell{0.801} & \scorecell{0.808} & \scorecell{0.674} & \scorecell{0.609} & \scorecell{0.517} & \scorecell{0.513} & \scorecell{0.756} & \scorecell{0.694} & \scorecell{0.517} & \scorecell{0.537} \\
            \underline{\scorecell{0.837}} & \scorecell{0.794} & \scorecell{0.840} & \scorecell{0.846} & \scorecell{0.801} & \scorecell{0.738} & \scorecell{0.572} & \scorecell{0.624} & \underline{\scorecell{0.788}} & \scorecell{0.698} & \scorecell{0.486} & \scorecell{0.536} \\
            \scorecell{0.444} & \scorecell{0.627} & \scorecell{0.692} & \scorecell{0.728} & \scorecell{0.658} & \scorecell{0.574} & \scorecell{0.392} & \scorecell{0.418} & \scorecell{0.586} & \scorecell{0.655} & \scorecell{0.446} & \scorecell{0.463} \\
            \scorecell{0.227} & \scorecell{0.748} & \scorecell{0.686} & \scorecell{0.673} & \scorecell{0.726} & \scorecell{0.716} & \scorecell{0.482} & \scorecell{0.551} & \scorecell{0.496} & \underline{\scorecell{0.714}} & \scorecell{0.479} & \scorecell{0.478} \\
            \midrule
            \scorecell{0.588} & \scorecell{0.616} & \scorecell{0.819} & \scorecell{0.814} & \scorecell{0.610} & \scorecell{0.659} & \scorecell{0.545} & \scorecell{0.557} & \scorecell{0.773} & \scorecell{0.659} & \scorecell{0.484} & \scorecell{0.515} \\
            \scorecell{0.734} & \scorecell{0.794} & \scorecell{0.834} & \scorecell{0.844} & \textbf{\scorecell{0.816}} & \scorecell{0.798} & \scorecell{0.563} & \scorecell{0.642} & \underline{\scorecell{0.793}} & \scorecell{0.677} & \scorecell{0.486} & \scorecell{0.514} \\
            \scorecell{0.470} & \scorecell{0.607} & \scorecell{0.767} & \scorecell{0.781} & \scorecell{0.604} & \scorecell{0.649} & \scorecell{0.449} & \scorecell{0.512} & \scorecell{0.592} & \scorecell{0.618} & \scorecell{0.448} & \scorecell{0.457} \\
            \underline{\scorecell{0.838}} & \scorecell{0.782} & \scorecell{0.758} & \scorecell{0.785} & \scorecell{0.587} & \scorecell{0.761} & \scorecell{0.486} & \scorecell{0.561} & \scorecell{0.488} & \scorecell{0.687} & \scorecell{0.466} & \scorecell{0.486} \\
            \midrule
            \scorecell{0.817} & \textbf{\scorecell{0.848}} & \underline{\scorecell{0.860}} & \underline{\scorecell{0.864}} & \scorecell{0.702} & \scorecell{0.820} & \underline{\scorecell{0.831}} & \underline{\scorecell{0.820}} & \scorecell{0.731} & \scorecell{0.622} & \scorecell{0.468} & \scorecell{0.500} \\
            \scorecell{0.798} & \underline{\scorecell{0.843}} & \scorecell{0.853} & \underline{\scorecell{0.856}} & \underline{\scorecell{0.791}} & \underline{\scorecell{0.825}} & \underline{\scorecell{0.825}} & \underline{\scorecell{0.816}} & \scorecell{0.727} & \underline{\scorecell{0.710}} & \scorecell{0.493} & \underline{\scorecell{0.674}} \\
            \scorecell{0.624} & \underline{\scorecell{0.839}} & \scorecell{0.755} & \scorecell{0.838} & \scorecell{0.451} & \scorecell{0.824} & \scorecell{0.783} & \scorecell{0.785} & \scorecell{0.532} & \scorecell{0.683} & \underline{\scorecell{0.645}} & \scorecell{0.328} \\
            \scorecell{0.595} & \scorecell{0.834} & \scorecell{0.716} & \scorecell{0.822} & \scorecell{0.404} & \scorecell{0.814} & \scorecell{0.786} & \scorecell{0.763} & \scorecell{0.392} & \scorecell{0.678} & \scorecell{0.327} & \scorecell{0.600} \\
            \bottomrule
        \end{tabular}
        \end{adjustbox}
        \caption{\centering Conditional Trend Score ($\uparrow$)}
        \label{table:conditional_trend_score}
    \end{subtable}}
    \vspace{-2mm}
    \caption{\centering Conditional metrics on our high-dimensional Gaussian mixture benchmark. The best-performing method is highlighted in bold, and the second is underlined. Color code: vermillion for $<0.5$, orange for $[0.5,0.75)$, yellow for $[0.75,0.85)$, and bluish-green for $\geq 0.85$.}
    \vspace{-5mm}
\end{table*}

In the main text, we report only the conditional metrics, as they more accurately reflect the performance of the EOT/SB solvers. In Appendix \ref{appendix:additional_experiments_main} we provide experiments to \underline{validate conditional} \underline{metrics} against the unconditional ones. \underline{Further experimental details} are provided in Appendix~\ref{appendix:experiment_details}.

\textbf{High-Dimensional Gaussian Mixtures.} \label{section:hd_experiments} Here, we report results on the high-dimensional Gaussian mixture benchmark constructed as in \secref{subsection:benchmark_pairs_construction} using the methods from \secref{section:solvers_for_evaluation}. Visual results are shown in Figure \ref{figure:hd_benchmarks_main} for $q^\textup{gauss}$ ($\gamma\!=\!0.02$) and $q^\textup{unif}$ ($\gamma\!=\!0.005$), while \underline{additional plots} provided in Appendix~\ref{appendix:additional_experiments_main}. The qualitative results are presented in Tables \ref{table:conditional_shape_score}, \ref{table:conditional_trend_score}, \ref{table:forward_kl}, \ref{table:reverse_kl}, \ref{table:unconditional_shape_score} and \ref{table:unconditional_trend_score}.

First, across all benchmarks, each baseline underperforms for reasons specific to its construction. In particular, \textit{Reference} consistently degrades, indicating that the scalar value function $v^*$ in \eqref{equation:conditional_closed_form} strongly reweights the reference process and yields a challenging $q^*$ to approximate. The \textit{Independent} baseline deteriorates when the stochasticity parameter is small, while \textit{Feature-wise SB} degrades as the dimension increases. Together, these failures highlight the target properties that our benchmarks are designed to test in EOT/SB solvers.

Second, DLightSB consistently achieves the strongest performance across all setups. We attribute this to the DLightSB solver being constructed according to the same principle underlying our benchmark. DLightSB-M, which incorporates the same inductive bias, attains comparable results with a slight drop in metrics, likely due to the additional variance introduced by the KL minimization loss. While this alignment could be seen as a limitation of the benchmark, we take the opposite view: DLightSB(-M) can be treated as an oracle-like method in this setting, since its inductive bias makes it less informative as a measure of general solver performance. For an analysis of the \underline{reverse benchmark} designed to probe this bias, see Appendix~\ref{appendix:reverse_benchmark}.

Finally, CSBM and $\alpha$-CSBM perform noticeably worse than DLightSB(-M). Notably, $\alpha$-CSBM attains comparable quality to CSBM while halving the computational cost, making it a more efficient alternative. Increasing $N$ generally improves the metrics. As for the loss function, KL consistently outperforms MSE, likely because MSE minimizes pointwise squared error and yields over-smoothed solutions that blur modes. This effect can be observed in Figure \ref{figure:g2_002_samples}.

\vspace{-4mm}
\section{Discussion}
\label{section:discussion}
\vspace{-4mm}

Our work addresses key gaps in discrete-space EOT/SB research by introducing the first standardized benchmark for these methods, along with new approaches. This contribution provides ground-truth data and consistent evaluation metrics for rigorous assessment of underlying solvers. The benchmark reveals fundamental limitations of both existing and proposed approaches: DLightSB(-M) faces severe memory constraints in high dimensions, while CSBM and $\alpha$-CSBM remain sensitive to parameter choices and require long training times. These findings underscore the need for more scalable architectures and stable training procedures, and guide future research in this area.

\textbf{Reproducibility.} We provide \underline{the experimental details} in Appendix \ref{appendix:experiment_details} and the code required to reproduce the conducted experiments is available in \href{[https://github.com/gregkseno/catsbench}{this repository} (see \texttt{README.md} for details).

\textbf{LLM Usage.} Large Language Models (LLMs) were used only to assist with rephrasing sentences and improving the clarity of the text. All scientific content, results, and interpretations in this paper were developed solely by the authors.

\vspace{-2mm}
\section*{Acknowledgments}
\vspace{-2mm}
The work was supported by the grant for research centers in the field of AI provided by the Ministry of Economic Development of the Russian Federation in accordance with the agreement 000000C313925P4F0002 and the agreement №139-10-2025-033.

\bibliography{iclr2026_conference}

@inproceedings{dvurechensky2018computational,
  title={Computational optimal transport: Complexity by accelerated gradient descent is better than by {Sinkhorn}’s algorithm},
  author={Dvurechensky, Pavel and Gasnikov, Alexander and Kroshnin, Alexey},
  booktitle={International conference on machine learning},
  pages={1367--1376},
  year={2018},
  organization={PMLR}
}

@inproceedings{gat2024discrete,
  title={Discrete Flow Matching},
  author={Gat, Itai and Remez, Tal and Shaul, Neta and Kreuk, Felix and Chen, Ricky TQ and Synnaeve, Gabriel and Adi, Yossi and Lipman, Yaron},
  booktitle={The Thirty-eighth Annual Conference on Neural Information Processing Systems},
  year={2024}
}

@inproceedings{chen2021likelihood,
  title={Likelihood Training of {Schr{\"o}dinger} Bridge using Forward-Backward {SDEs} Theory},
  author={Chen, Tianrong and Liu, Guan-Horng and Theodorou, Evangelos},
  booktitle={International Conference on Learning Representations},
  year={2021}
}

@article{vargas2021solving,
  title={Solving {Schr{\"o}dinger} bridges via maximum likelihood},
  author={Vargas, Francisco and Thodoroff, Pierre and Lamacraft, Austen and Lawrence, Neil},
  journal={Entropy},
  volume={23},
  number={9},
  pages={1134},
  year={2021},
  publisher={MDPI}
}

@inproceedings{heusel2017gans,
  title={{GAN}s trained by a two time-scale update rule converge to a local {Nash} equilibrium},
  author={Heusel, Martin and Ramsauer, Hubert and Unterthiner, Thomas and Nessler, Bernhard and Hochreiter, Sepp},
  booktitle={Advances in neural information processing systems},
  pages={6626--6637},
  year={2017}
}

@inproceedings{korotin2024light,
  title={Light {Schr{\"o}dinger} Bridge},
  author={Korotin, Alexander and Gushchin, Nikita and Burnaev, Evgeny},
  booktitle={The Twelfth International Conference on Learning Representations},
  year={2024}
}

@inproceedings{gushchin2024light,
  title={Light and Optimal {Schr{\"o}dinger} Bridge Matching},
  author={Gushchin, Nikita and Kholkin, Sergei and Burnaev, Evgeny and Korotin, Alexander},
  booktitle={Forty-first International Conference on Machine Learning},
  year={2024}
}

@book{schrodinger1931umkehrung,
  title={{\"U}ber die Umkehrung der Naturgesetze},
  author={Schr{\"o}dinger, Erwin},
  year={1931},
  publisher={Verlag der Akademie der Wissenschaften in Kommission bei Walter De Gruyter u. Company}
}

@inproceedings{schrodinger1932theorie,
  title={Sur la th{\'e}orie relativiste de l'{\'e}lectron et l'interpr{\'e}tation de la m{\'e}canique quantique},
  author={Schr{\"o}dinger, Erwin},
  booktitle={Annales de l'institut Henri Poincar{\'e}},
  volume={2},
  number={4},
  pages={269--310},
  year={1932}
}

@article{hutter2021minimax,
  title={Minimax estimation of smooth optimal transport maps},
  author={H{\"u}tter, Jan-Christian and Rigollet, Philippe},
  year={2021}
}

@article{pooladian2021entropic,
  title={Entropic estimation of optimal transport maps},
  author={Pooladian, Aram-Alexandre and Niles-Weed, Jonathan},
  journal={arXiv preprint arXiv:2109.12004},
  year={2021}
}

@article{manole2021plugin,
  title={Plugin estimation of smooth optimal transport maps},
  author={Manole, Tudor and Balakrishnan, Sivaraman and Niles-Weed, Jonathan and Wasserman, Larry},
  journal={The Annals of Statistics},
  volume={52},
  number={3},
  pages={966--998},
  year={2024},
  publisher={Institute of Mathematical Statistics}
}

@article{deb2021rates,
  title={Rates of estimation of optimal transport maps using plug-in estimators via barycentric projections},
  author={Deb, Nabarun and Ghosal, Promit and Sen, Bodhisattva},
  journal={Advances in Neural Information Processing Systems},
  volume={34},
  pages={29736--29753},
  year={2021}
}

@article{korotin2021neural,
  title={Do neural optimal transport solvers work? a continuous {Wasserstein}-2 benchmark},
  author={Korotin, Alexander and Li, Lingxiao and Genevay, Aude and Solomon, Justin M and Filippov, Alexander and Burnaev, Evgeny},
  journal={Advances in Neural Information Processing Systems},
  volume={34},
  pages={14593--14605},
  year={2021}
}

@inproceedings{
korotin2022kantorovich,
title={{Kantorovich} Strikes Back! {Wasserstein} {GAN}s are not Optimal Transport?},
author={Alexander Korotin and Alexander Kolesov and Evgeny Burnaev},
booktitle={Thirty-sixth Conference on Neural Information Processing Systems Datasets and Benchmarks Track},
year={2022},
url={https://openreview.net/forum?id=VtEEpi-dGlt}
}

@article{cuturi2013sinkhorn,
  title={{Sinkhorn} distances: Lightspeed computation of optimal transport},
  author={Cuturi, Marco},
  journal={Advances in neural information processing systems},
  volume={26},
  year={2013}
}

@article{daniels2021score,
  title={Score-based generative neural networks for large-scale optimal transport},
  author={Daniels, Max and Maunu, Tyler and Hand, Paul},
  journal={Advances in neural information processing systems},
  volume={34},
  pages={12955--12965},
  year={2021}
}

@inproceedings{gushchin2023building,
  title={Building the Bridge of {Schr\"odinger}: A Continuous Entropic Optimal Transport Benchmark},
  author={Gushchin, Nikita and Kolesov, Alexander and Mokrov, Petr and Karpikova, Polina and Spiridonov, Andrey and Burnaev, Evgeny and Korotin, Alexander},
  booktitle={Thirty-seventh Conference on Neural Information Processing Systems Datasets and Benchmarks Track},
  year={2023}
}

@inproceedings{gushchin2023entropic,
  title={Entropic neural optimal transport via diffusion processes},
  author={Gushchin, Nikita and Kolesov, Alexander and Korotin, Alexander and Vetrov, Dmitry and Burnaev, Evgeny},
  booktitle={Advances in Neural Information Processing Systems},
  year={2023}
}

@inproceedings{
    mokrov2023energy,
    title={Energy-guided Entropic Neural Optimal Transport},
    author={Petr Mokrov and Alexander Korotin and Alexander Kolesov and Nikita Gushchin and Evgeny Burnaev},
    booktitle={The Twelfth International Conference on Learning Representations},
    year={2024},
    url={https://openreview.net/forum?id=d6tUsZeVs7}
}

@article{leonard2013survey,
  title={A survey of the {Schr\"odinger} problem and some of its connections with optimal transport},
  author={L{\'e}onard, Christian},
  journal={arXiv preprint arXiv:1308.0215},
  year={2013}
}

@inproceedings{
    shi2023diffusion,
    title={Diffusion {Schr\"odinger} Bridge Matching},
    author={Yuyang Shi and Valentin De Bortoli and Andrew Campbell and Arnaud Doucet},
    booktitle={Thirty-seventh Conference on Neural Information Processing Systems},
    year={2023},
    url={https://openreview.net/forum?id=qy07OHsJT5}
}

@inproceedings{
    gushchin2024adversarial,
    title={Adversarial {Schr\"odinger} Bridge Matching},
    author={Nikita Gushchin and Daniil Selikhanovych and Sergei Kholkin and Evgeny Burnaev and Alexander Korotin},
    booktitle={The Thirty-eighth Annual Conference on Neural Information Processing Systems},
    year={2024},
    url={https://openreview.net/forum?id=L3Knnigicu}
}

@article{
    peluchetti2025bm,
    title={{BM}$^2$: Coupled {S}chr{\"o}dinger Bridge Matching},
    author={Stefano Peluchetti},
    journal={Transactions on Machine Learning Research},
    issn={2835-8856},
    year={2025},
    url={https://openreview.net/forum?id=fqkq1MgONB},
    note={}
}

@inproceedings{
    de2024schr,
    title={{Schr\"odinger} Bridge Flow for Unpaired Data Translation},
    author={De Bortoli, Valentin and Korshunova, Iryna and Mnih, Andriy and Doucet, Arnaud},
    booktitle={The Thirty-eighth Annual Conference on Neural Information Processing Systems},
    year={2024},
    url={https://openreview.net/forum?id=1F32iCJFfa}
}

@article{kim2024discrete,
  title={Discrete Diffusion {Schr\"odinger} Bridge Matching for Graph Transformation},
  author={Kim, Jun Hyeong and Kim, Seonghwan and Moon, Seokhyun and Kim, Hyeongwoo and Woo, Jeheon and Kim, Woo Youn},
  journal={arXiv preprint arXiv:2410.01500},
  year={2024}
}

@article{austin2021structured,
  title={Structured denoising diffusion models in discrete state-spaces},
  author={Austin, Jacob and Johnson, Daniel D and Ho, Jonathan and Tarlow, Daniel and Van Den Berg, Rianne},
  journal={Advances in Neural Information Processing Systems},
  volume={34},
  pages={17981--17993},
  year={2021}
}

@article{hoogeboom2021argmax,
  title={Argmax flows and multinomial diffusion: Learning categorical distributions},
  author={Hoogeboom, Emiel and Nielsen, Didrik and Jaini, Priyank and Forr{\'e}, Patrick and Welling, Max},
  journal={Advances in Neural Information Processing Systems},
  volume={34},
  pages={12454--12465},
  year={2021}
}

@InProceedings{campbell2024generative,
  title = 	 {Generative Flows on Discrete State-Spaces: Enabling Multimodal Flows with Applications to Protein Co-Design},
  author =       {Campbell, Andrew and Yim, Jason and Barzilay, Regina and Rainforth, Tom and Jaakkola, Tommi},
  booktitle = 	 {Proceedings of the 41st International Conference on Machine Learning},
  pages = 	 {5453--5512},
  year = 	 {2024},
  editor = 	 {Salakhutdinov, Ruslan and Kolter, Zico and Heller, Katherine and Weller, Adrian and Oliver, Nuria and Scarlett, Jonathan and Berkenkamp, Felix},
  volume = 	 {235},
  series = 	 {Proceedings of Machine Learning Research},
  month = 	 {21--27 Jul},
  publisher =    {PMLR},
  pdf = 	 {https://raw.githubusercontent.com/mlresearch/v235/main/assets/campbell24a/campbell24a.pdf},
  url = 	 {https://proceedings.mlr.press/v235/campbell24a.html},
}

@inproceedings{esser2021taming,
  title={Taming transformers for high-resolution image synthesis},
  author={Esser, Patrick and Rombach, Robin and Ommer, Bjorn},
  booktitle={Proceedings of the IEEE/CVF conference on computer vision and pattern recognition},
  pages={12873--12883},
  year={2021}
}

@inproceedings{
    kholkin2026diffusion,
    title={Diffusion \& Adversarial Schr\"odinger Bridges via Iterative Proportional Markovian Fitting},
    author={Sergei Kholkin and Grigoriy Ksenofontov and David Li and Nikita Maksimovich Kornilov and Nikita Gushchin and Alexandra Suvorikova and Alexey Kroshnin and Evgeny Burnaev and Alexander Korotin},
    booktitle={The Fourteenth International Conference on Learning Representations},
    year={2026},
    url={https://openreview.net/forum?id=38fGCBhFF5}
}

@article{vignac2022digress,
  title={Digress: Discrete denoising diffusion for graph generation},
  author={Vignac, Clement and Krawczuk, Igor and Siraudin, Antoine and Wang, Bohan and Cevher, Volkan and Frossard, Pascal},
  journal={arXiv preprint arXiv:2209.14734},
  year={2022}
}

@article{qin2024defog,
  title={{DeFoG}: Discrete Flow Matching for Graph Generation},
  author={Qin, Yiming and Madeira, Manuel and Thanou, Dorina and Frossard, Pascal},
  journal={arXiv preprint arXiv:2410.04263},
  year={2024}
}

@article{luo2024crystalflow,
  title={{CrystalFlow}: A Flow-Based Generative Model for Crystalline Materials},
  author={Luo, Xiaoshan and Wang, Zhenyu and Lv, Jian and Wang, Lei and Wang, Yanchao and Ma, Yanming},
  journal={arXiv preprint arXiv:2412.11693},
  year={2024}
}

@article{van2017neural,
  title={Neural discrete representation learning},
  author={Van Den Oord, Aaron and Vinyals, Oriol and others},
  journal={Advances in neural information processing systems},
  volume={30},
  year={2017}
}

@inproceedings{
  ksenofontov2025categorical,
  title={Categorical {Schr\"odinger} Bridge Matching},
  author={Grigoriy Ksenofontov and Alexander Korotin},
  booktitle={Forty-second International Conference on Machine Learning},
  year={2025},
  url={https://openreview.net/forum?id=RBly0nOr2h}
}

@misc{sokolov2025exponentialconvergencerateiterative,
      title={Exponential convergence rate for Iterative Markovian Fitting}, 
      author={Kirill Sokolov and Alexander Korotin},
      year={2025},
      eprint={2508.02770},
      archivePrefix={arXiv},
      primaryClass={cs.IT},
      url={https://arxiv.org/abs/2508.02770}, 
}

@article{lou2023discrete,
  title={Discrete diffusion language modeling by estimating the ratios of the data distribution},
  author={Lou, Aaron and Meng, Chenlin and Ermon, Stefano},
  year={2023}
}

@article{campbell2022continuous,
  title={A continuous time framework for discrete denoising models},
  author={Campbell, Andrew and Benton, Joe and De Bortoli, Valentin and Rainforth, Thomas and Deligiannidis, George and Doucet, Arnaud},
  journal={Advances in Neural Information Processing Systems},
  volume={35},
  pages={28266--28279},
  year={2022}
}

@article{sahoo2024simple,
  title={Simple and effective masked diffusion language models},
  author={Sahoo, Subham and Arriola, Marianne and Schiff, Yair and Gokaslan, Aaron and Marroquin, Edgar and Chiu, Justin and Rush, Alexander and Kuleshov, Volodymyr},
  journal={Advances in Neural Information Processing Systems},
  volume={37},
  pages={130136--130184},
  year={2024}
}

@misc{basharin2025fasterlanguagemodelsbetter,
      title={Faster Language Models with Better Multi-Token Prediction Using Tensor Decomposition}, 
      author={Artem Basharin and Andrei Chertkov and Ivan Oseledets},
      year={2025},
      eprint={2410.17765},
      archivePrefix={arXiv},
      primaryClass={cs.LG},
      url={https://arxiv.org/abs/2410.17765}, 
}

@misc{cohen2016expressivepowerdeeplearning,
      title={On the Expressive Power of Deep Learning: A Tensor Analysis}, 
      author={Nadav Cohen and Or Sharir and Amnon Shashua},
      year={2016},
      eprint={1509.05009},
      archivePrefix={arXiv},
      primaryClass={cs.NE},
      url={https://arxiv.org/abs/1509.05009}, 
}

@misc{zhang2024mixedtypetabulardatasynthesis,
      title={Mixed-Type Tabular Data Synthesis with Score-based Diffusion in Latent Space}, 
      author={Hengrui Zhang and Jiani Zhang and Balasubramaniam Srinivasan and Zhengyuan Shen and Xiao Qin and Christos Faloutsos and Huzefa Rangwala and George Karypis},
      year={2024},
      eprint={2310.09656},
      archivePrefix={arXiv},
      primaryClass={cs.LG},
      url={https://arxiv.org/abs/2310.09656}, 
}

@inproceedings{
shi2025tabdiff,
title={TabDiff: a Mixed-type Diffusion Model for Tabular Data Generation},
author={Juntong Shi and Minkai Xu and Harper Hua and Hengrui Zhang and Stefano Ermon and Jure Leskovec},
booktitle={The Thirteenth International Conference on Learning Representations},
year={2025},
url={https://openreview.net/forum?id=swvURjrt8z}
}

@inproceedings{
lopez-paz2017revisiting,
title={Revisiting Classifier Two-Sample Tests},
author={David Lopez-Paz and Maxime Oquab},
booktitle={International Conference on Learning Representations},
year={2017},
url={https://openreview.net/forum?id=SJkXfE5xx}
}

@article{kingman1962imbedding,
  title={The imbedding problem for finite Markov chains},
  author={Kingman, John Frank Charles},
  journal={Zeitschrift f{\"u}r Wahrscheinlichkeitstheorie und verwandte Gebiete},
  volume={1},
  number={1},
  pages={14--24},
  year={1962},
  publisher={Springer}
}

@article{georgiou2015positive,
  title={Positive contraction mappings for classical and quantum Schr{\"o}dinger systems},
  author={Georgiou, Tryphon T and Pavon, Michele},
  journal={Journal of Mathematical Physics},
  volume={56},
  number={3},
  year={2015},
  publisher={AIP Publishing}
}
\bibliographystyle{iclr2026_conference}

\newpage
\appendix

\section{Proofs}
\label{appendix:proofs}
\begin{proof}[Proof of Theorem \ref{theorem:benchmark_construction}] 

We start from the expression of the static SB problem in \eqref{equation:static_sb}
    
\vspace{-3mm}

\begin{equation}
    \begin{split}
        \min_{q \in \Pi(p_0, p_1)}&\KL{q(x_0, x_1)}{q^{\text{ref}}(x_0, x_1)}=\\&= \min_{q \in \Pi(p_0, p_1)}-H(q) - \sum_{x_0, x_1} q(x_0, x_1)\log q^{\text{ref}}(x_1| x_0)-\text{const}\\ &=\min_{q \in \Pi(p_0, p_1)}\sum_{x_0,x_1}q(x_0, x_1)\log q(x_0, x_1) - \sum_{x_0, x_1} q(x_0, x_1)\log q^{\text{ref}}(x_1| x_0)-\text{const}
    \end{split}
\end{equation}

\vspace{-2mm}

Noting that the joint distribution can be written as $q(x_0,x_1)=q(x_0)q(x_1| x_0)=p_0(x_0)q(x_1| x_0)$, and enforcing the marginal constraints $\sum_{x_0} p_0(x_0) q(x_1| x_0)=p_1(x_1)$ and $\sum_{x_1} q(x_1| x_0)=1$ (equivalently $q(x_0)=p_0(x_0)$), the corresponding Lagrangian can be formulated as

\vspace{-3mm}
\begin{equation}
\begin{split}
    \mathcal{C}(q)&=\sum_{x_0,x_1}p_0(x_0)q(x_1|x_0)\log 
    \big(p_0(x_0)q(x_1|x_0)\big) - \sum_{x_0, x_1} p_0(x_0)q(x_1|x_0)\log q^{\text{ref}}(x_1| x_0)+\\&+\sum_{x_1}\lambda(x_1)\left(\sum_{x_0}q(x_1|x_0)p_0(x_0)-p_1(x_1)\right)+\sum_{x_0}\tau(x_0)\left(\sum_{x_1}q(x_1|x_0)-p_0(x_0)\right)\\
    &=\underbrace{\sum_{x_0,x_1}p_0(x_0)q(x_1|x_0)\log 
    p_0(x_0)}_{=\sum_{x_0}p_0(x_0)\log 
    p_0(x_0)}+\sum_{x_0,x_1}p_0(x_0)q(x_1|x_0)\log 
     q(x_1|x_0) -\\-&\sum_{x_0, x_1} p_0(x_0)q(x_1|x_0)\log q^{\text{ref}}(x_1| x_0)+\sum_{x_1}\lambda(x_1)\left(\sum_{x_0}q(x_1|x_0)p_0(x_0)-p_1(x_1)\right)\\
     &+\sum_{x_0}\tau(x_0)\left(\sum_{x_1}q(x_1|x_0)-1\right)
\end{split}
\end{equation}

where $\lambda(x_1)$ and $\tau(x_0)$ denote the Lagrange multipliers associated with the marginal constraints on $x_1$ and $x_0$, respectively. Taking the pointwise partial derivative of $\mathcal{C}(q)$ with respect to $q(x_1| x_0)$

\vspace{-3mm}
\begin{equation}
    \frac{\partial \mathcal{C}}{\partial q(x_1|x_0)}=p_0(x_0)\left(\log q(x_1|x_0)+1\right)-p_0(x_0)\log q^{\text{ref}}(x_1|x_0)+\lambda(x_1)p_0(x_0)+\tau(x_0)=0
\end{equation}

Therefore, the process $q$ can be written as

\vspace{-3mm}

\begin{equation}
    q(x_1|x_0) = \exp(-\lambda(x_1)-1)q^{\text{ref}}(x_1|x_0)\exp\left(-\frac{\tau(x_0)}{p_0(x_0)}
    \right)
\end{equation}

\vspace{-2mm}
We can then define

\vspace{-3mm}

\[
v(x_1) := \exp({-1-\lambda(x_1)}), \qquad
\psi(x_0) := \exp\left({-\frac{\tau(x_0)}{p_0(x_0)}}\right).
\]

\vspace{-2mm}
Then  \(q(x_1|x_0) = \psi(x_0)  q^{\text{ref}}(x_1|x_0)v(x_1)\) and the normalization constraint \(\sum_{x_1} q(x_1|x_0) = 1\) forces  

\[
\psi(x_0) = \frac{1}{\displaystyle\sum_{x_1} v(x_1) q^{\text{ref}}(x_1|x_0)} =: \frac{1}{c(x_0)}.
\]

Thus the optimal conditional distribution $q^*(x_1|x_0)$ is uniquely defined by  

\begin{equation}
    q^*(x_1|x_0) = \frac{1}{c^*(x_0)}v^*(x_1)  q^{\text{ref}}(x_1|x_0),
\qquad\text{with } c^*(x_0) = \sum_{x_1} v^*(x_1) q^{\text{ref}}(x_1|x_0).
\label{equation:optimal_q_star}
\end{equation}

By construction \(q^*(x_0)=p_0(x_0)\), and  \mbox{\(p_1(x_1) := q^*(x_1) = \sum_{x_0} p_0(x_0) q^*(x_1|x_0)\)} is its second marginal, so \(q^*\in\Pi(p_0,p_1)\). Because the KL divergence is strictly convex and the feasible set \(\Pi(p_0,p_1)\) is convex, the first‑order conditions are sufficient for optimality. Hence \(q^*\) is the unique minimizer of \(\displaystyle\min_{q\in\Pi(p_0,p_1)}\KL{q}{q^{\text{ref}}}\). Therefore \(q^*\) in Equation \eqref{equation:optimal_q_star} together with the reference process \(q^{\text{ref}}\) defines the discrete‑space EOT/SB between \(p_0\) and \(p_1\).

\end{proof}

\begin{proof}[Proof of Proposition \ref{prop:tractable_parameterization}]
    Assuming the CP parameterization introduced in \eqref{equation:cp_decomposition}, and further assuming that the reference process factorizes across dimensions as $q^{\text{ref}}(x_1| x_0)=\prod_{d=1}^D q^{\text{ref}}(x_1^d| x_0^d)$, the normalized conditional distribution $q^*(x_1| x_0)$ in \eqref{equation:normalized_conditional} can be rewritten as
\begin{equation}
\begin{split}
    q^*(x_1| x_0)
    &=\frac{1}{c^*(x_0)}\left(\sum_{k=1}^K \beta_k \prod_{d=1}^D r_k^d[x_1^d]\right)\prod_{d=1}^D q^{\text{ref}}(x_1^d| x_0^d)\\
    &=\frac{1}{c^*(x_0)}\sum_{k=1}^K \beta_k \prod_{d=1}^D r_k^d[x_1^d]\,q^{\text{ref}}(x_1^d| x_0^d),
\end{split}
\end{equation}
where the reference factors can be merged with the rank-1 components because they are independent of the mixture index $k$ and factorize over dimensions. From here, it is possible to obtain the normalizing constant $c^*(x_0)$ by summing over all possible values of $x_1\in\gX=\mathbb{S}^D$, where $x_1^d\in \{0,\dots,S-1\}$. The normalizing constant can then be rewritten as
\begin{equation}
\begin{split}
    c^*(x_0) &= \sum_{x_1\in \mathbb{S}^D}\sum_{k=1}^K \beta_k \prod_{d=1}^D r_k^d[x_1^d]\,q^{\text{ref}}(x_1^d| x_0^d)\\
    &=\sum_{k=1}^K\beta_k \sum_{x_1\in \mathbb{S}^D}  \prod_{d=1}^D r_k^d[x_1^d]\,q^{\text{ref}}(x_1^d| x_0^d)\\
    &=\sum_{k=1}^K\beta_k  \prod_{d=1}^D\sum_{x_1^d=0}^{S-1} r_k^d[x_1^d]\,q^{\text{ref}}(x_1^d| x_0^d),
\end{split}
\end{equation}
where $\sum_{x_1\in \mathbb{S}^D}=\sum_{x_1^1=0}^{S-1}\sum_{x_1^2=0}^{S-1}\dots\sum_{x_1^D=0}^{S-1}$. The exchange between the product and the sum is valid here because the summation is separable across dimensions, i.e., each factor depends only on its corresponding coordinate $x_1^d$.
\end{proof}

\begin{proof}[Proof of Proposition \ref{proposition:dlight_sb_loss}]
We start from the standard KL minimization problem from the LightSB paper \citep{korotin2024light} and define it in discrete space.

\vspace{-4mm}
\begin{align*}
    \begin{split}
        \KL{q^*}{q}&=\sum_{x_0,x_1}q^*(x_0,x_1)\log \left(\frac{q^*(x_0,x_1)}{q(x_0,x_1)}\right)=\sum_{x_0, x_1}q^* \log q^*(x_0,x_1)-\sum_{x_0, x_1}q^* \log q(x_0,x_1)=\\
        &=-H(q^*)-\sum_{x_0, x_1}q^*(x_0,x_1)\log q(x_0,x_1)=-H(q^*)-\sum_{x_0, x_1}q^*(x_0,x_1)\log \left(q(x_0)q(x_1|x_0)\right)\\
        &= -H(q^*)-\sum_{x_0, x_1}q^*(x_0,x_1)\log \underbrace{q(x_0)}_{=p_0(x_0)}-\sum_{x_0, x_1}q^*(x_0,x_1)\log q(x_1|x_0)=\\
        &= -H(q^*)-\sum_{x_0}\log p_0(x_0)\underbrace{\sum_{x_1}q^*(x_0,x_1)}_{=q^*(x_0)=p_0(x_0)}-\sum_{x_0, x_1}q^*(x_0,x_1)\log q(x_1|x_0)
    \end{split}
\end{align*}

Now using \eqref{equation:normalized_conditional} on $q(x_1|x_0)$ we can get

\begin{equation*}
    \begin{split}
        \KL{q^*}{q}&=-H(q^*)-\sum_{x_0}\log p_0(x_0)p_0(x_0)-\sum_{x_0, x_1}q^*(x_0,x_1)\log \left(\frac{v^*(x_1)}{c^*(x_0)} q^{\text{ref}}(x_1| x_0)\right)=\\
        &=\underbrace{-H(q^*)-\sum_{x_0}\log p_0(x_0)p_0(x_0)-\sum_{x_0, x_1}q^*(x_0,x_1)\log  q^{\text{ref}}(x_1| x_0)}_{=-\mathcal{L^*}}-\\ &-\sum_{x_0, x_1}q^*(x_0,x_1)\log \left(\frac{v^*(x_1)}{c^*(x_0)}\right)=\\
        &=-\mathcal{L}^*+\sum_{x_0, x_1}q^*(x_0,x_1)\log c^*(x_0)-\sum_{x_0, x_1}q^*(x_0,x_1)\log v^*(x_1)=\\
        &=\sum_{x_0}p_0^*(x_0)\log c^*(x_0)-\sum_{x_1}q^*(x_1)\log v^*(x_1) -\mathcal{L}^*\\
        &=\mathbb{E}_{p_0(x_0)}\big[\log c_{\theta}(x_0)\big] -\mathbb{E}_{p_1(x_1)}\big[\log v_{\theta}(x_1)\big]-\mathcal{L}^*,
    \end{split}
\end{equation*}

That concludes the proof.
\end{proof}

\begin{proof}[Proof of Proposition \ref{proposition:optimal_projection}]
    \begin{align}
        \KL{r(x_0, x_{\text{in}}, x_1)}{q^{\text{SB}}(x_0, x_{\text{in}}, x_1)} = \nonumber \\
        = \KL{r(x_0, x_1)}{q^{\text{SB}}(x_0, x_1)} + \underbrace{\KL{q^{\text{ref}}(x_{\text{in}} | x_0, x_1)}{q^{\text{ref}}(x_{\text{in}}|x_0, x_1)}}_{=0} = \label{equation:op_proof_1} \\
        = \underbrace{\sum_{x_0, x_1}r(x_0, x_1)\log r(x_0,x_1)}_{=-H(r(x_0, x_1))} - \sum_{x_0, x_1}r(x_0, x_1)\log q^{\text{SB}}(x_0, x_1) = \nonumber \\
        = -H(r(x_0, x_1)) - \sum_{x_0,x_1}r(x_0, x_1) \log \frac{v^{\text{SB}}(x_1)q^{\text{ref}}(x_1 | x_0)}{c^{\text{SB}}(x_0)} = \label{equation:op_proof_2} \\
        = -H(r(x_0, x_1)) - \sum_{x_0,x_1}r(x_0, x_1) \log v^{\text{SB}}(x_1) - \nonumber \\ - \sum_{x_0,x_1}r(x_0, x_1)\log q^{\text{ref}}(x_1 | x_0) + \sum_{x_0,x_1}r(x_0, x_1)\log c^{\text{SB}}(x_0) = \nonumber \\
        = -H(r(x_0, x_1)) - \sum_{x_1}\log v^{\text{SB}}(x_1) \underbrace{r(x_1)}_{=p(x_1)=q^{*}(x_1)}\underbrace{\sum_{x_0}r(x_0 | x_1)}_{=1=\sum_{x_0}q^*(x_0 | x_1)} - \label{equation:op_proof_3} \\ - \sum_{x_0,x_1}r(x_0, x_1)\log q^{\text{ref}}(x_1 | x_0) + \sum_{x_0}\log c^{\text{SB}}(x_0) \underbrace{r(x_0)}_{=p(x_0) = q^*(x_0)}\underbrace{\sum_{x_1}r(x_1 | x_0)}_{=1 = \sum_{x_1}q^{*}(x_1 | x_0)} = \label{equation:op_proof_4} \\
        = \underbrace{-H(r(x_0, x_1)) - \sum_{x_0,x_1}r(x_0, x_1)\log q^{\text{ref}}(x_1 | x_0)}_{=C_1} - \sum_{x_0,x_1}q^*(x_0, x_1)\log \frac{v^{\text{SB}}(x_1)}{c^{\text{SB}}(x_0)} = \nonumber \\
        = C_1 - \sum_{x_0,x_1}q^*(x_0, x_1)\log \frac{v^{\text{SB}}(x_1)}{c^{\text{SB}}(x_0)} - \nonumber \\ -\underbrace{\sum_{x_0,x_1}q^*(x_0, x_1)\log q^{\text{ref}}(x_1 | x_0) + \sum_{x_0,x_1}q^*(x_0, x_1)\log q^{\text{ref}}(x_1 | x_0)}_{=0} = \label{equation:op_proof_5} \\
        = - \sum_{x_0,x_1}q^*(x_0, x_1)\log \frac{v^{\text{SB}}(x_1)q^{\text{ref}}(x_1 | x_0)}{c^{\text{SB}}(x_0)} + \underbrace{C_1 + \sum_{x_0,x_1}q^*(x_0, x_1)\log q^{\text{ref}}(x_1 | x_0)}_{=C_2} = \nonumber \\
        = C_2 - \sum_{x_0,x_1}q^*(x_0, x_1)\log q^{\text{SB}}(x_0, x_1) + \nonumber \\
        + \underbrace{\sum_{x_0, x_1}q^*(x_0, x_1)\log q^*(x_0,x_1) - \sum_{x_0, x_1}q^*(x_0, x_1)\log q^*(x_0,x_1)}_{=0} = \label{equation:op_proof_6} \\
        = \sum_{x_0,x_1}q^*(x_0, x_1)\log \frac{q^{*}(x_0, x_1)}{q^{\text{SB}}(x_0, x_1)} + \underbrace{C_2 - \sum_{x_0, x_1}q^*(x_0, x_1)\log q^*(x_0,x_1)}_{C_3} = \nonumber \\
        = \KL{q^*(x_0, x_1)}{q^{\text{SB}}(x_0, x_1)} + C_3 \nonumber
    \end{align}

    In \eqref{equation:op_proof_1}, we use the disintegration of the KL divergence to transition from the dynamic to the static formulation. In \eqref{equation:op_proof_2}, we apply our parameterization from \eqref{equation:normalized_conditional}. Next, in \eqref{equation:op_proof_3} and \eqref{equation:op_proof_4}, we use the properties of the reciprocal process $r$, which has the true marginals at $t = 0$ and $t = 1$. In \eqref{equation:op_proof_5}, we add a zero term to introduce $q^{\text{ref}}(x_1 | x_0)$ with the expectation taken over the optimal joint distribution $q^*(x_0, x_1)$. Finally, in \eqref{equation:op_proof_6}, we obtain the entropy term, completing the expression for the desired KL divergence. This establishes that optimizing $\KL{r(x_0,x_1)}{q^{\text{SB}}(x_0, x_1)}$ with respect to $q^{\text{SB}}$ is equivalent to optimizing $\KL{q^*(x_0,x_1)}{q^{\text{SB}}(x_0, x_1)}$ up to an additive constant.
\end{proof}

\vspace{-5mm}
\begin{proof}[Proof of Proposition \ref{proposition:sb_trainsition_distribution}]
    We first derive the transitional distributions of the SB by recalling its well-known characterization \citep[Thm.~2]{georgiou2015positive}:
    \begin{equation*}
        q^{*}\left(x_{t_{n}}| x_{t_{n-1}}\right) = q^{\text{ref}}\left(x_{t_{n}}| x_{t_{n-1}}\right)\frac{\phi^{*}_{t_{n}}(x_{t_{n}})}{\phi^{*}_{t_{n-1}}(x_{t_{n-1}})}, \qquad \phi^{*}_{t_{n}}(x_{t_n}) =\mathbb{E}_{q^{\mathrm{ref}}(x_1| x_{t_n})} \big[v^{*}(x_1)\big].
    \end{equation*}
    
    Using the CP parametrization of $v^{*}$ from \eqref{equation:cp_decomposition} and exploiting the factorization of $q^{\text{ref}}$, the scalar-valued functions $\phi_{t_n}$ can be written as:
    \begin{equation*}
        \phi^{*}_{t_n}(x_{t_n}) = \sum_{k=1}^K \beta_k \prod_{d=1}^D \mathbb{E}_{q^{\text{ref}}(x_1^d | x_{t_n}^d)}\big[r_k^d[x_1^d]\big] = \sum_{k=1}^K \beta_k \prod_{d=1}^D \underbrace{\sum_{x_1^d=0}^{S-1} q^{\text{ref}}(x_1^d| x_{t_n}^d)\, r_k^d[x_1^d]}_{u_{k,t_n}^d[x_{t_n}^d]},
    \end{equation*}
    where $u_{k,t_n}^d$ satisfy the following recursive relation:
    \begin{equation*}
        u_{k,t_n}^d[x_{t_n}^d] =\sum_{x_{t_{n+1}}^d=0}^{S-1} q^{\text{ref}}(x_{t_{n+1}}^d | x_{t_n}^d)\, u_{k,t_{n+1}}^d[x_{t_{n+1}}^d], \qquad u_{k,1}^d=r_k^d.
    \end{equation*}

    Thus, we obtain the following transition distributions:
    \begin{equation}
    \label{equation:characterization_proof_1}
        q^{*}(x_{t_{n}} | x_{t_{n-1}}) \propto  q^{\text{ref}}(x_{t_{n}} | x_{t_{n-1}}) \sum_{k=1}^K \beta_k \prod_{j=1}^D u_{k,t_{n}}^j[x_{t_{n}}^j].
    \end{equation}
    
    To obtain the $d$-th marginal transition distribution, we marginalize over $x_{t_{n}}^{-d} \coloneqq \{x_{t_{n}}^j\}_{j \neq d}$ as follows:
    \begin{multline*}
        q^{*}\big(x_{t_{n}}^d | x_{t_{n-1}}\big) \propto  \sum_{x_{t_{n}}^{-d}} \Bigg(\prod_{j=1}^Dq^{\text{ref}}(x_{t_{n}}^j | x^j_{t_{n-1}})\Bigg)\Bigg(\sum_{k=1}^K \beta_k \prod_{j=1}^D u_{k,t_{n}}^j[x_{t_{n}}^j]\Bigg) = \\ 
        = q^{\text{ref}}(x_{t_{n}}^d | x_{t_{n-1}}^d)\sum_{k=1}^K \beta_k u_{k,t_{n}}^d[x_{t_{n}}^d]\prod^D_{\substack{j=1\\j\neq d}}\underbrace{\sum_{x_{t_n}^j}q^{\text{ref}}(x_{t_n}^j | x_{t_{n-1}}^j)\,u_{k,t_n}^j[x_{t_n}^j]}_{u_{k,t_{n-1}}^j[x_{t_{n-1}}^j]\ \text{(by recursion)}}. \nonumber
    \end{multline*}
    
    Finally, we obtain the desired expression up to normalization:
    \begin{equation}
        \label{equation:marginal_sb_trainsition_distribution}
        q^{*}\big(x_{t_{n}}^d | x_{t_{n-1}}\big) \propto [Q^{\text{ref}}]_{x_{t_{n-1}}^d,x_{t_{n}}^d}\sum_{k=1}^K \beta_k u_{k,t_{n}}^d[x_{t_{n}}^d]\prod^D_{\substack{j=1\\j\neq d}}u_{k,t_{n-1}}^j[x_{t_{n-1}}^j].
    \end{equation}
\end{proof}

\section{Closed-form of the Conditional Distribution for the Uniform Reference Process}
\label{appendix:static_q_ref}
It is also useful to derive a closed-form expression for $q^{\text{ref}}(x^d_1 | x^d_0)$. Since this conditional distribution is given by the $(N\!+\!1)$-step transition of the reference chain, it can be obtained from the $(N\!+\!1)$-th power of the transition matrix $Q^{\text{ref}}$. In the uniform case, this yields:
\begin{equation}
    \overline{Q}^{\text{ref}}_{N+1} = \left(1-\gamma\frac{S}{S-1}\right)^{N+1}\mathbb{I}+\frac{1-\left(1-\gamma\frac{S}{S-1}\right)^{N+1}}{S}\mathbf{1}\mathbf{1}^\top,
    \label{equation:static_q_ref}
\end{equation}
where $\mathbf{1}\!=\![1,\dots, 1]^{\top}\!\!\in\!\mathbb{R}^S$ is a vector of ones and $\mathbb{I}$ is an $S\!\times \!S$ identity matrix.

    \begin{proof}[Proof of Equation \ref{equation:static_q_ref}]
    Let $ Q $ be the transition matrix in \eqref{equation:piref_uniform}, rewritten as
    \begin{align*}
    Q &= (1-\gamma) I + \frac{\gamma}{S-1} (\mathbf{1}\mathbf{1}^\top - I) \\
      &= \left(1 - \gamma \frac{S}{S-1}\right) I + \frac{\gamma}{S-1} \mathbf{1}\mathbf{1}^\top,
    \end{align*}
    where $ I $ is the identity matrix and $ \mathbf{1}\mathbf{1}^\top $ is the all-ones matrix. Let
    \[
    a = 1 - \gamma \frac{S}{S-1}, \quad b = \frac{\gamma}{S-1},
    \]
    so that $ Q = aI + b\mathbf{1}\mathbf{1}^\top $ and note that $ a + Sb = 1 $. We compute $ Q^{N+1} $ using the binomial expansion. Since $ I $ and $ \mathbf{1}\mathbf{1}^\top $ commute:
    \begin{align*}
    Q^n &= (aI + b\mathbf{1}\mathbf{1}^\top)^n \\
        &= \sum_{k=0}^n \binom{n}{k} a^{n-k} b^k (\mathbf{1}\mathbf{1}^\top)^k.
    \end{align*}
    
    Using $ (\mathbf{1}\mathbf{1}^\top)^k = S^{k-1} \mathbf{1}\mathbf{1}^\top $ for $ k \ge 1 $ and separating the $ k=0 $ term:
    \begin{align*}
    Q^n &= a^n I + \sum_{k=1}^n \binom{n}{k} a^{n-k} b^k S^{k-1} \mathbf{1}\mathbf{1}^\top \\
        &= a^n I + \frac{1}{S} \left( \sum_{k=1}^n \binom{n}{k} a^{n-k} (bS)^k \right) \mathbf{1}\mathbf{1}^\top.
    \end{align*}
    
    The binomial expansion gives:
    \[
    (a + bS)^n = \sum_{k=0}^n \binom{n}{k} a^{n-k} (bS)^k = a^n + \sum_{k=1}^n \binom{n}{k} a^{n-k} (bS)^k.
    \]
    
    Since $ a + bS = 1 $, we have $ (a + bS)^n = 1 $, so
    $
    \sum_{k=1}^n \binom{n}{k} a^{n-k} (bS)^k = 1 - a^n
    $. Thus,
    \[
    Q^n = a^n I + \frac{1 - a^n}{S} \mathbf{1}\mathbf{1}^\top.
    \]
    
    Substituting $ n = N+1 $ and $ a = 1 - \gamma \frac{S}{S-1} $ yields
    \[
    q^{\text{ref}}(x^d_1|x^d_0) = Q^{N+1} = \left(1-\gamma\frac{S}{S-1}\right)^{N+1} I + \frac{1-\left(1-\gamma\frac{S}{S-1}\right)^{N+1}}{S} \mathbf{1}\mathbf{1}^\top.
    \]
\end{proof}

From here it can be seen that $\overline{Q}^{\text{ref}}_{N+1}$ converges to $(1/S)\mathbf{1}\mathbf{1}^\top$ when $(N\!+\!1)\rightarrow\infty$, that is a uniform distribution over the number of categories $S$. In the case of the Gaussian reference process, the closed-form expression can also be obtained, but it is much more complex.

\section{Experiment Details}
\label{appendix:experiment_details}

This section provides detailed descriptions of all methods and their configurations.


\paragraph{Shared Aspects.} Across all experiments, we use the AdamW optimizer with fixed \texttt{beta} values of $0.95$ and $0.99$. For the high-dimensional Gaussian benchmark (\secref{section:hd_experiments}). Notably, for diffusion-based methods, we fully sample the Markov chain, in contrast to \citet{austin2021structured}, which applies an \texttt{argmax} operation at the final timestep. 

\paragraph{CSBM and $\alpha$-CSBM.}  
For CSBM and $\alpha$-CSBM, we use the official implementation from \citet{ksenofontov2025categorical}:
\begin{center}
    \url{https://github.com/gregkseno/csbm}.
\end{center}

To stabilize training and improve final performance, we apply Exponential Moving Average (EMA) parameter updates with a decay rate of $0.999$, tuned consistently across all experiments. Unlike \citet{austin2021structured}, we omit the $L_{\text{simple}}$ loss during training. We employ a simple MLP with three hidden layers of size $[128, 128, 128]$ and ReLU activations. Time conditioning is implemented via an embedding layer of the same size as dimensions, $D$. Both methods are trained for $5$ D-IMF iterations, using $120\ 000$ gradient updates in the first iteration and $40\ 000$ in each subsequent iteration. For $\alpha$-CSBM, we use a learning rate of $10^{-3}$ and halve the batch size for training a single model, following \citet{de2024schr}. For CSBM, we use a learning rate of $10^{-4}$. The D-IMF procedure for both solvers is initialized using an independent joint distribution ($q^0(x_0, x_1) = p_0(x_0)p_1(x_1)$).


\paragraph{DLightSB and DLightSB-M.} For all benchmark experiments, both methods use $K=1000$ components initialized from data samples and are trained for $100\ 000$ gradient updates. The learning rate is set to $10^{-2}$ for both, with DLightSB-M using independent joint distribution \mbox{($q^0(x_0, x_1) = p_0(x_0)p_1(x_1)$)}.


\paragraph{Computational Resources and Training Time.} All high-dimensional Gaussian mixture benchmark experiments were conducted on 1 A100 GPU unless otherwise specified, with training times reported inclusive of evaluation. For $D=2$, training is relatively short: CSBM and $\alpha$-CSBM each complete within about $5$ hours, DLightSB-M within $4$ hours, and DLightSB in roughly $20$ minutes. For $D=64$, CSBM completes in under $14$ hours, $\alpha$-CSBM in under $9$ hours, DLightSB-M in just under 2 days (on 2 A100 GPUs), and DLightSB in under $7$ hours. 

\section{Additional Experiments}
\label{appendix:additional_experiments}

\subsection{Reverse benchmark}
\label{appendix:reverse_benchmark}
In this section, we try to overcome the inherited inductive bias of DLightSB(-M) solvers. By construction, the forward conditional distribution $q^*(x_1 | x_0)$ admits a CP decomposition, while the reverse distribution $q^*(x_0 | x_1)$ does not. As a result, when the benchmark is used in the reverse direction with the same marginals $p_0$ and $p_1$, DLightSB(-M) methods can no longer rely on the inductive bias that benefits them in the forward setup.

Unfortunately, in this setup, the true conditional distributions are not available, so we cannot compute conditional metrics. To overcome this restriction, we decided to compute the Classifier Two Sample Test \citep[C2ST]{lopez-paz2017revisiting} metric, ROC AUC of classifier between pairs $(x_0, x_1) \sim p_1(x_1)q^*(x_0 | x_1)$ and $(\hat{x}_0, x_1) \sim p_1(x_1)q_\theta(x_0 | x_1)$. As the classifier, we used two layer MLP with ReLU activations that takes as input the concatenation of one-hot vectors of $x_0$ and $x_1$. We present C2ST scores in the following table.

\begin{table*}[h]
    \resizebox{\textwidth}{!}{%
    \begin{tabular}{rcccccccccccccc}
        \toprule
        & & & \multicolumn{4}{c}{$D\!=\!2$} & \multicolumn{4}{c}{$D\!=\!16$} & \multicolumn{4}{c}{$D\!=\!64$} \\
        \cmidrule(lr){4-7} \cmidrule(lr){8-11} \cmidrule(l){12-15}
        & & & \multicolumn{2}{c}{gaussian} & \multicolumn{2}{c}{uniform} & \multicolumn{2}{c}{gaussian} & \multicolumn{2}{c}{uniform} & \multicolumn{2}{c}{gaussian} & \multicolumn{2}{c}{uniform} \\
        \cmidrule(lr){4-5} \cmidrule(lr){6-7} \cmidrule(l){8-9} \cmidrule(lr){10-11} \cmidrule(lr){12-13} \cmidrule(l){14-15}
        Method & Loss & $N+1$ & {$0.02$} & {$0.05$} & {$0.005$} & {$0.01$} & {$0.02$} & {$0.05$} & {$0.005$} & {$0.01$} & {$0.02$} & {$0.05$} & {$0.005$} & {$0.01$} \\
        \midrule
        \multirow{1}{*}{DLightSB} & -- & -- & \scorecell{0.926} & \scorecell{0.998} & \scorecell{0.996} & \scorecell{0.985} & \scorecell{0.961} & \scorecell{0.971} & \scorecell{0.993} & \scorecell{0.996} & \scorecell{0.972} & \scorecell{0.993} & \scorecell{0.985} & \scorecell{0.990} \\
        \midrule
        \multirow{4}{*}{CSBM} & \multirow{2}{*}{KL} & 16 & \scorecell{0.990} & \scorecell{0.991} & \scorecell{1.000} & \scorecell{0.996} & \scorecell{0.979} & \scorecell{0.990} & \scorecell{0.999} & \scorecell{0.988} & \scorecell{0.990} & \scorecell{0.990} & \scorecell{0.991} & \scorecell{0.997} \\
         &  & 64 & \scorecell{0.995} & \scorecell{1.000} & \scorecell{0.992} & \scorecell{0.998} & \scorecell{0.991} & \scorecell{0.982} & \scorecell{0.986} & \scorecell{0.981} & \scorecell{0.999} & \scorecell{0.999} & \scorecell{0.994} & \scorecell{0.999} \\
         & \multirow{2}{*}{MSE} & 16 & \scorecell{0.952} & \scorecell{0.996} & \scorecell{0.987} & \scorecell{0.997} & \scorecell{0.998} & \scorecell{0.976} & \scorecell{0.995} & \scorecell{0.985} & \scorecell{0.987} & \scorecell{0.997} & \scorecell{0.983} & \scorecell{0.999} \\
         &  & 64 & \scorecell{0.900} & \scorecell{0.990} & \scorecell{0.993} & \scorecell{0.981} & \scorecell{0.985} & \scorecell{0.992} & \scorecell{0.998} & \scorecell{0.973} & \scorecell{0.987} & \scorecell{0.997} & \scorecell{1.000} & \scorecell{0.999} \\
        \bottomrule
    \end{tabular}}
    \caption{\centering C2ST metric ($\uparrow$) on our high-dimensional Gaussian mixture benchmark. Color code: vermillion for $<0.5$, orange for $[0.5,0.75)$, yellow for $[0.75,0.85)$, and bluish-green for $\geq 0.85$.}
    \label{table:c2st_scores}
\end{table*}

As can be seen from Table~\ref{table:c2st_scores}, computed metric values are not informative. Across all methods, the metric values are nearly identical, indicating that the classifier is not capable of distinguishing generated samples from real ones. As a result, we decided to discard this metric.

\subsection{Additional Metrics and Plots}
\label{appendix:additional_experiments_main}

\begin{table*}
    \resizebox{\columnwidth}{!}{%
    \begin{tabular}{l c ccccccccccccc}
        \toprule
        \multicolumn{3}{c}{} & \multicolumn{4}{c}{$D=2$} & \multicolumn{4}{c}{$D=16$} & \multicolumn{4}{c}{$D=64$} \\
        \cmidrule(lr){4-7} \cmidrule(lr){8-11} \cmidrule(lr){12-15}
        \multicolumn{3}{c}{} & \multicolumn{2}{c}{gaussian} & \multicolumn{2}{c}{uniform} & \multicolumn{2}{c}{gaussian} & \multicolumn{2}{c}{uniform} & \multicolumn{2}{c}{gaussian} & \multicolumn{2}{c}{uniform} \\
        \cmidrule(lr){4-5} \cmidrule(lr){6-7} \cmidrule(lr){8-9} \cmidrule(lr){10-11} \cmidrule(lr){12-13} \cmidrule(lr){14-15}
        Method & Loss & $N+1$ & $0.02$ & $0.05$ & $0.005$ & $0.01$ & $0.02$ & $0.05$ & $0.005$ & $0.01$ & $0.02$ & $0.05$ & $0.005$ & $0.01$ \\
        \midrule 
        \multirow{1}{*}{\textit{Reference}} & -- & -- & 
        \scoreklcell{0.501} & \scoreklcell{0.637} & \scoreklcell{0.272} & \scoreklcell{0.509} & \underline{\scoreklcell{0.526}} & \scoreklcell{0.666} & \scoreklcell{0.301} & \scoreklcell{0.527} &  \scoreklcell{0.502} & \scoreklcell{0.637} & \scoreklcell{0.278} & \scoreklcell{0.498} \\
        \multirow{1}{*}{\textit{Feature-wise SB}} & -- & -- & 
        \underline{\scoreklcell{0.036}} & \underline{\scoreklcell{0.029}} & \underline{\scoreklcell{0.019}} & \underline{\scoreklcell{0.014}} & \textbf{\scoreklcell{0.037}} & \underline{\scoreklcell{0.036}} & \underline{\scoreklcell{0.058}} & \underline{\scoreklcell{0.057}} & \textbf{\scoreklcell{0.006}} & \textbf{\scoreklcell{0.042}} & \textbf{\scoreklcell{0.037}} & \textbf{\scoreklcell{0.04}} \\
        \midrule
        \multirow{1}{*}{DLightSB} & \multirow{1}{*}{--} & -- & 
        \textbf{\scoreklcell{0.0}} & \textbf{\scoreklcell{0.0}} & \textbf{\scoreklcell{0.0}} & \textbf{\scoreklcell{0.0}} & \scoreklcell{1.8} & \textbf{\scoreklcell{0.0}} & \textbf{\scoreklcell{0.0}} & \textbf{\scoreklcell{0.0}} & \underline{\scoreklcell{0.3}} & \underline{\scoreklcell{0.1}} & \underline{\scoreklcell{0.1}} & \underline{\scoreklcell{0.1}} \\
        \midrule
        \multirow{4}{*}{CSBM} & \multirow{2}{*}{KL} & 16 & \scoreklcell{8.0} & \scoreklcell{32.1} & \scoreklcell{4.7} & \scoreklcell{8.1} & \scoreklcell{18.4} & \scoreklcell{189.1} & \scoreklcell{70.8} & \scoreklcell{104.9} & \scoreklcell{16.8} & \scoreklcell{22.0} & \scoreklcell{15.2} & \scoreklcell{14.1} \\
         &  & 64 & \scoreklcell{1.5} & \scoreklcell{7.4} & \scoreklcell{1.0} & \scoreklcell{1.5} & \scoreklcell{2.8} & \scoreklcell{21.0} & \scoreklcell{7.9} & \scoreklcell{8.8} & \scoreklcell{22.8} & \scoreklcell{2.3} & \scoreklcell{4.0} & \scoreklcell{3.6} \\
         & \multirow{2}{*}{MSE} & 16 & \scoreklcell{14.0} & \scoreklcell{23.8} & \scoreklcell{7.9} & \scoreklcell{7.4} & \scoreklcell{28.7} & \scoreklcell{182.9} & \scoreklcell{108.0} & \scoreklcell{156.8} & \scoreklcell{47.5} & \scoreklcell{24.9} & \scoreklcell{36.8} & \scoreklcell{25.8} \\
         &  & 64 & \scoreklcell{9.5} & \scoreklcell{10.3} & \scoreklcell{3.2} & \scoreklcell{3.1} & \scoreklcell{6.7} & \scoreklcell{45.0} & \scoreklcell{25.2} & \scoreklcell{32.1} & \scoreklcell{59.3} & \scoreklcell{2.1} & \scoreklcell{14.8} & \scoreklcell{10.9} \\
        \midrule
        \multirow{4}{*}{$\alpha$-CSBM} & \multirow{2}{*}{KL} & 16 & \scoreklcell{6.4} & \scoreklcell{23.7} & \scoreklcell{4.2} & \scoreklcell{6.4} & \scoreklcell{13.3} & \scoreklcell{103.4} & \scoreklcell{61.1} & \scoreklcell{81.5} & \scoreklcell{26.1} & \scoreklcell{3.5} & \scoreklcell{4.9} & \scoreklcell{5.2} \\
         &  & 64 & \scoreklcell{1.3} & \scoreklcell{4.9} & \scoreklcell{0.7} & \scoreklcell{0.7} & \scoreklcell{3.5} & \scoreklcell{1.1} & \scoreklcell{2.1} & \scoreklcell{2.2} & \scoreklcell{24.5} & \scoreklcell{2.1} & \scoreklcell{4.5} & \scoreklcell{4.6} \\
         & \multirow{2}{*}{MSE} & 16 & \scoreklcell{7.6} & \scoreklcell{17.5} & \scoreklcell{5.6} & \scoreklcell{5.2} & \scoreklcell{16.1} & \scoreklcell{112.3} & \scoreklcell{52.8} & \scoreklcell{72.3} & \scoreklcell{46.2} & \scoreklcell{3.4} & \scoreklcell{17.9} & \scoreklcell{12.0} \\
         &  & 64 & \scoreklcell{1.0} & \scoreklcell{3.1} & \scoreklcell{0.9} & \scoreklcell{0.8} & \scoreklcell{9.4} & \scoreklcell{4.7} & \scoreklcell{5.6} & \scoreklcell{4.4} & \scoreklcell{58.8} & \scoreklcell{2.0} & \scoreklcell{15.5} & \scoreklcell{11.0} \\
        \midrule
        \multirow{4}{*}{DLightSB-M} & \multirow{2}{*}{KL} & 16 & \scoreklcell{0.1} & \textbf{\scoreklcell{0.0}} & \textbf{\scoreklcell{0.0}} & \textbf{\scoreklcell{0.0}} & \scoreklcell{3.2} & \scoreklcell{0.1} & \scoreklcell{0.2} & \scoreklcell{0.2} & \scoreklcell{6.2} & \scoreklcell{1.3} & \scoreklcell{26.4} & \scoreklcell{22.3} \\
         &  & 64 & \scoreklcell{0.2} & \textbf{\scoreklcell{0.0}} & \textbf{\scoreklcell{0.0}} & \textbf{\scoreklcell{0.0}} & \scoreklcell{1.2} & \scoreklcell{0.2} & \scoreklcell{0.4} & \scoreklcell{0.3} & \scoreklcell{6.6} & \scoreklcell{0.8} & \scoreklcell{22.9} & \scoreklcell{7.3} \\
         & \multirow{2}{*}{MSE} & 16 & \scoreklcell{0.8} & \textbf{\scoreklcell{0.0}} & \scoreklcell{0.2} & \textbf{\scoreklcell{0.0}} & \scoreklcell{22.0} & \scoreklcell{0.2} & \scoreklcell{0.5} & \scoreklcell{0.3} & \scoreklcell{26.3} & \scoreklcell{1.4} & \scoreklcell{8.8} & \scoreklcell{48.3} \\
         &  & 64 & \scoreklcell{1.0} & \textbf{\scoreklcell{0.0}} & \scoreklcell{0.2} & \scoreklcell{0.1} & \scoreklcell{24.8} & \scoreklcell{0.3} & \scoreklcell{0.6} & \scoreklcell{0.4} & \scoreklcell{58.8} & \scoreklcell{2.3} & \scoreklcell{41.0} & \scoreklcell{17.9} \\
        \bottomrule
    \end{tabular}}
    \caption{\centering Trajectory KL divergence ($\downarrow$) on the high-dimensional Gaussian mixture benchmark. The best-performing method is highlighted in bold, and the second is underlined. Color code: bluish-green for $<0.5$, yellow for $[0.5,2)$, orange for $[2,10)$, and vermillion for $\geq 10$.
}
\label{table:forward_kl}
\end{table*}

\begin{table*}
    \resizebox{\columnwidth}{!}{%
    \begin{tabular}{l c ccccccccccccc}
        \toprule
        \multicolumn{3}{c}{} & \multicolumn{4}{c}{$D=2$} & \multicolumn{4}{c}{$D=16$} & \multicolumn{4}{c}{$D=64$} \\
        \cmidrule(lr){4-7} \cmidrule(lr){8-11} \cmidrule(lr){12-15}
        \multicolumn{3}{c}{} & \multicolumn{2}{c}{gaussian} & \multicolumn{2}{c}{uniform} & \multicolumn{2}{c}{gaussian} & \multicolumn{2}{c}{uniform} & \multicolumn{2}{c}{gaussian} & \multicolumn{2}{c}{uniform} \\
        \cmidrule(lr){4-5} \cmidrule(lr){6-7} \cmidrule(lr){8-9} \cmidrule(lr){10-11} \cmidrule(lr){12-13} \cmidrule(lr){14-15}
        Method & Loss & $N+1$ & $0.02$ & $0.05$ & $0.005$ & $0.01$ & $0.02$ & $0.05$ & $0.005$ & $0.01$ & $0.02$ & $0.05$ & $0.005$ & $0.01$ \\
        \midrule
        \multirow{1}{*}{\textit{Reference}} & -- & -- & 
        \scoreklcell{2.1} & \scoreklcell{3.0} & \scoreklcell{0.8} & \scoreklcell{1.3} & \underline{\scoreklcell{2.2}} & \scoreklcell{4.6} & \scoreklcell{1.4} & \scoreklcell{1.5} & \scoreklcell{1.4} & \scoreklcell{3.5} & \scoreklcell{1.0} & \scoreklcell{1.2} \\
        \multirow{1}{*}{\textit{Feature-wise SB}} & -- & -- & 
        \underline{\scoreklcell{0.2}} & \scoreklcell{0.2} & \textbf{\scoreklcell{0.0}} & \textbf{\scoreklcell{0.0}} & \textbf{\scoreklcell{0.2}} & \underline{\scoreklcell{0.5}} & \underline{\scoreklcell{0.3}} & \underline{\scoreklcell{0.3}} & \textbf{\scoreklcell{0.0}} & \underline{\scoreklcell{0.4}} & \underline{\scoreklcell{0.3}} & \underline{\scoreklcell{0.4}} \\
        \midrule
        \multirow{1}{*}{DLightSB} & \multirow{1}{*}{--} & -- & 
        \textbf{\scoreklcell{0.0}} & \textbf{\scoreklcell{0.0}} & \textbf{\scoreklcell{0.0}} & \textbf{\scoreklcell{0.0}} & \scoreklcell{5.7} & \textbf{\scoreklcell{0.0}} & \textbf{\scoreklcell{0.0}} & \textbf{\scoreklcell{0.0}} & \underline{\scoreklcell{0.3}} & \textbf{\scoreklcell{0.1}} & \textbf{\scoreklcell{0.1}} & \textbf{\scoreklcell{0.1}} \\
        \midrule
        \multirow{4}{*}{CSBM} & \multirow{2}{*}{KL} & 16 & \scoreklcell{1.5} & \scoreklcell{3.7} & \scoreklcell{1.0} & \scoreklcell{1.1} & \scoreklcell{7.7} & \scoreklcell{22.1} & \scoreklcell{50.9} & \scoreklcell{71.4} & \scoreklcell{15.8} & \scoreklcell{13.0} & \scoreklcell{16.6} & \scoreklcell{20.2} \\
         &  & 64 & \scoreklcell{0.4} & \scoreklcell{1.8} & \scoreklcell{0.4} & \scoreklcell{0.4} & \scoreklcell{2.7} & \scoreklcell{7.1} & \scoreklcell{10.3} & \scoreklcell{10.8} & \scoreklcell{22.3} & \scoreklcell{2.5} & \scoreklcell{11.3} & \scoreklcell{10.8} \\
         & \multirow{2}{*}{MSE} & 16 & \scoreklcell{4.2} & \scoreklcell{2.5} & \scoreklcell{3.3} & \scoreklcell{4.3} & \scoreklcell{8.1} & \scoreklcell{19.5} & \scoreklcell{88.3} & \scoreklcell{122.3} & \scoreklcell{55.4} & \scoreklcell{12.2} & \scoreklcell{41.8} & \scoreklcell{34.7} \\
         &  & 64 & \scoreklcell{10.2} & \scoreklcell{1.8} & \scoreklcell{2.1} & \scoreklcell{2.2} & \scoreklcell{4.4} & \scoreklcell{9.1} & \scoreklcell{22.2} & \scoreklcell{24.4} & \scoreklcell{75.6} & \scoreklcell{2.6} & \scoreklcell{38.5} & \scoreklcell{39.0} \\
        \midrule
        \multirow{4}{*}{$\alpha$-CSBM} & \multirow{2}{*}{KL} & 16 & \scoreklcell{1.7} & \scoreklcell{3.3} & \scoreklcell{0.8} & \scoreklcell{0.9} & \scoreklcell{8.1} & \scoreklcell{17.5} & \scoreklcell{41.8} & \scoreklcell{53.8} & \scoreklcell{23.6} & \scoreklcell{5.0} & \scoreklcell{20.5} & \scoreklcell{27.3} \\
         &  & 64 & \scoreklcell{0.7} & \scoreklcell{1.4} & \scoreklcell{0.4} & \scoreklcell{0.3} & \scoreklcell{4.4} & \scoreklcell{1.0} & \scoreklcell{7.3} & \scoreklcell{10.7} & \scoreklcell{22.7} & \scoreklcell{3.0} & \scoreklcell{17.1} & \scoreklcell{16.2} \\
         & \multirow{2}{*}{MSE} & 16 & \scoreklcell{2.7} & \scoreklcell{3.0} & \scoreklcell{1.7} & \scoreklcell{1.1} & \scoreklcell{9.6} & \scoreklcell{18.5} & \scoreklcell{52.5} & \scoreklcell{75.4} & \scoreklcell{54.1} & \scoreklcell{5.6} & \scoreklcell{83.6} & \scoreklcell{89.5} \\
         &  & 64 & \scoreklcell{0.5} & \scoreklcell{1.1} & \scoreklcell{1.4} & \scoreklcell{1.3} & \scoreklcell{11.4} & \scoreklcell{2.8} & \scoreklcell{16.9} & \scoreklcell{20.0} & \scoreklcell{75.9} & \scoreklcell{3.3} & \scoreklcell{46.7} & \scoreklcell{48.3} \\
        \midrule
        \multirow{4}{*}{DLightSB-M} & \multirow{2}{*}{KL} & 16 & \scoreklcell{0.5} & \textbf{\scoreklcell{0.0}} & \underline{\scoreklcell{0.1}} & \underline{\scoreklcell{0.2}} & \scoreklcell{21.7} & \scoreklcell{1.0} & \scoreklcell{7.4} & \scoreklcell{5.9} & \scoreklcell{6.4} & \scoreklcell{2.3} & \scoreklcell{322.2} & \scoreklcell{329.7} \\
         &  & 64 & \scoreklcell{0.7} & \underline{\scoreklcell{0.1}} & \scoreklcell{0.3} & \scoreklcell{0.4} & \scoreklcell{8.2} & \scoreklcell{1.4} & \scoreklcell{12.3} & \scoreklcell{11.7} & \scoreklcell{6.8} & \scoreklcell{1.9} & \scoreklcell{292.6} & \scoreklcell{110.7} \\
         & \multirow{2}{*}{MSE} & 16 & \scoreklcell{1.1} & \scoreklcell{0.3} & \scoreklcell{0.3} & \underline{\scoreklcell{0.2}} & \scoreklcell{182.3} & \scoreklcell{4.5} & \scoreklcell{6.5} & \scoreklcell{6.6} & \scoreklcell{25.4} & \scoreklcell{15.8} & \scoreklcell{70.3} & \scoreklcell{405.0} \\
         &  & 64 & \scoreklcell{1.4} & \scoreklcell{0.5} & \scoreklcell{0.4} & \underline{\scoreklcell{0.2}} & \scoreklcell{156.4} & \scoreklcell{8.2} & \scoreklcell{8.4} & \scoreklcell{8.9} & \scoreklcell{52.4} & \scoreklcell{27.6} & \scoreklcell{313.8} & \scoreklcell{104.1} \\
        \bottomrule
    \end{tabular}}
    \caption{\centering Trajectory reverse KL divergence ($\downarrow$) on the high-dimensional Gaussian mixture benchmark. The best-performing method is highlighted in bold, and the second is underlined. Color code: bluish-green for $<0.5$, yellow for $[0.5,2)$, orange for $[2,10)$, and vermillion for $\geq 10$.}
\label{table:reverse_kl}
\end{table*}
\begin{table*}[h]
    \resizebox{\textwidth}{!}{%
    \begin{tabular}{rcccccccccccccc}
        \toprule
        & & & \multicolumn{4}{c}{$D\!=\!2$} & \multicolumn{4}{c}{$D\!=\!16$} & \multicolumn{4}{c}{$D\!=\!64$} \\
        \cmidrule(lr){4-7} \cmidrule(lr){8-11} \cmidrule(l){12-15}
        & & & \multicolumn{2}{c}{gaussian} & \multicolumn{2}{c}{uniform} & \multicolumn{2}{c}{gaussian} & \multicolumn{2}{c}{uniform} & \multicolumn{2}{c}{gaussian} & \multicolumn{2}{c}{uniform} \\
        \cmidrule(lr){4-5} \cmidrule(lr){6-7} \cmidrule(l){8-9} \cmidrule(lr){10-11} \cmidrule(lr){12-13} \cmidrule(l){14-15}
        Method & Loss & $N\!+\!1$ & {$0.02$} & {$0.05$} & {$0.005$} & {$0.01$} & {$0.02$} & {$0.05$} & {$0.005$} & {$0.01$} & {$0.02$} & {$0.05$} & {$0.005$} & {$0.01$} \\
        \midrule
        \multirow{1}{*}{\textit{Independent}} & -- & -- &
        \textbf{\scorecell{0.983}} & \textbf{\scorecell{0.985}} & \textbf{\scorecell{0.979}} & \textbf{\scorecell{0.984}} &
        \textbf{\scorecell{0.987}} & \textbf{\scorecell{0.985}} & \textbf{\scorecell{0.979}} & \textbf{\scorecell{0.982}} &
        \textbf{\scorecell{0.986}} & \textbf{\scorecell{0.981}} & \textbf{\scorecell{0.980}} & \textbf{\scorecell{0.980}} \\
        \multirow{1}{*}{\textit{Reference}} & -- & -- &
        \scorecell{0.324} & \scorecell{0.481} & \scorecell{0.411} & \scorecell{0.431} &
        \scorecell{0.406} & \scorecell{0.337} & \scorecell{0.410} & \scorecell{0.420} &
        \scorecell{0.560} & \scorecell{0.491} & \scorecell{0.476} & \scorecell{0.467} \\
        \multirow{1}{*}{\textit{Feature-wise SB}} & -- & -- &
        \scorecell{0.924} & \textbf{\scorecell{0.985}} & \textbf{\scorecell{0.979}} & \textbf{\scorecell{0.984}} &
        \underline{\scorecell{0.922}} & \underline{\scorecell{0.984}} & \textbf{\scorecell{0.982}} & \textbf{\scorecell{0.983}} &
        \underline{\scorecell{0.975}} & \textbf{\scorecell{0.981}} & \textbf{\scorecell{0.982}} & \textbf{\scorecell{0.981}} \\
        \midrule
        \multirow{1}{*}{DLightSB} & \multirow{1}{*}{--} & -- &
        \underline{\scorecell{0.974}} & \underline{\scorecell{0.971}} & \textbf{\scorecell{0.978}} & \textbf{\scorecell{0.979}} &
        \scorecell{0.891} & \scorecell{0.973} & \underline{\scorecell{0.955}} & \textbf{\scorecell{0.976}} &
        \scorecell{0.972} & \scorecell{0.952} & \underline{\scorecell{0.970}} & \underline{\scorecell{0.970}} \\
        \midrule
        \multirow{4}{*}{CSBM} & \multirow{2}{*}{KL} & 16 &
        \scorecell{0.768} & \scorecell{0.724} & \scorecell{0.915} & \scorecell{0.913} &
        \scorecell{0.824} & \scorecell{0.776} & \scorecell{0.853} & \scorecell{0.789} &
        \scorecell{0.907} & \scorecell{0.895} & \scorecell{0.840} & \scorecell{0.886} \\
         &  & 64 &
        \scorecell{0.914} & \scorecell{0.886} & \scorecell{0.955} & \underline{\scorecell{0.965}} &
        \scorecell{0.906} & \scorecell{0.845} & \scorecell{0.921} & \scorecell{0.931} &
        \scorecell{0.941} & \scorecell{0.941} & \scorecell{0.793} & \scorecell{0.883} \\
         & \multirow{2}{*}{MSE} & 16 &
        \scorecell{0.519} & \scorecell{0.721} & \scorecell{0.843} & \scorecell{0.858} &
        \scorecell{0.824} & \scorecell{0.743} & \scorecell{0.803} & \scorecell{0.744} &
        \scorecell{0.813} & \scorecell{0.886} & \scorecell{0.808} & \scorecell{0.810} \\
         &  & 64 &
        \scorecell{0.374} & \scorecell{0.836} & \scorecell{0.835} & \scorecell{0.810} &
        \scorecell{0.875} & \scorecell{0.831} & \scorecell{0.838} & \scorecell{0.900} &
        \scorecell{0.743} & \underline{\scorecell{0.955}} & \scorecell{0.834} & \scorecell{0.824} \\
        \midrule
        \multirow{4}{*}{$\alpha$-CSBM} & \multirow{2}{*}{KL} & 16 &
        \scorecell{0.738} & \scorecell{0.749} & \scorecell{0.930} & \scorecell{0.914} &
        \scorecell{0.788} & \scorecell{0.812} & \scorecell{0.852} & \scorecell{0.781} &
        \scorecell{0.928} & \scorecell{0.892} & \scorecell{0.807} & \scorecell{0.869} \\
         &  & 64 &
        \scorecell{0.820} & \scorecell{0.889} & \scorecell{0.956} & \underline{\scorecell{0.965}} &
        \underline{\scorecell{0.923}} & \scorecell{0.925} & \scorecell{0.930} & \scorecell{0.950} &
        \scorecell{0.947} & \scorecell{0.928} & \scorecell{0.803} & \scorecell{0.863} \\
         & \multirow{2}{*}{MSE} & 16 &
        \scorecell{0.692} & \scorecell{0.727} & \scorecell{0.892} & \scorecell{0.901} &
        \scorecell{0.794} & \scorecell{0.797} & \scorecell{0.847} & \scorecell{0.827} &
        \scorecell{0.816} & \scorecell{0.932} & \scorecell{0.830} & \scorecell{0.820} \\
         &  & 64 &
        \scorecell{0.917} & \scorecell{0.884} & \scorecell{0.886} & \scorecell{0.925} &
        \scorecell{0.797} & \scorecell{0.902} & \scorecell{0.872} & \scorecell{0.895} &
        \scorecell{0.738} & \scorecell{0.932} & \scorecell{0.820} & \scorecell{0.848} \\
        \midrule
        \multirow{4}{*}{DLightSB-M} & \multirow{2}{*}{KL} & 16 &
        \scorecell{0.901} & \scorecell{0.964} & \underline{\scorecell{0.966}} & \underline{\scorecell{0.970}} &
        \scorecell{0.861} & \scorecell{0.931} & \underline{\scorecell{0.957}} & \underline{\scorecell{0.957}} &
        \scorecell{0.893} & \scorecell{0.800} & \scorecell{0.828} & \scorecell{0.860} \\
         &  & 64 &
        \scorecell{0.883} & \scorecell{0.953} & \scorecell{0.951} & \scorecell{0.963} &
        \scorecell{0.899} & \scorecell{0.949} & \scorecell{0.952} & \underline{\scorecell{0.963}} &
        \scorecell{0.887} & \scorecell{0.899} & \scorecell{0.804} & \scorecell{0.926} \\
         & \multirow{2}{*}{MSE} & 16 &
        \scorecell{0.747} & \scorecell{0.949} & \scorecell{0.849} & \scorecell{0.928} &
        \scorecell{0.691} & \scorecell{0.956} & \scorecell{0.928} & \scorecell{0.918} &
        \scorecell{0.722} & \scorecell{0.881} & \scorecell{0.891} & \scorecell{0.787} \\
         &  & 64 &
        \scorecell{0.728} & \scorecell{0.949} & \scorecell{0.812} & \scorecell{0.916} &
        \scorecell{0.648} & \scorecell{0.947} & \scorecell{0.939} & \scorecell{0.885} &
        \scorecell{0.624} & \scorecell{0.885} & \scorecell{0.675} & \scorecell{0.832} \\
        \bottomrule
        \end{tabular}}
        \caption{ \centering Shape Score metric ($\uparrow$) on our high-dimensional Gaussian mixture benchmark. The best-performing method is highlighted in bold, and the second is underlined. Color code: vermillion for $<0.5$, orange for $[0.5,0.75)$, yellow for $[0.75,0.85)$, and bluish-green for $\geq 0.85$.}
    \label{table:unconditional_shape_score}
\end{table*}

\begin{table*}[h]
    \resizebox{\textwidth}{!}{%
    \begin{tabular}{rcccccccccccccc}
        \toprule
        & & & \multicolumn{4}{c}{$D\!=\!2$} & \multicolumn{4}{c}{$D\!=\!16$} & \multicolumn{4}{c}{$D\!=\!64$} \\
        \cmidrule(lr){4-7} \cmidrule(lr){8-11} \cmidrule(l){12-15}
        & & & \multicolumn{2}{c}{gaussian} & \multicolumn{2}{c}{uniform} & \multicolumn{2}{c}{gaussian} & \multicolumn{2}{c}{uniform} & \multicolumn{2}{c}{gaussian} & \multicolumn{2}{c}{uniform} \\
        \cmidrule(lr){4-5} \cmidrule(lr){6-7} \cmidrule(l){8-9} \cmidrule(lr){10-11} \cmidrule(lr){12-13} \cmidrule(l){14-15}
        Method & Loss & $N\!+\!1$ & {$0.02$} & {$0.05$} & {$0.005$} & {$0.01$} & {$0.02$} & {$0.05$} & {$0.005$} & {$0.01$} & {$0.02$} & {$0.05$} & {$0.005$} & {$0.01$} \\
        \midrule
        \multirow{1}{*}{\textit{Independent}} & -- & -- &
        \textbf{\scorecell{0.968}} & \textbf{\scorecell{0.969}} & \textbf{\scorecell{0.961}} & \textbf{\scorecell{0.964}} &
        \textbf{\scorecell{0.961}} & \textbf{\scorecell{0.957}} & \textbf{\scorecell{0.943}} & \textbf{\scorecell{0.948}} &
        \textbf{\scorecell{0.948}} & \textbf{\scorecell{0.923}} & \textbf{\scorecell{0.920}} & \textbf{\scorecell{0.920}} \\ 
        \multirow{1}{*}{\textit{Reference}} & -- & -- &
        \scorecell{0.173} & \scorecell{0.303} & \scorecell{0.247} & \scorecell{0.292} &
        \scorecell{0.194} & \scorecell{0.118} & \scorecell{0.137} & \scorecell{0.138} &
        \scorecell{0.369} & \scorecell{0.233} & \scorecell{0.232} & \scorecell{0.204}  \\
        \multirow{1}{*}{\textit{Feature-wise SB}} & -- & -- &
        \scorecell{0.713} & \scorecell{0.699} & \scorecell{0.781} & \scorecell{0.817} &
        \scorecell{0.806} & \scorecell{0.663} & \scorecell{0.550} & \scorecell{0.509} &
        \scorecell{0.930} & \scorecell{0.670} & \scorecell{0.733} & \scorecell{0.677} \\       
        \midrule
        \multirow{1}{*}{DLightSB} & \multirow{1}{*}{--} & 16 &
        \underline{\scorecell{0.959}} & \underline{\scorecell{0.955}} & \textbf{\scorecell{0.957}} & \textbf{\scorecell{0.958}} &
        \scorecell{0.836} & \underline{\scorecell{0.947}} & \underline{\scorecell{0.925}} & \underline{\scorecell{0.941}} &
        \underline{\scorecell{0.937}} & \underline{\scorecell{0.903}} & \underline{\scorecell{0.913}} & \underline{\scorecell{0.914}} \\
        \midrule
        \multirow{4}{*}{CSBM} & \multirow{2}{*}{KL} & 16 &
        \scorecell{0.696} & \scorecell{0.637} & \scorecell{0.873} & \scorecell{0.869} &
        \scorecell{0.732} & \scorecell{0.660} & \scorecell{0.752} & \scorecell{0.674} &
        \scorecell{0.850} & \scorecell{0.827} & \scorecell{0.781} & \scorecell{0.833} \\
         &  & 64 &
        \scorecell{0.886} & \scorecell{0.851} & \scorecell{0.925} & \scorecell{0.928} &
        \scorecell{0.874} & \scorecell{0.810} & \scorecell{0.873} & \scorecell{0.883} &
        \scorecell{0.900} & \scorecell{0.890} & \scorecell{0.729} & \scorecell{0.833} \\
         & \multirow{2}{*}{MSE} & 16 &
        \scorecell{0.498} & \scorecell{0.645} & \scorecell{0.767} & \scorecell{0.793} &
        \scorecell{0.728} & \scorecell{0.618} & \scorecell{0.686} & \scorecell{0.621} &
        \scorecell{0.722} & \scorecell{0.813} & \scorecell{0.743} & \scorecell{0.756} \\
         &  & 64 &
        \scorecell{0.323} & \scorecell{0.780} & \scorecell{0.761} & \scorecell{0.740} &
        \scorecell{0.813} & \scorecell{0.769} & \scorecell{0.777} & \scorecell{0.832} &
        \scorecell{0.634} & \underline{\scorecell{0.900}} & \scorecell{0.767} & \scorecell{0.766} \\
        \midrule
        \multirow{4}{*}{$\alpha$-CSBM} & \multirow{2}{*}{KL} & 16 &
        \scorecell{0.662} & \scorecell{0.655} & \scorecell{0.889} & \scorecell{0.872} &
        \scorecell{0.705} & \scorecell{0.709} & \scorecell{0.776} & \scorecell{0.696} &
        \scorecell{0.880} & \scorecell{0.827} & \scorecell{0.741} & \scorecell{0.813} \\
         &  & 64 &
        \scorecell{0.805} & \scorecell{0.849} & \scorecell{0.928} & \scorecell{0.934} &
        \underline{\scorecell{0.890}} & \scorecell{0.897} & \scorecell{0.883} & \scorecell{0.910} &
        \scorecell{0.907} & \scorecell{0.875} & \scorecell{0.736} & \scorecell{0.813} \\
         & \multirow{2}{*}{MSE} & 16 &
        \scorecell{0.634} & \scorecell{0.638} & \scorecell{0.834} & \scorecell{0.832} &
        \scorecell{0.713} & \scorecell{0.688} & \scorecell{0.758} & \scorecell{0.728} &
        \scorecell{0.725} & \scorecell{0.869} & \scorecell{0.752} & \scorecell{0.751} \\
         &  & 64 &
        \scorecell{0.887} & \scorecell{0.837} & \scorecell{0.829} & \scorecell{0.862} &
        \scorecell{0.720} & \scorecell{0.866} & \scorecell{0.810} & \scorecell{0.837} &
        \scorecell{0.628} & \scorecell{0.878} & \scorecell{0.748} & \scorecell{0.788} \\
        \midrule
        \multirow{4}{*}{DLightSB-M} & \multirow{2}{*}{KL} & 16 &
        \scorecell{0.870} & \scorecell{0.948} & \underline{\scorecell{0.940}} & \underline{\scorecell{0.949}} &
        \scorecell{0.781} & \scorecell{0.908} & \scorecell{0.912} & \scorecell{0.916} &
        \scorecell{0.831} & \scorecell{0.742} & \scorecell{0.640} & \scorecell{0.669} \\
         &  & 64 &
        \scorecell{0.846} & \scorecell{0.933} & \scorecell{0.925} & \scorecell{0.940} &
        \scorecell{0.842} & \scorecell{0.917} & \scorecell{0.906} & \scorecell{0.915} &
        \scorecell{0.825} & \scorecell{0.848} & \scorecell{0.643} & \scorecell{0.828} \\
         & \multirow{2}{*}{MSE} & 16 &
        \scorecell{0.667} & \scorecell{0.932} & \scorecell{0.804} & \scorecell{0.905} &
        \scorecell{0.541} & \scorecell{0.922} & \scorecell{0.878} & \scorecell{0.878} &
        \scorecell{0.597} & \scorecell{0.826} & \scorecell{0.791} & \scorecell{0.513} \\
         &  & 64 &
        \scorecell{0.645} & \scorecell{0.923} & \scorecell{0.758} & \scorecell{0.887} &
        \scorecell{0.486} & \scorecell{0.907} & \scorecell{0.882} & \scorecell{0.846} &
        \scorecell{0.461} & \scorecell{0.822} & \scorecell{0.483} & \scorecell{0.724} \\
        \bottomrule
    \end{tabular}}
    \caption{\centering  Trend Score ($\uparrow$) on our high-dimensional Gaussian mixture benchmark. The best-performing method is highlighted in bold, and the second is underlined. Color code: vermillion for $<0.5$, orange for $[0.5,0.75)$, yellow for $[0.75,0.85)$, and bluish-green for $\geq 0.85$.}
    \label{table:unconditional_trend_score}
\end{table*}
\begin{figure}[h]
    \centering
    \captionsetup[subfigure]{font=scriptsize}
    \begin{subfigure}[b]{0.19\linewidth}
        \centering
        \includegraphics[width=0.995\linewidth]{images/benchmark/pairs_d2_g002.png}
        \caption{Input and Target}
    \end{subfigure}
    \begin{subfigure}[b]{0.19\linewidth}
        \centering
        \includegraphics[width=0.995\linewidth]{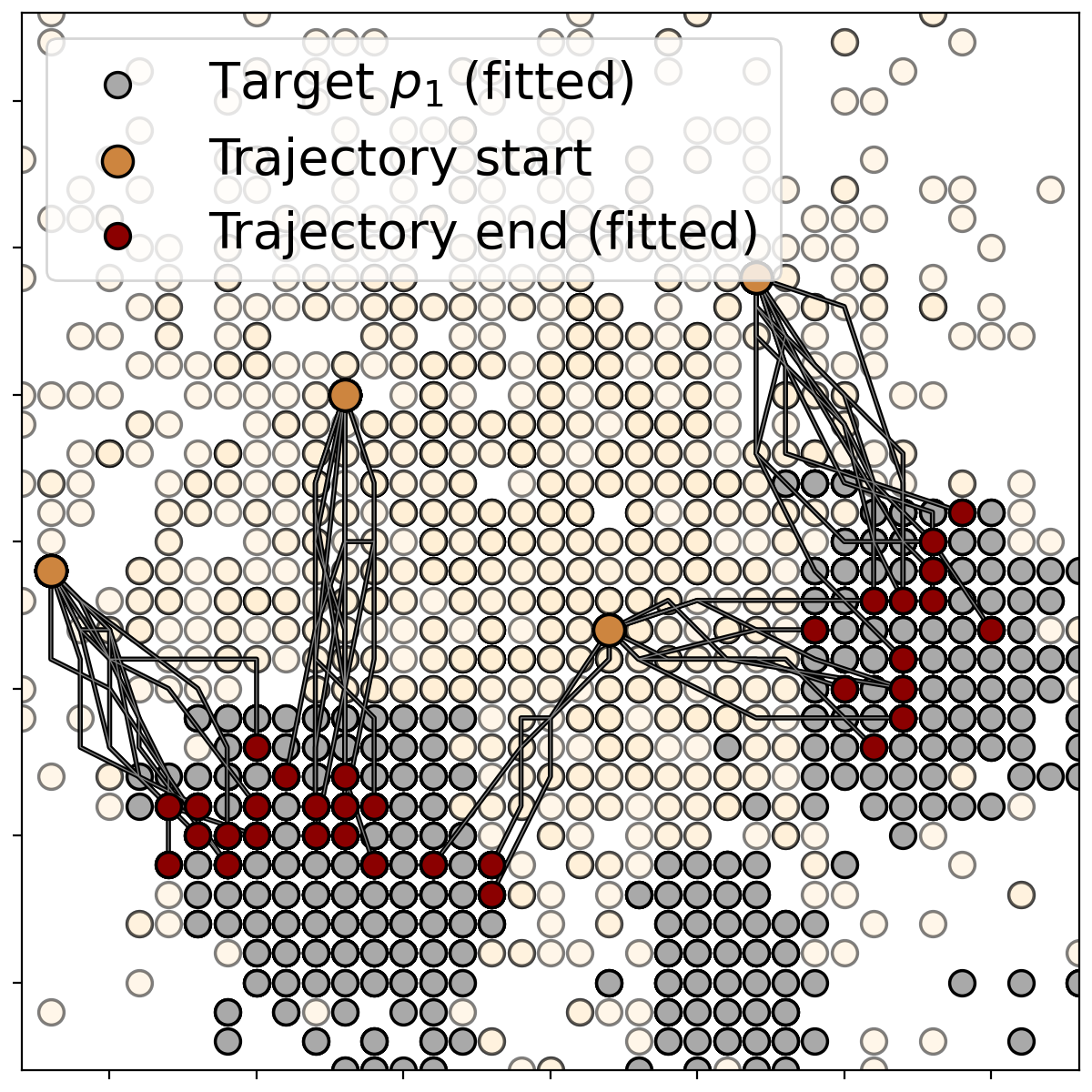}
        \caption{CSBM KL/16}
    \end{subfigure}
    \begin{subfigure}[b]{0.19\linewidth}
        \centering
        \includegraphics[width=0.995\linewidth]{images/csbm/d2_g002_t63_kl.png}
        \caption{CSBM KL/64}
    \end{subfigure}
    \begin{subfigure}[b]{0.19\linewidth}
        \centering
        \includegraphics[width=0.995\linewidth]{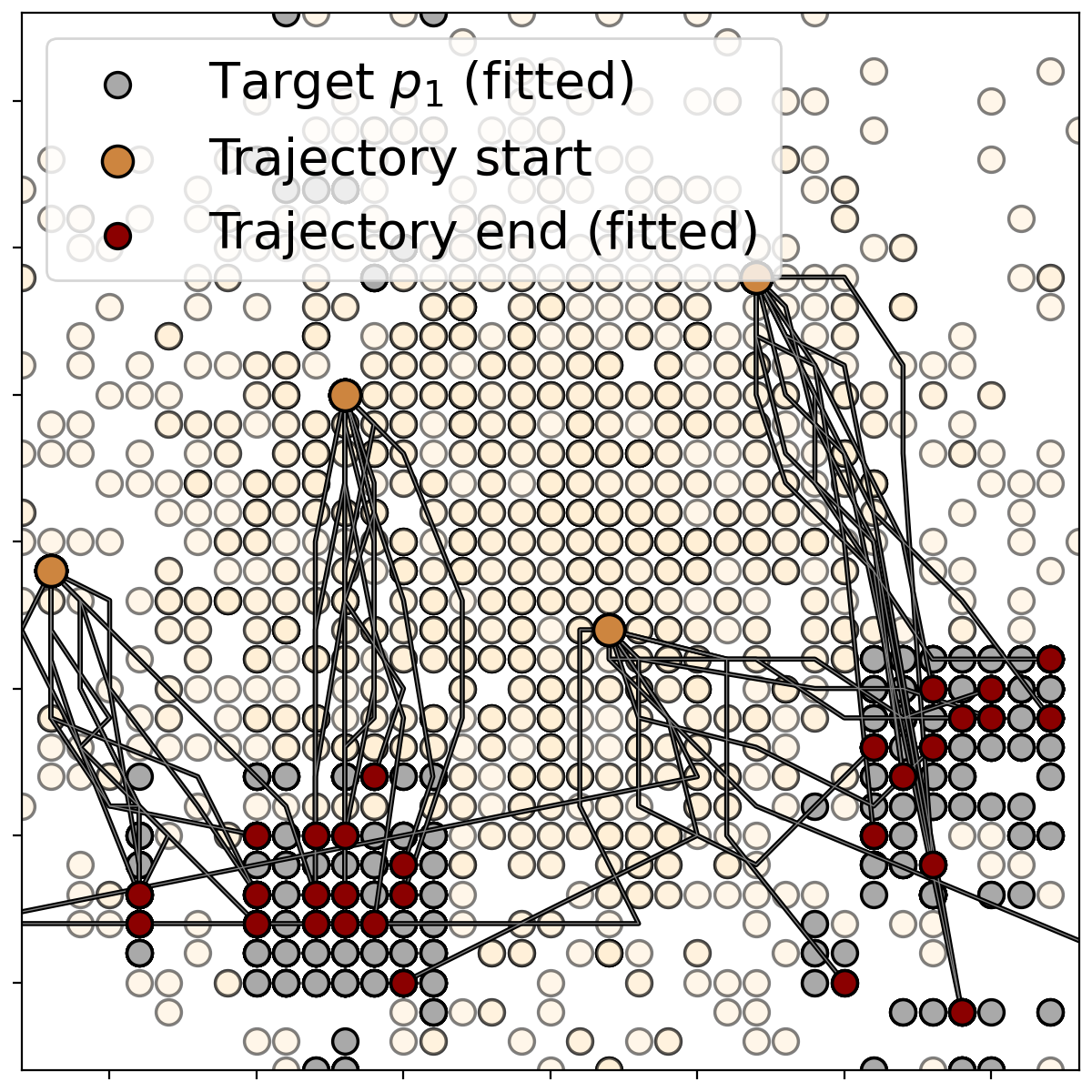}
        \caption{CSBM MSE/16}
    \end{subfigure}
    \begin{subfigure}[b]{0.19\linewidth}
        \centering
        \includegraphics[width=0.995\linewidth]{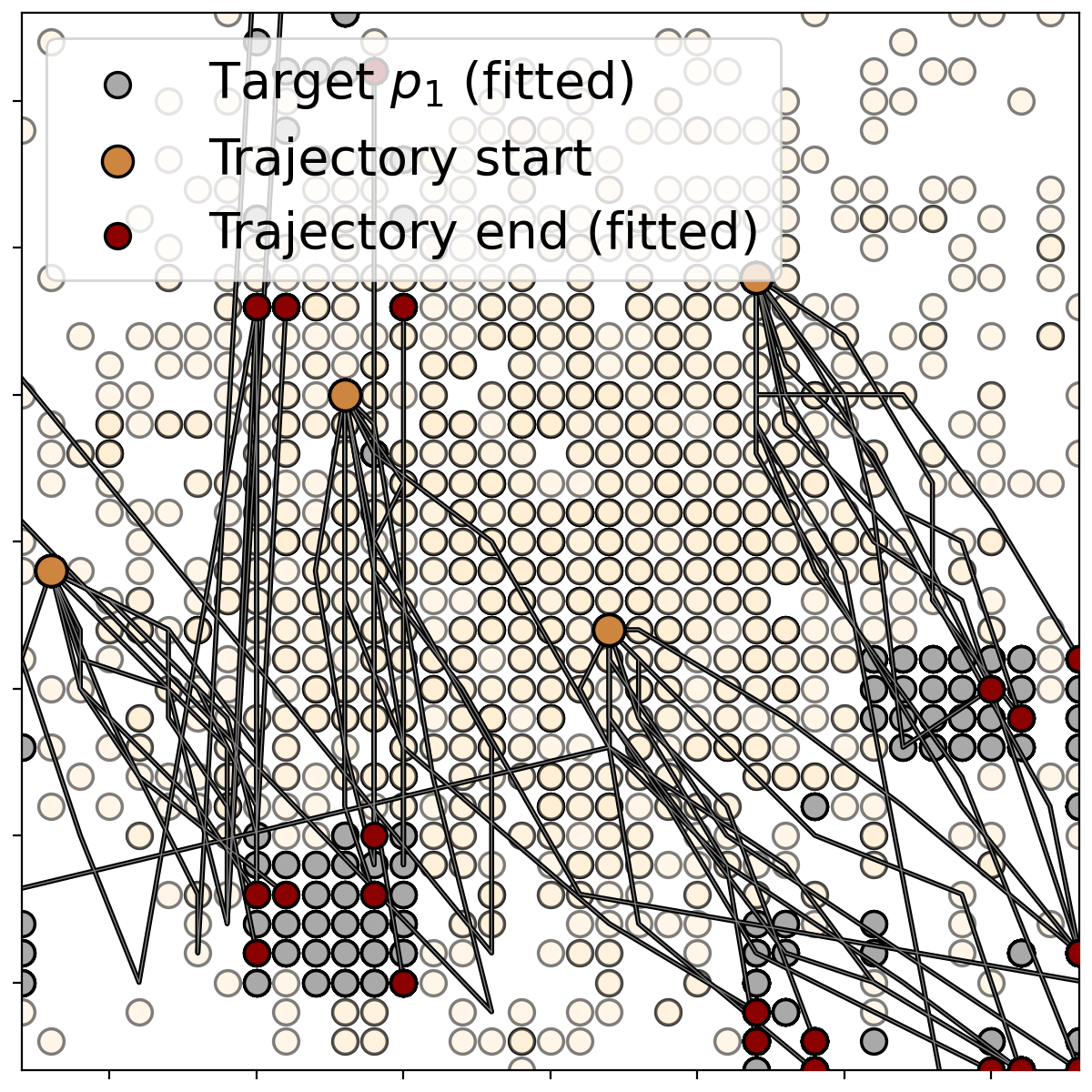}
        \caption{CSBM MSE/64}
    \end{subfigure}
    \\
    \begin{subfigure}[b]{0.19\linewidth}
        \centering
        \includegraphics[width=0.995\linewidth]{images/benchmark/d2_g002.png}
        \caption{Benchmark}
    \end{subfigure}
    \begin{subfigure}[b]{0.19\linewidth}
        \centering
        \includegraphics[width=0.995\linewidth]{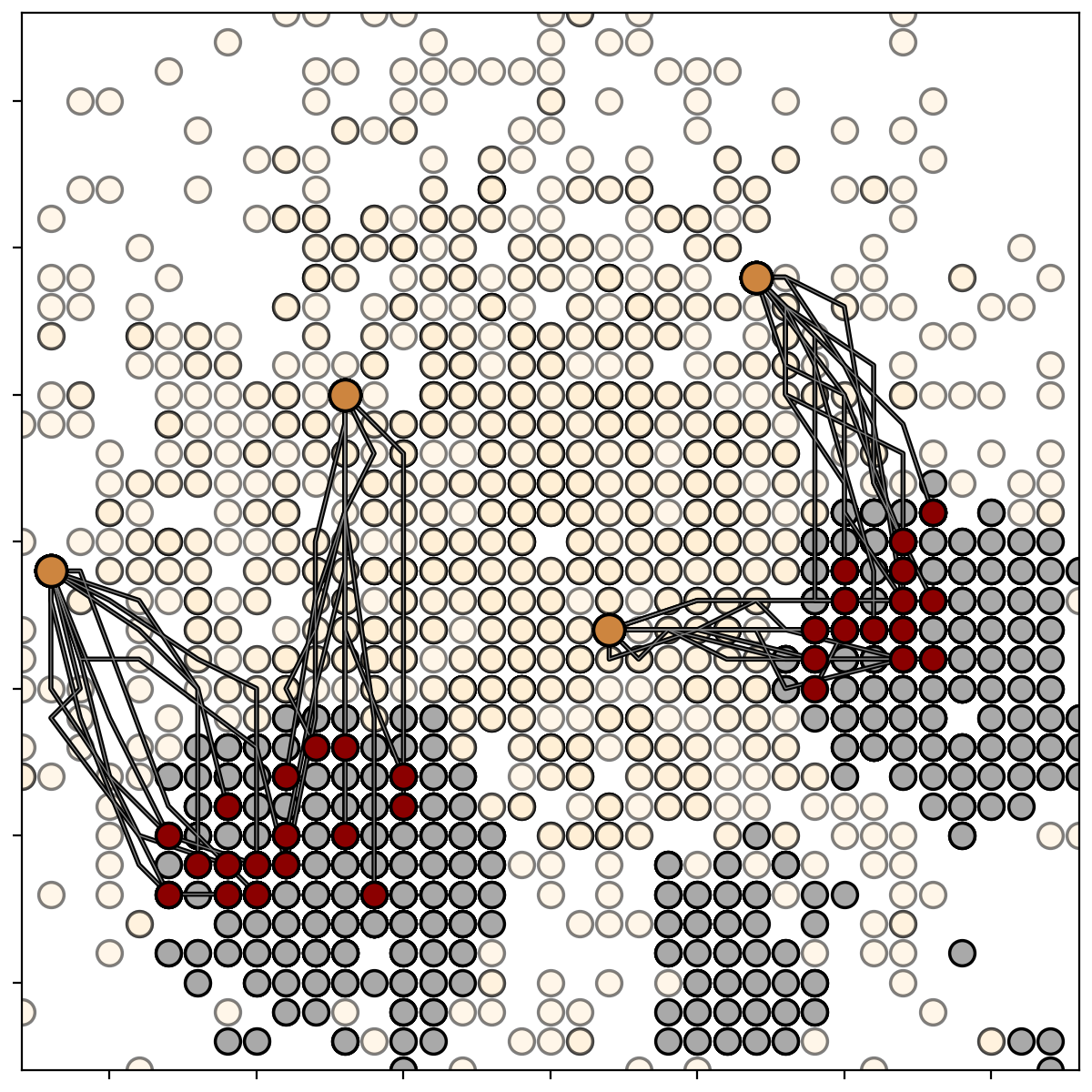}
        \caption{$\alpha$-CSBM KL/16}
    \end{subfigure}
    \begin{subfigure}[b]{0.19\linewidth}
        \centering
        \includegraphics[width=0.995\linewidth]{images/alpha_csbm/d2_g002_t63_kl.png}
        \caption{$\alpha$-CSBM KL/64}
    \end{subfigure}
    \begin{subfigure}[b]{0.19\linewidth}
        \centering
        \includegraphics[width=0.995\linewidth]{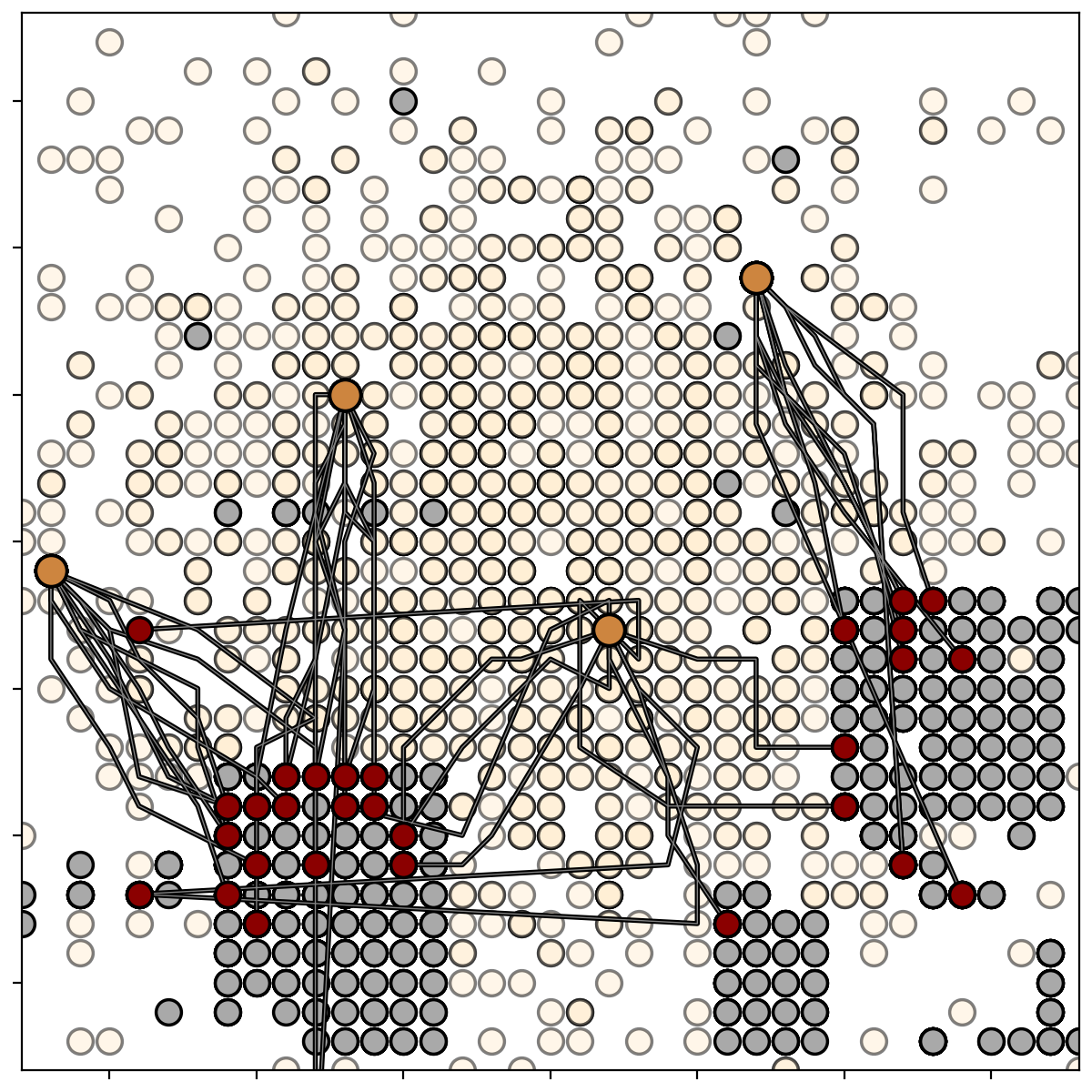}
        \caption{$\alpha$-CSBM MSE/16}
    \end{subfigure}
    \begin{subfigure}[b]{0.19\linewidth}
        \centering
        \includegraphics[width=0.995\linewidth]{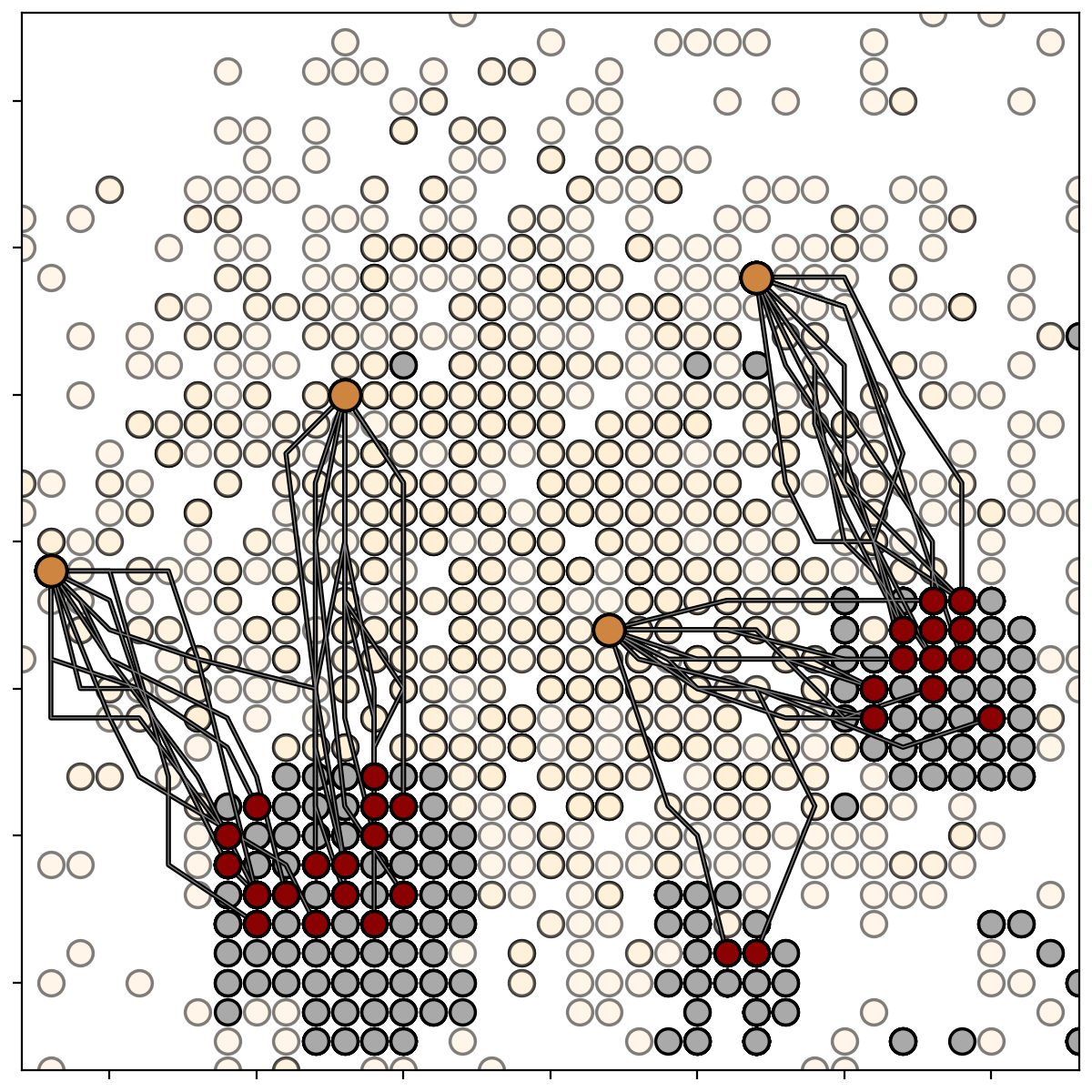}
        \caption{$\alpha$-CSBM MSE/64}
    \end{subfigure}
    \\
    \begin{subfigure}[b]{0.19\linewidth}
        \centering
        \includegraphics[width=0.995\linewidth]{images/dlight_sb/d2_g002.png}
        \caption{DLightSB}
    \end{subfigure}
    \begin{subfigure}[b]{0.19\linewidth}
        \centering
        \includegraphics[width=0.995\linewidth]{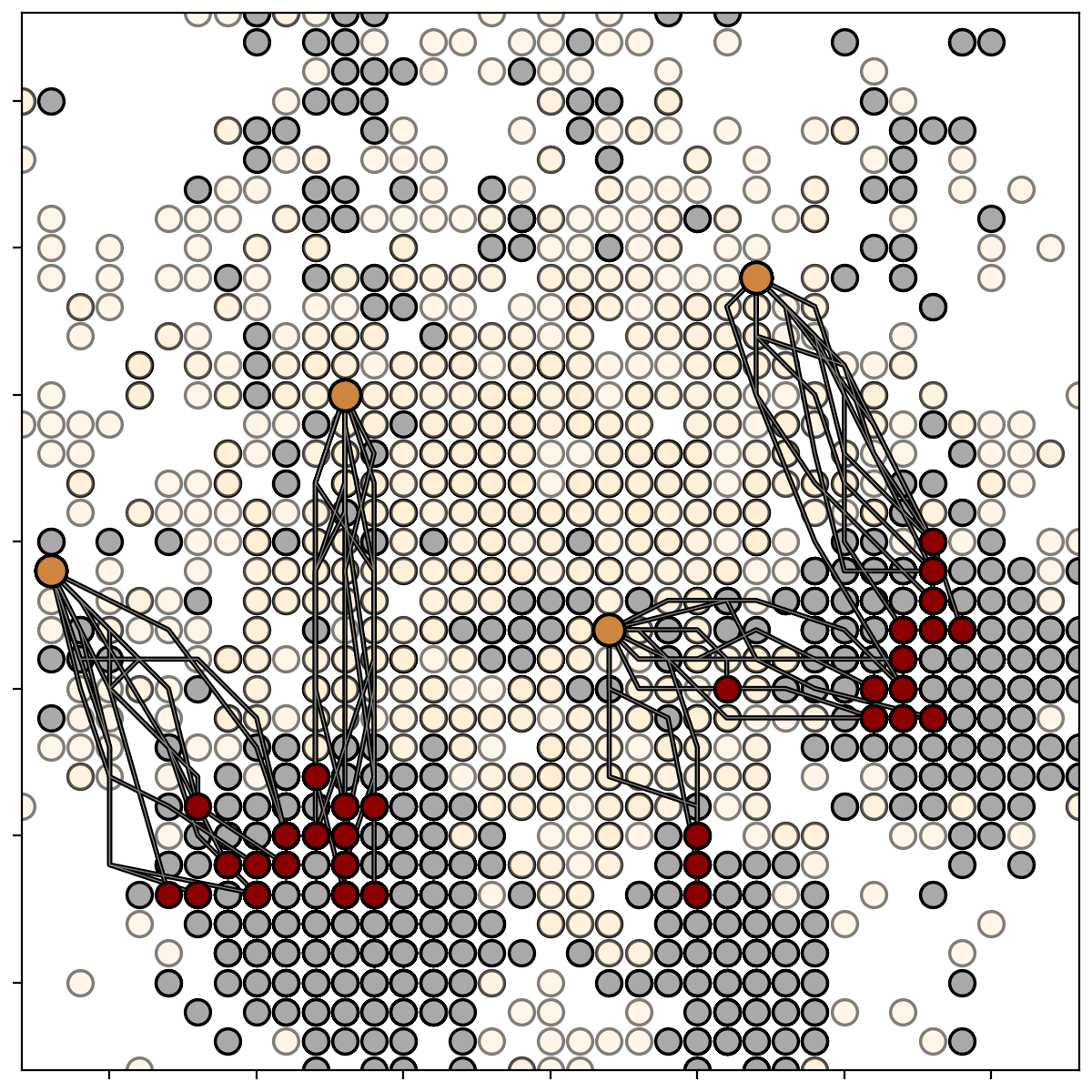}
        \caption{DLightSB-M KL/16}
    \end{subfigure}
    \begin{subfigure}[b]{0.19\linewidth}
        \centering
        \includegraphics[width=0.995\linewidth]{images/dlight_sb_m/d2_g002_t63_kl.png}
        \caption{DLightSB-M KL/64}
    \end{subfigure}
    \begin{subfigure}[b]{0.19\linewidth}
        \centering
        \includegraphics[width=0.995\linewidth]{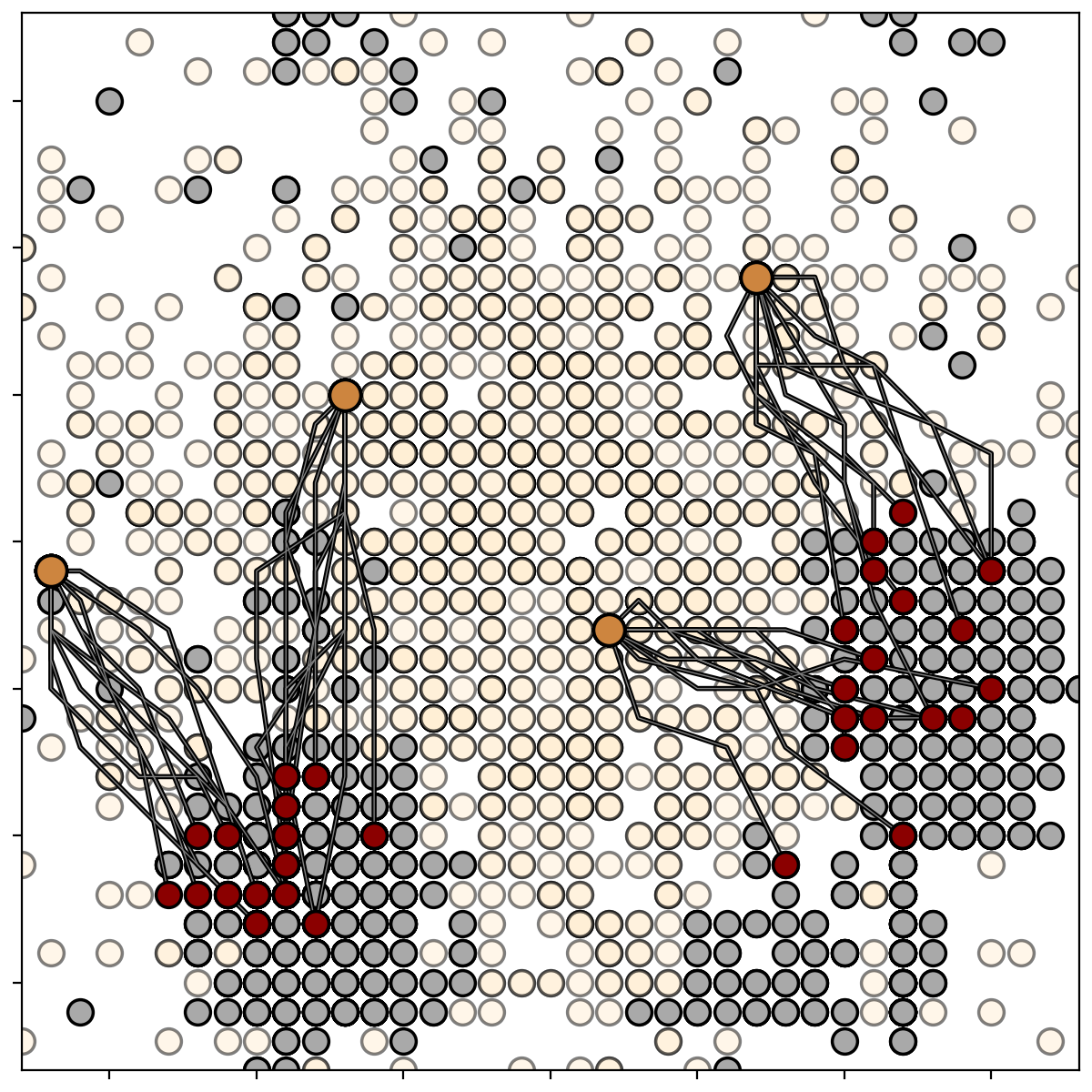}
        \caption{DLightSB-M MSE/16}
    \end{subfigure}
    \begin{subfigure}[b]{0.19\linewidth}
        \centering
        \includegraphics[width=0.995\linewidth]{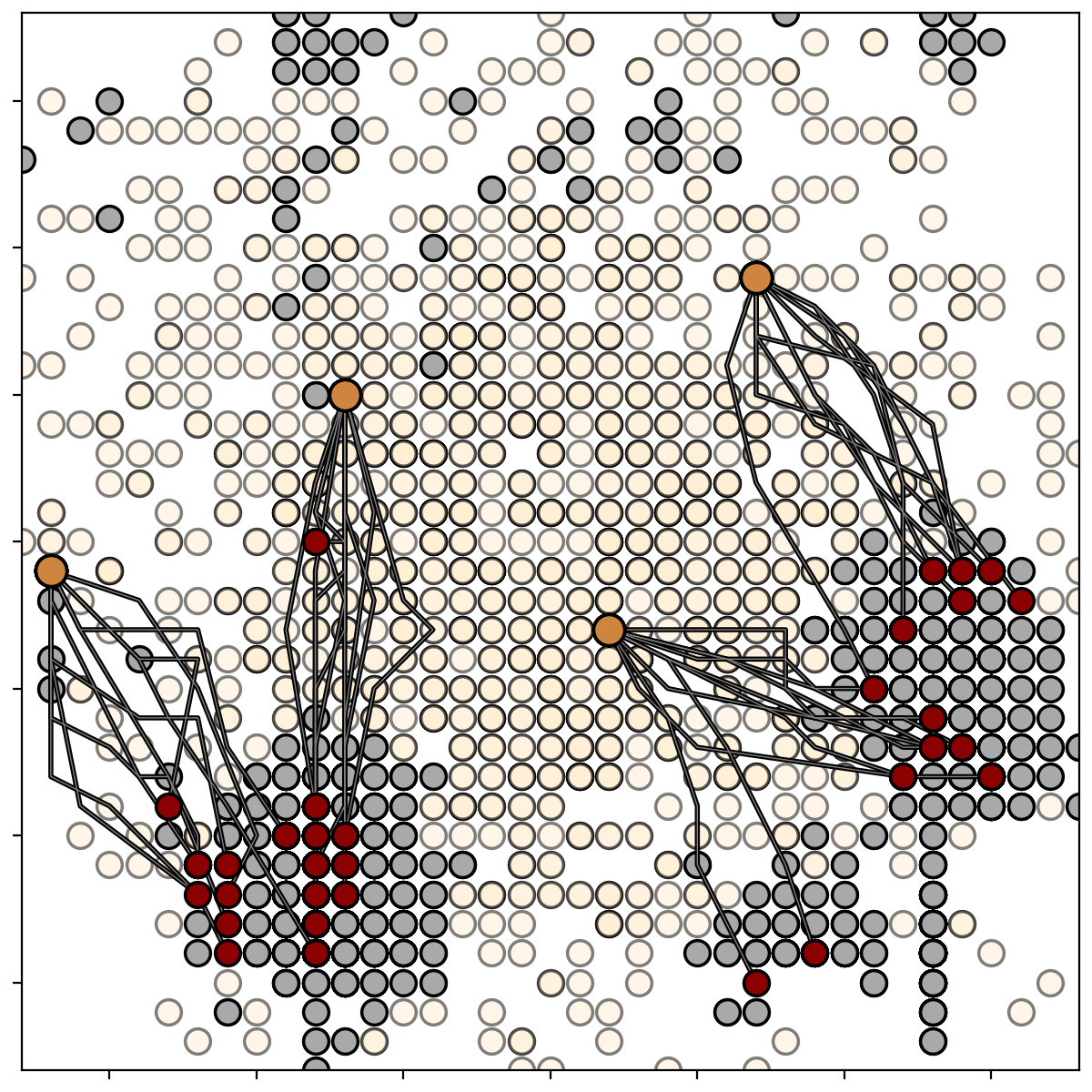}
        \caption{DLightSB-M MSE/64}
    \end{subfigure}
    \caption{ \centering Samples from all methods on the high-dimensional Gaussian mixture benchmark using the Gaussian reference process $q^{\text{gauss}}$ with $\gamma = 0.02$.}
    \label{figure:g2_002_samples}
\end{figure}

\begin{figure}[h]
    \centering
    \captionsetup[subfigure]{font=scriptsize}
    \begin{subfigure}[b]{0.19\linewidth}
        \centering
        \includegraphics[width=0.995\linewidth]{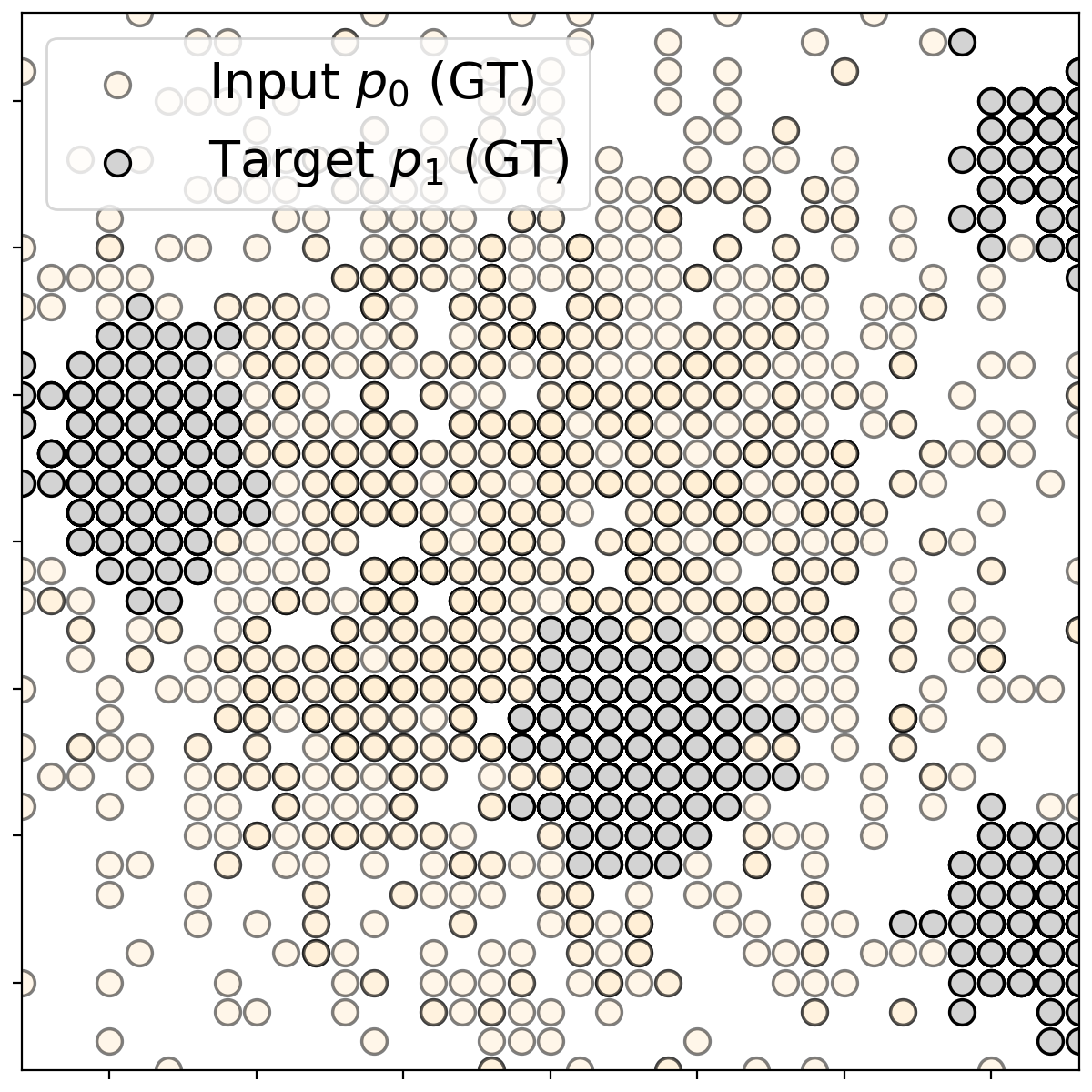}
        \caption{Input and Target}
    \end{subfigure}
    \begin{subfigure}[b]{0.19\linewidth}
        \centering
        \includegraphics[width=0.995\linewidth]{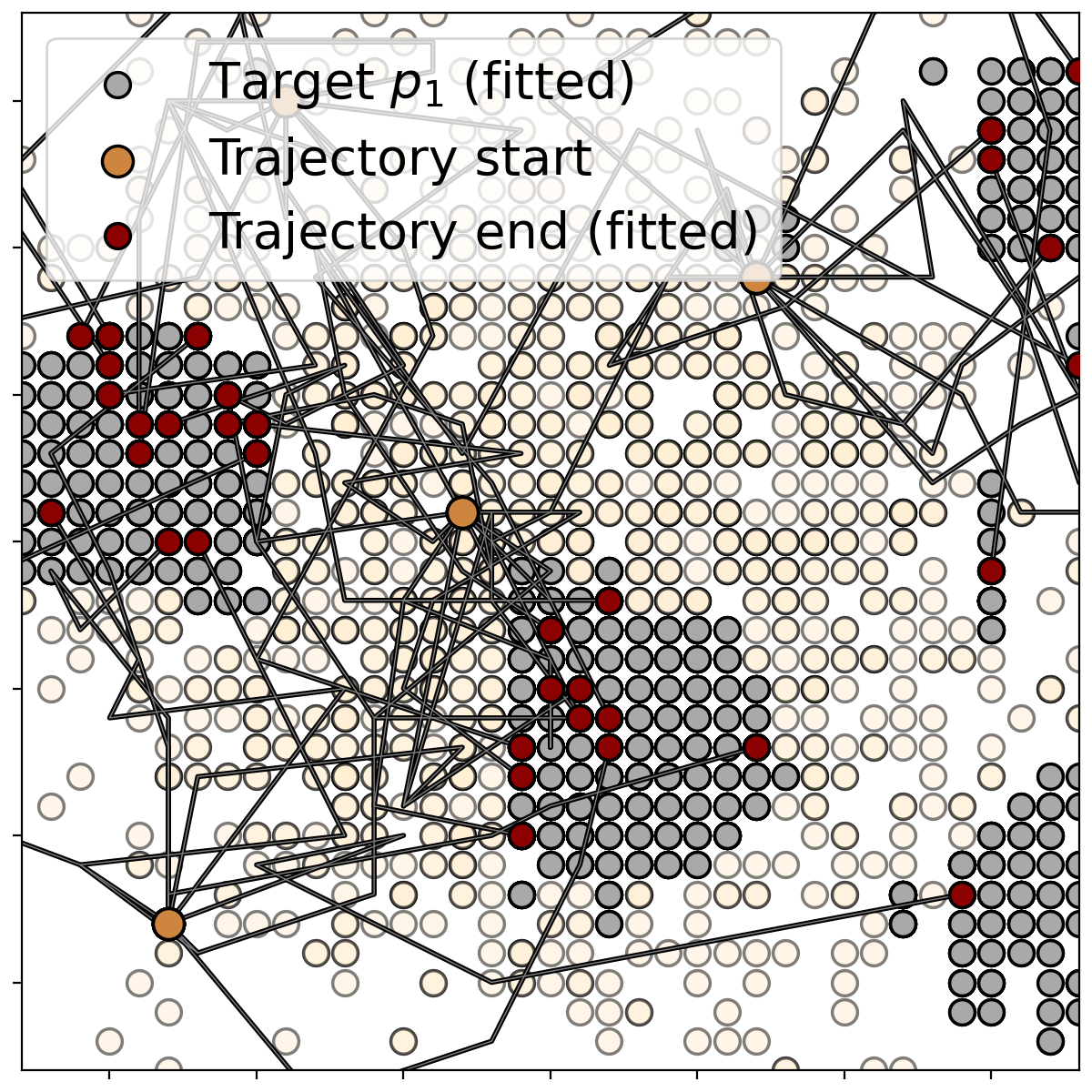}
        \caption{CSBM KL/16}
    \end{subfigure}
    \begin{subfigure}[b]{0.19\linewidth}
        \centering
        \includegraphics[width=0.995\linewidth]{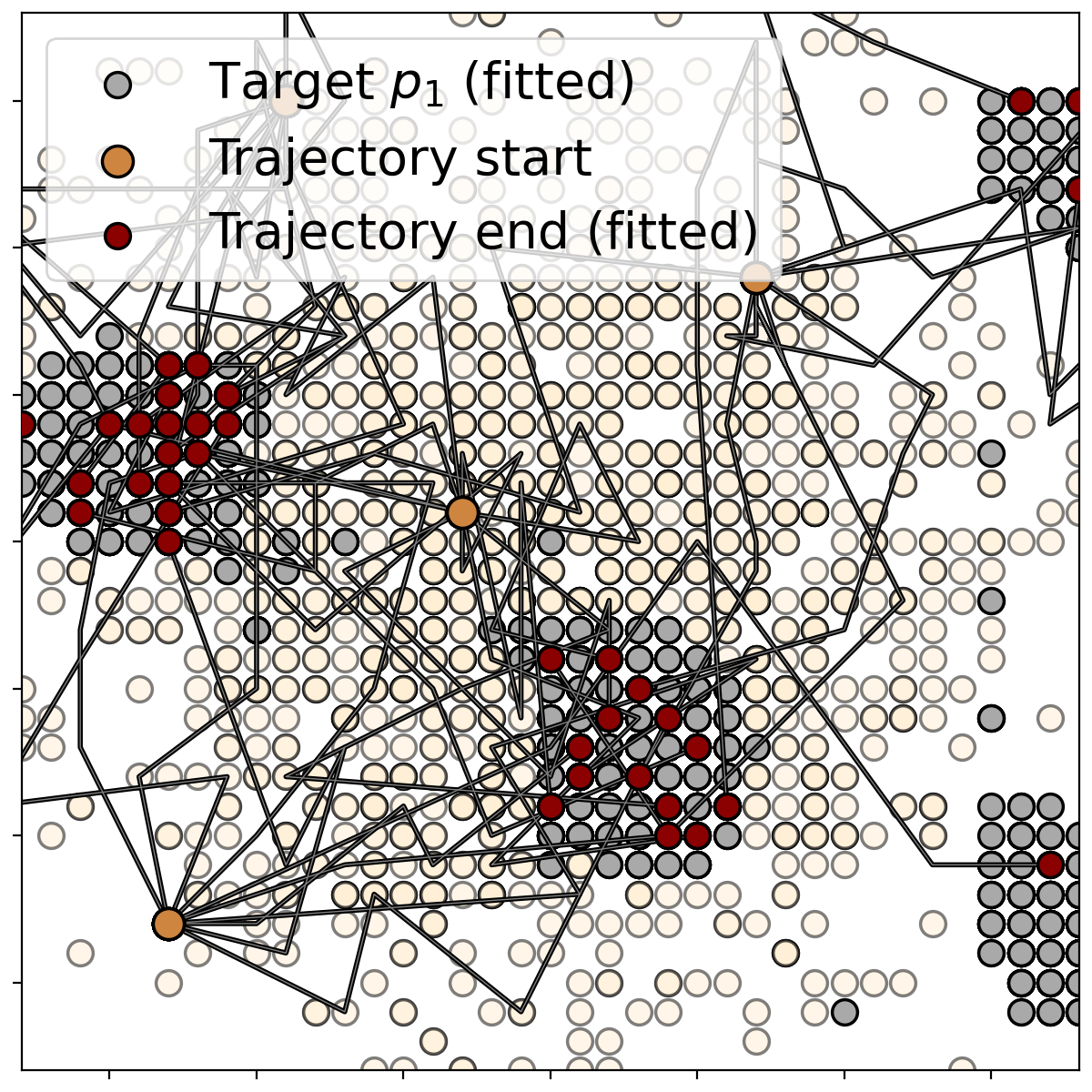}
        \caption{CSBM KL/64}
    \end{subfigure}
    \begin{subfigure}[b]{0.19\linewidth}
        \centering
        \includegraphics[width=0.995\linewidth]{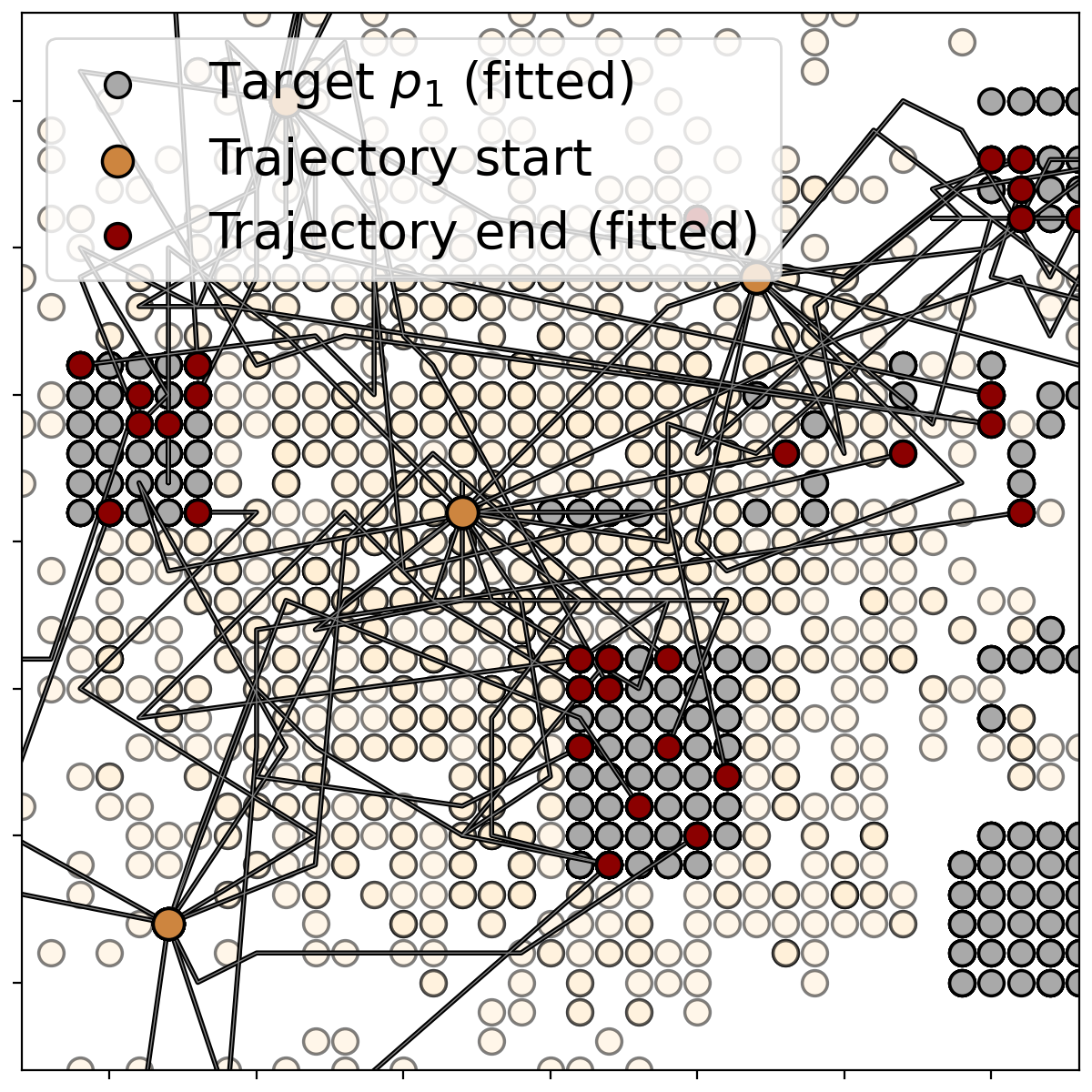}
        \caption{CSBM MSE/16}
    \end{subfigure}
    \begin{subfigure}[b]{0.19\linewidth}
        \centering
        \includegraphics[width=0.995\linewidth]{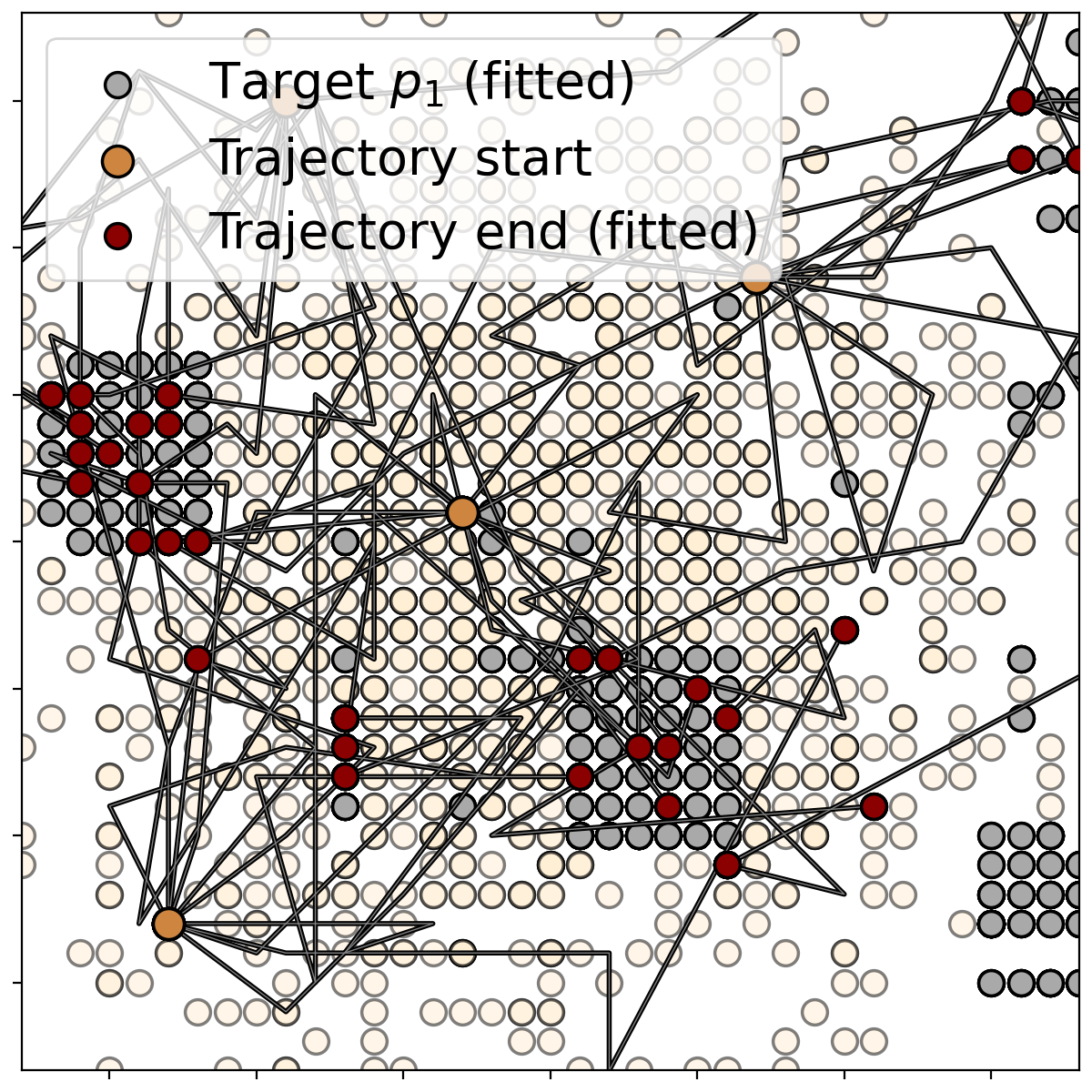}
        \caption{CSBM MSE/64}
    \end{subfigure}
    \\
    \begin{subfigure}[b]{0.19\linewidth}
        \centering
        \includegraphics[width=0.995\linewidth]{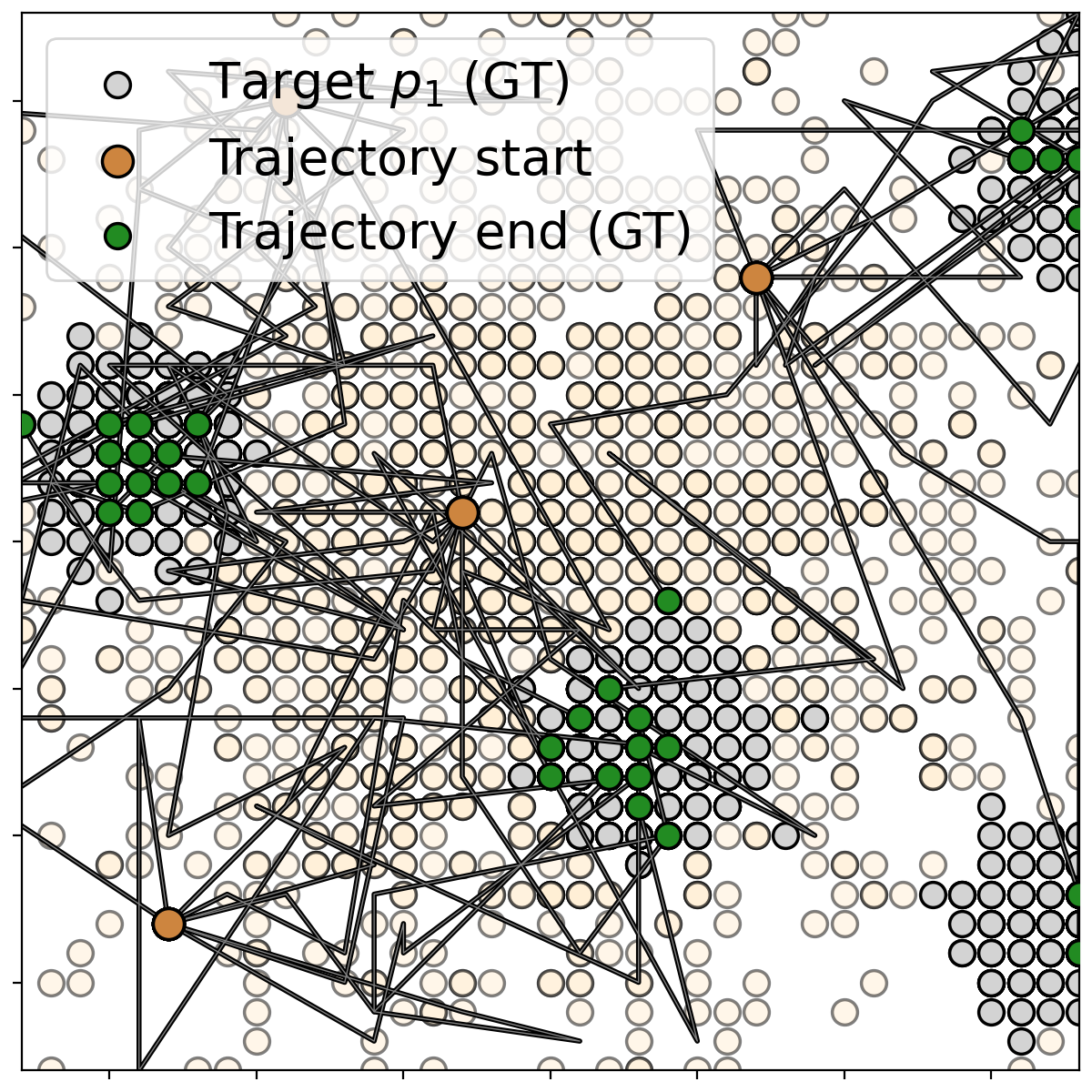}
        \caption{Benchmark}
    \end{subfigure}
    \begin{subfigure}[b]{0.19\linewidth}
        \centering
        \includegraphics[width=0.995\linewidth]{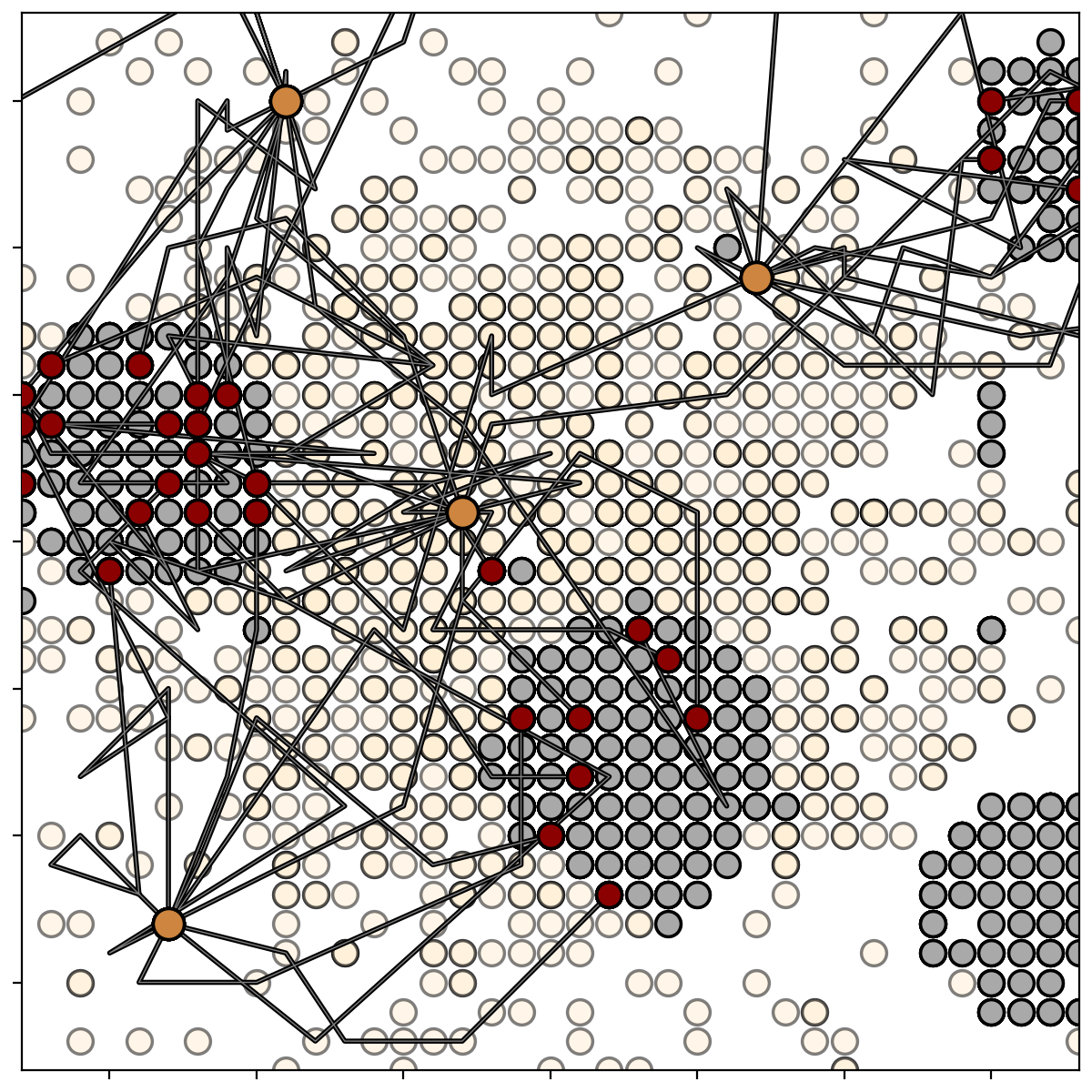}
        \caption{$\alpha$-CSBM KL/16}
    \end{subfigure}
    \begin{subfigure}[b]{0.19\linewidth}
        \centering
        \includegraphics[width=0.995\linewidth]{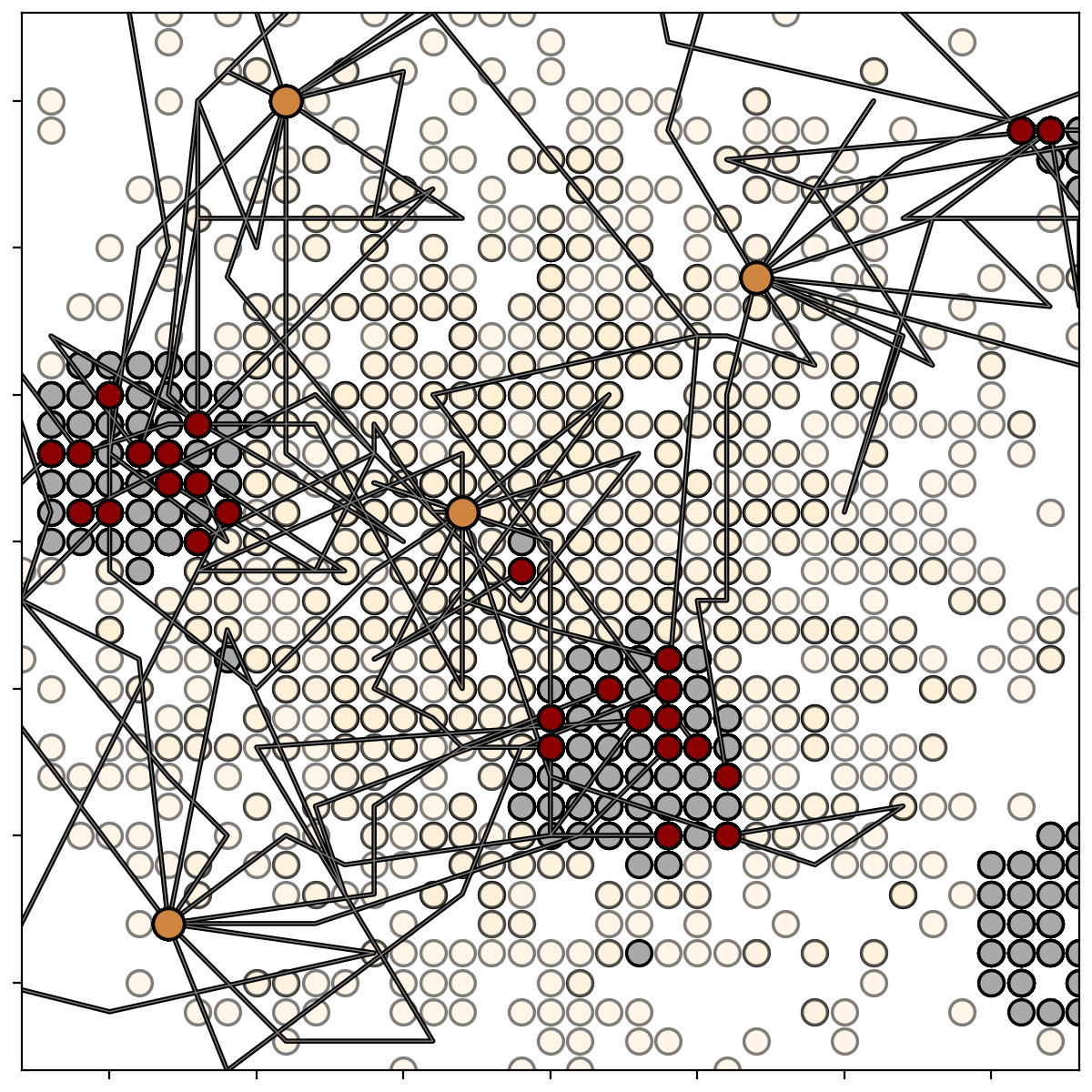}
        \caption{$\alpha$-CSBM KL/64}
    \end{subfigure}
    \begin{subfigure}[b]{0.19\linewidth}
        \centering
        \includegraphics[width=0.995\linewidth]{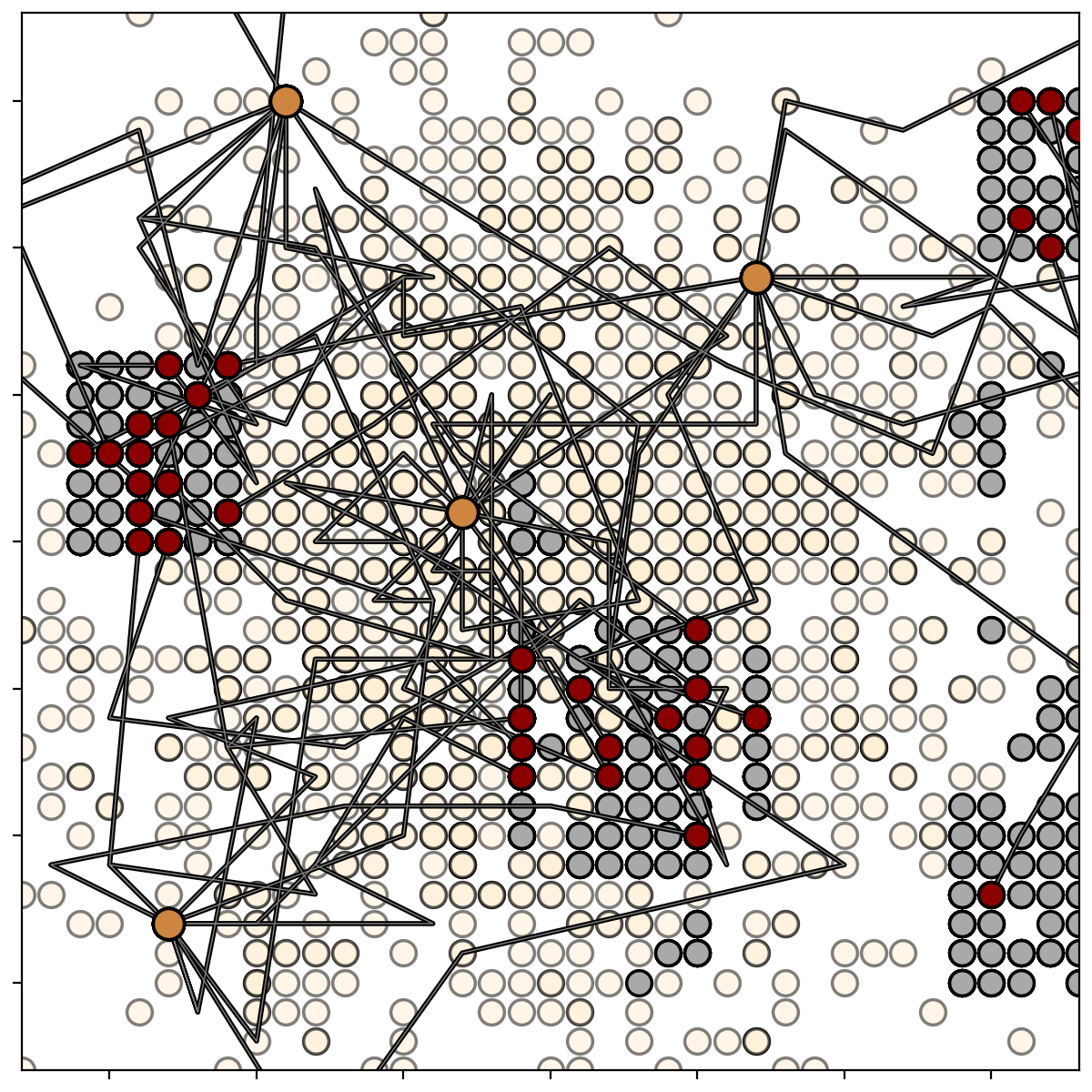}
        \caption{$\alpha$-CSBM MSE/16}
    \end{subfigure}
    \begin{subfigure}[b]{0.19\linewidth}
        \centering
        \includegraphics[width=0.995\linewidth]{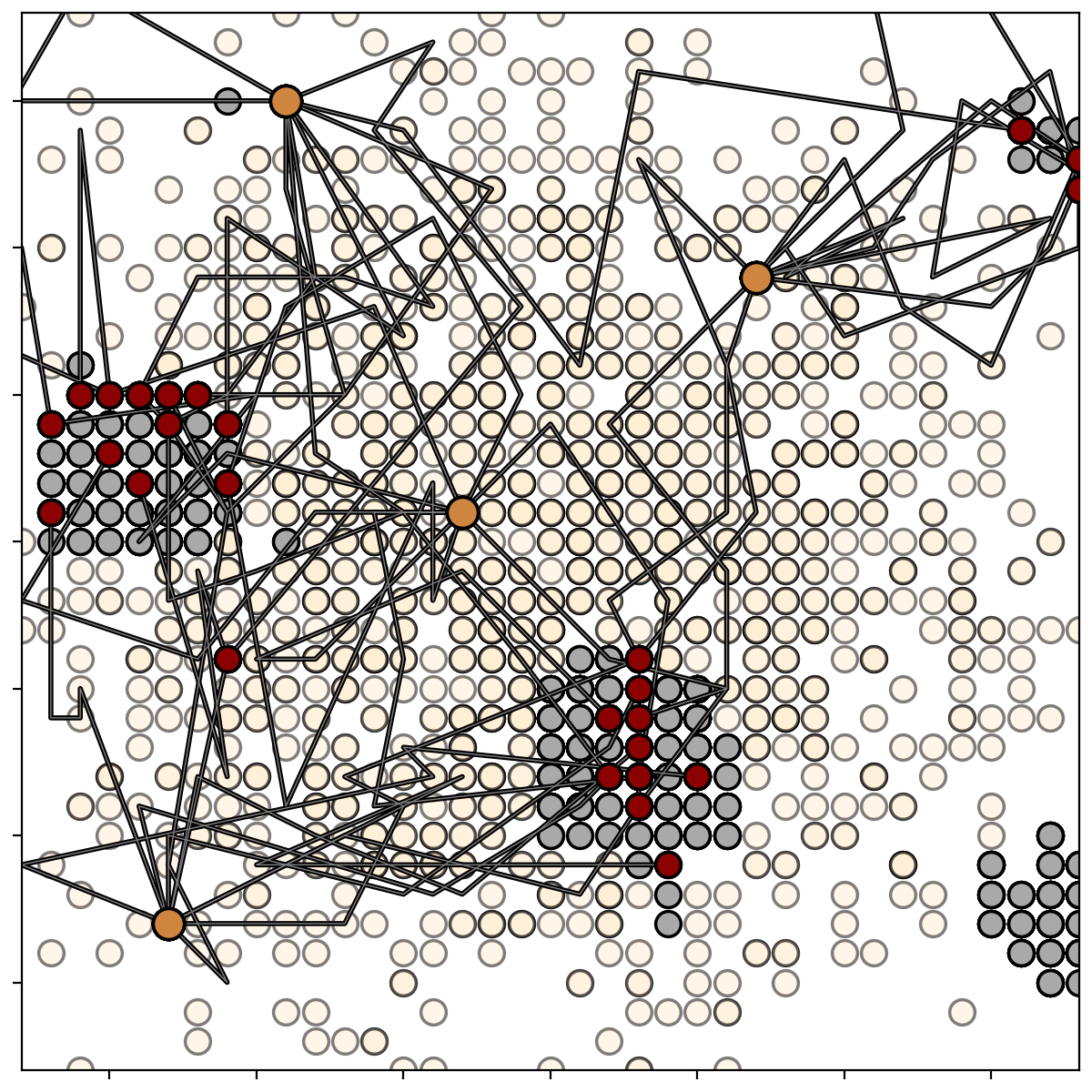}
        \caption{$\alpha$-CSBM MSE/64}
    \end{subfigure}
    \\
    \begin{subfigure}[b]{0.19\linewidth}
        \centering
        \includegraphics[width=0.995\linewidth]{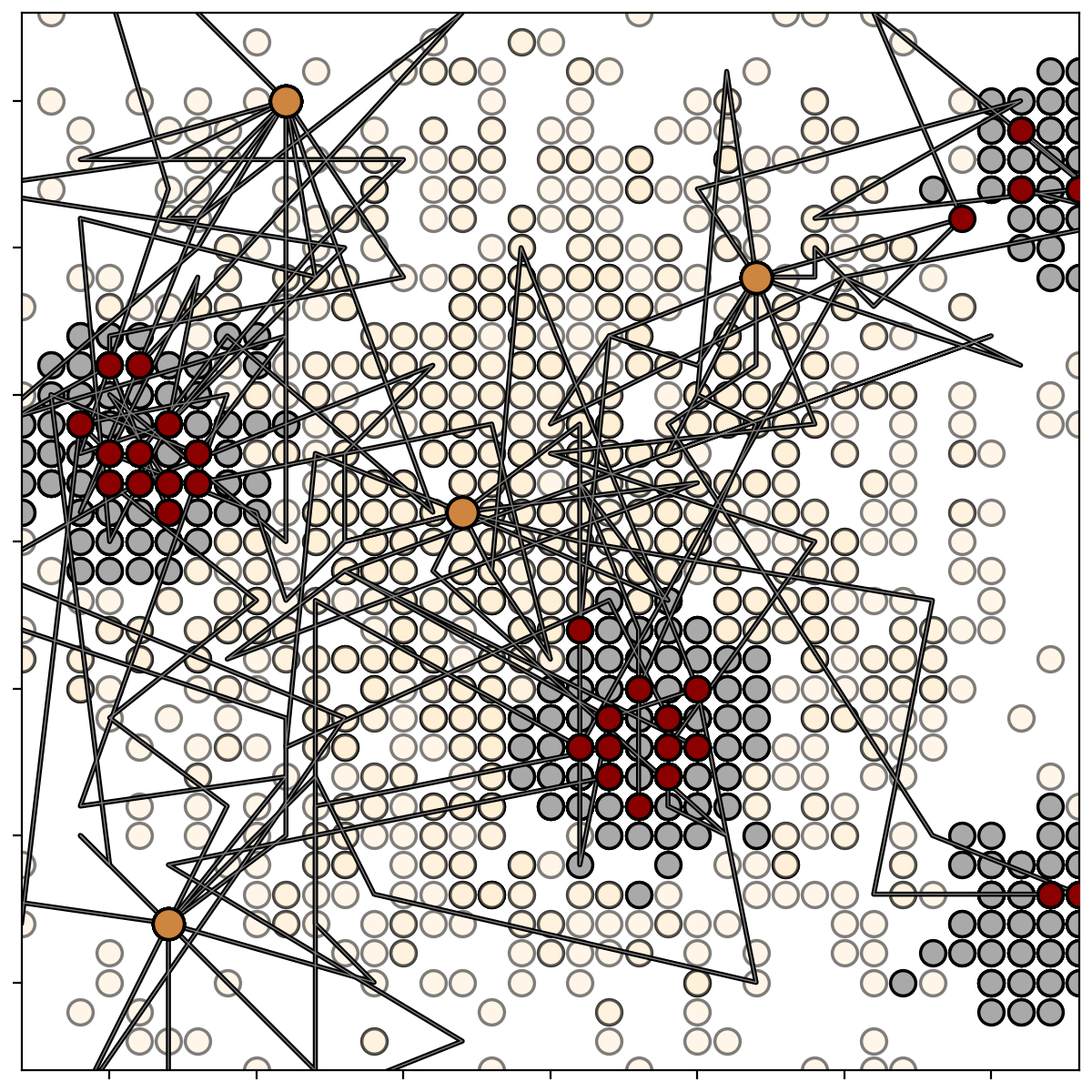}
        \caption{DLightSB}
    \end{subfigure}
    \begin{subfigure}[b]{0.19\linewidth}
        \centering
        \includegraphics[width=0.995\linewidth]{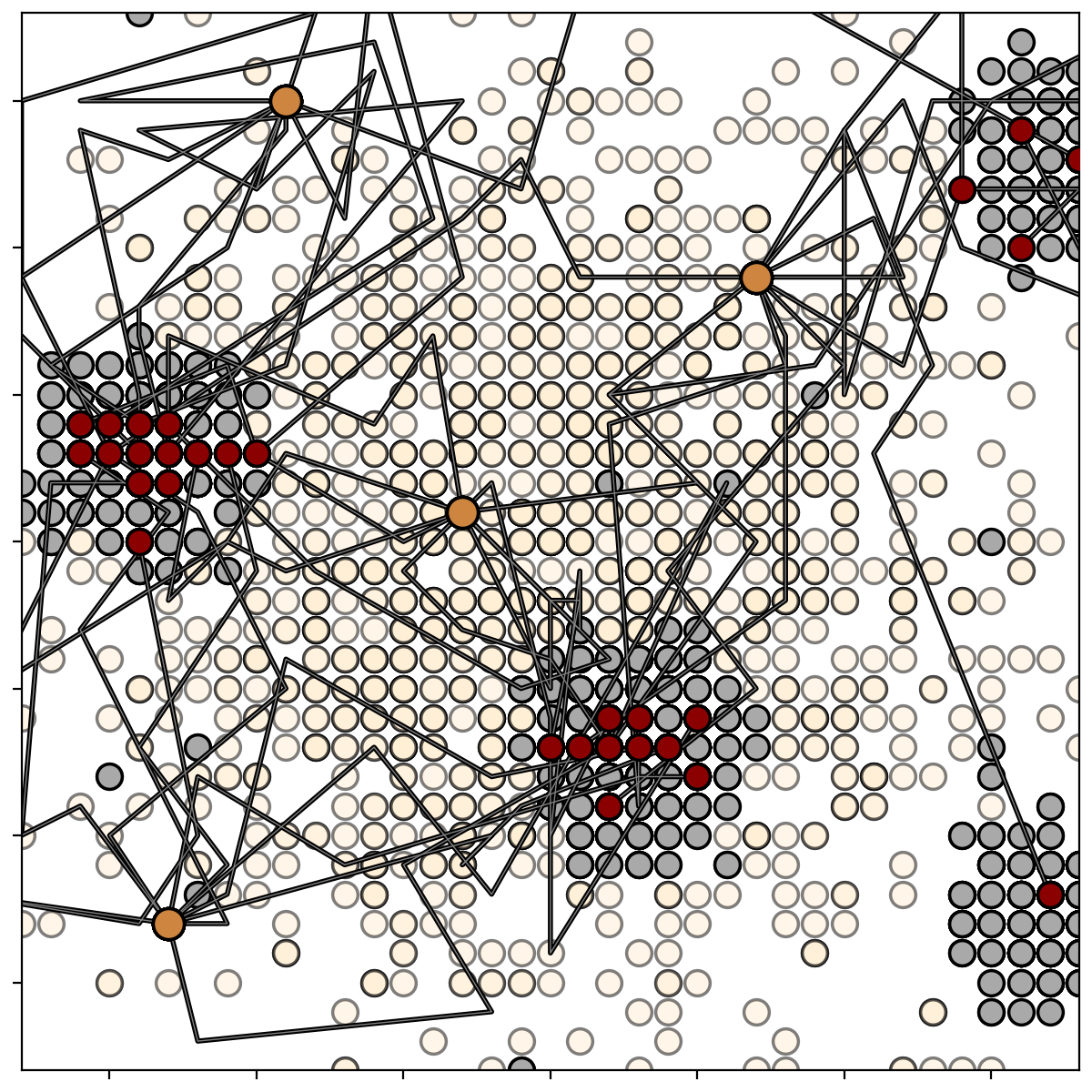}
        \caption{DLightSB-M KL/16}
    \end{subfigure}
    \begin{subfigure}[b]{0.19\linewidth}
        \centering
        \includegraphics[width=0.995\linewidth]{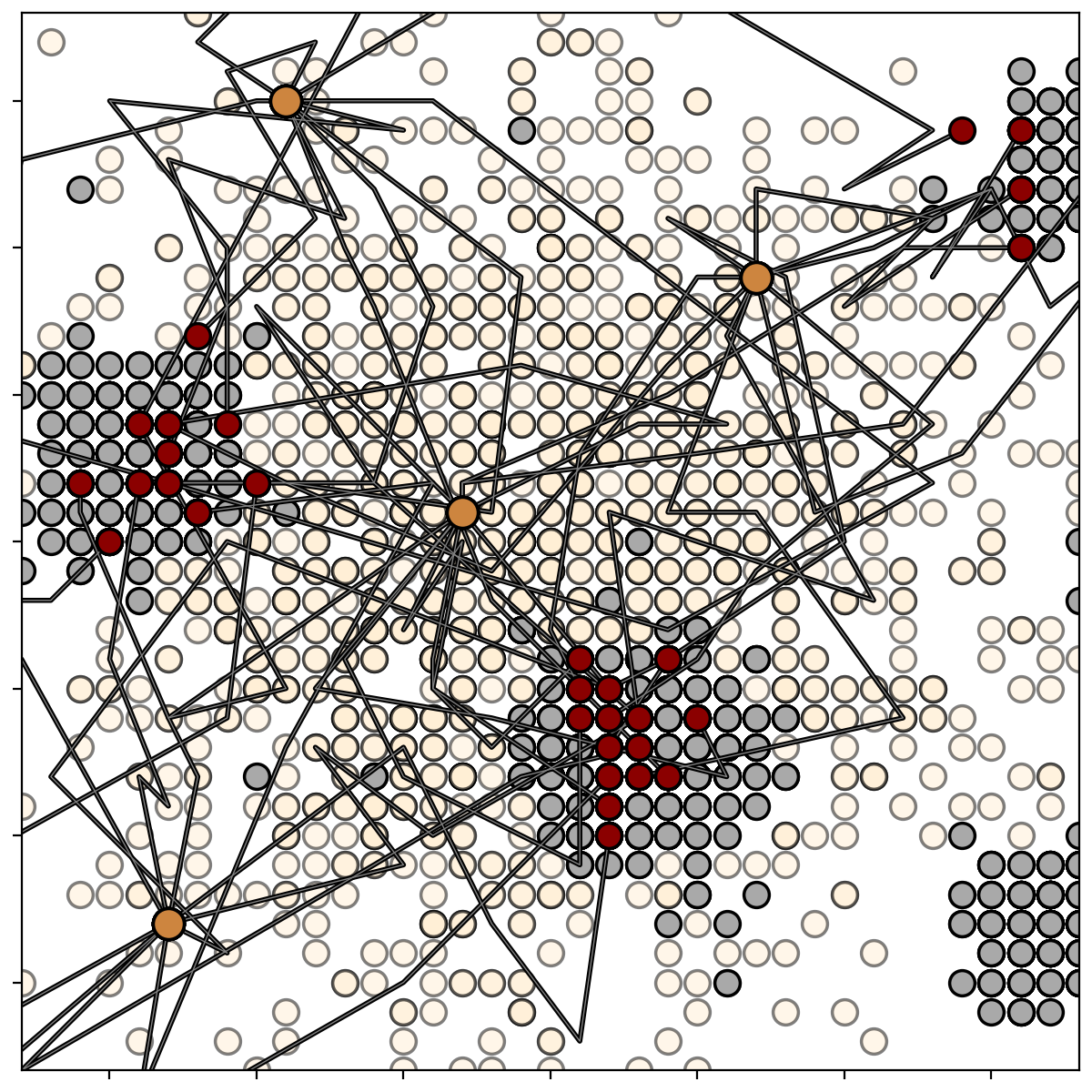}
        \caption{DLightSB-M KL/64}
    \end{subfigure}
    \begin{subfigure}[b]{0.19\linewidth}
        \centering
        \includegraphics[width=0.995\linewidth]{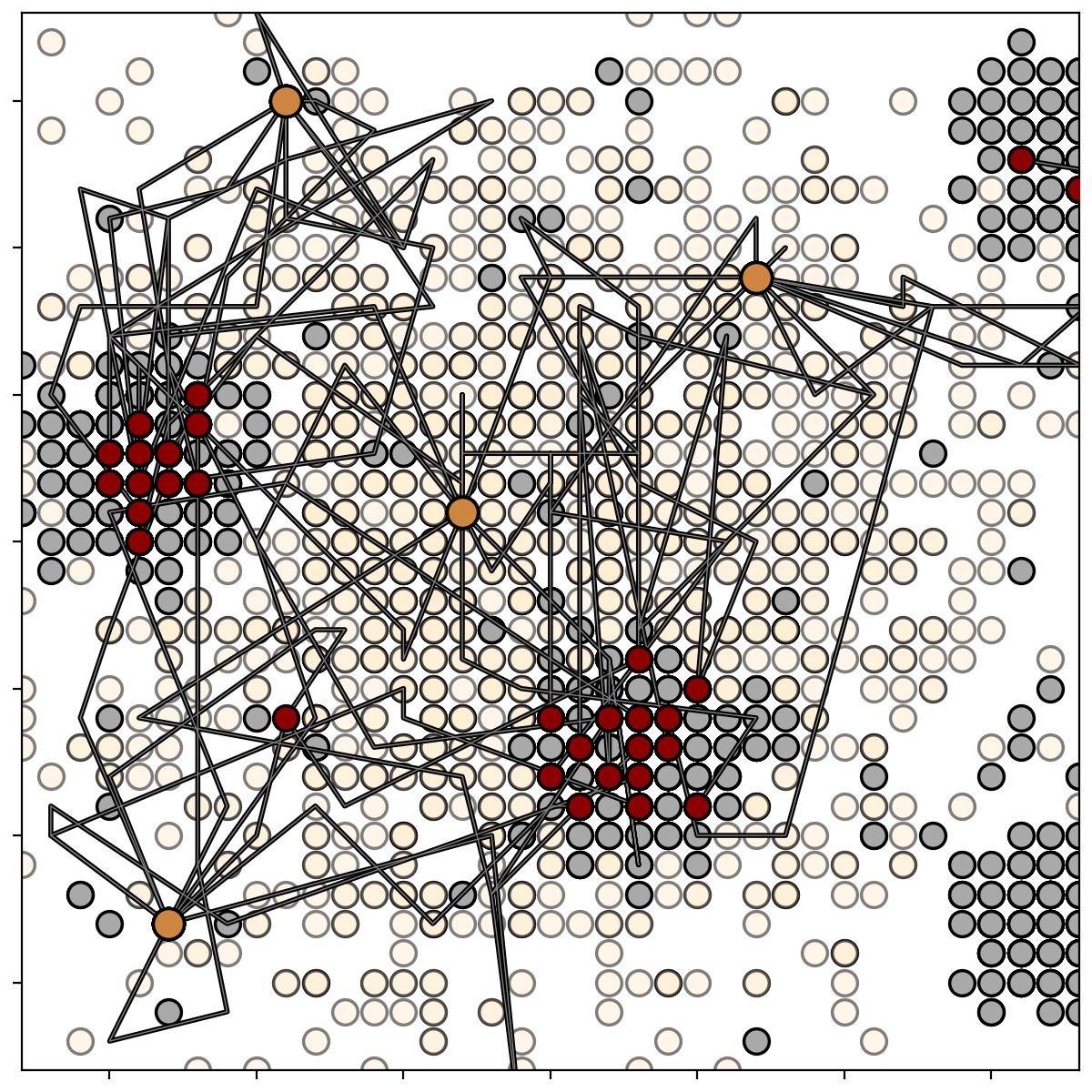}
        \caption{DLightSB-M MSE/16}
    \end{subfigure}
    \begin{subfigure}[b]{0.19\linewidth}
        \centering
        \includegraphics[width=0.995\linewidth]{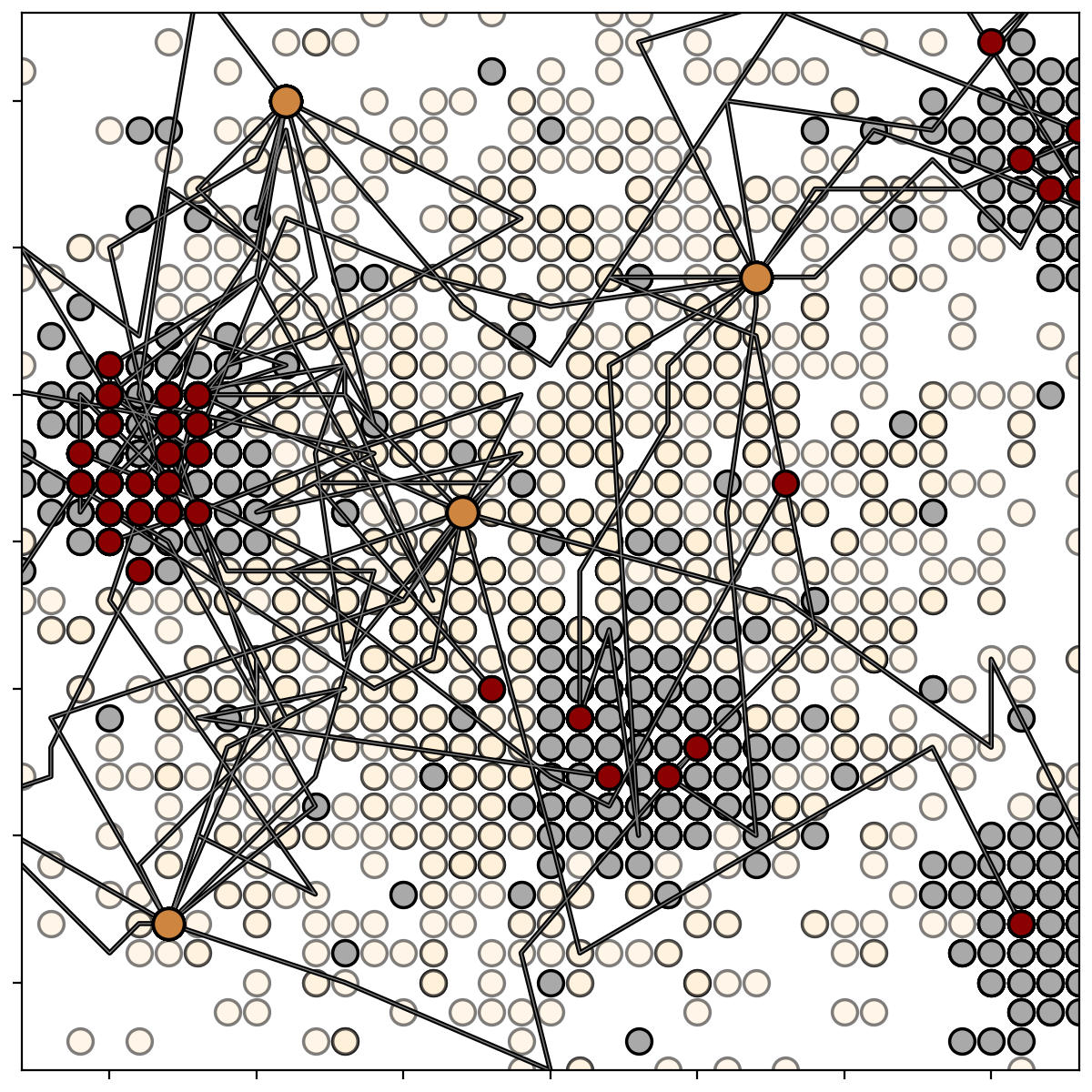}
        \caption{DLightSB-M MSE/64}
    \end{subfigure}
    \caption{ \centering Samples from all methods on the high-dimensional Gaussian mixture benchmark using the Gaussian reference process $q^{\text{gauss}}$ with $\gamma = 0.05$.}
    \label{figure:d2_g005_samples}
\end{figure}

\begin{figure}[h]
    \centering
    \captionsetup[subfigure]{font=scriptsize}
    \begin{subfigure}[b]{0.19\linewidth}
        \centering
        \includegraphics[width=0.995\linewidth]{images/benchmark/pairs_d2_u0005.png}
        \caption{Input and Target}
    \end{subfigure}
    \begin{subfigure}[b]{0.19\linewidth}
        \centering
        \includegraphics[width=0.995\linewidth]{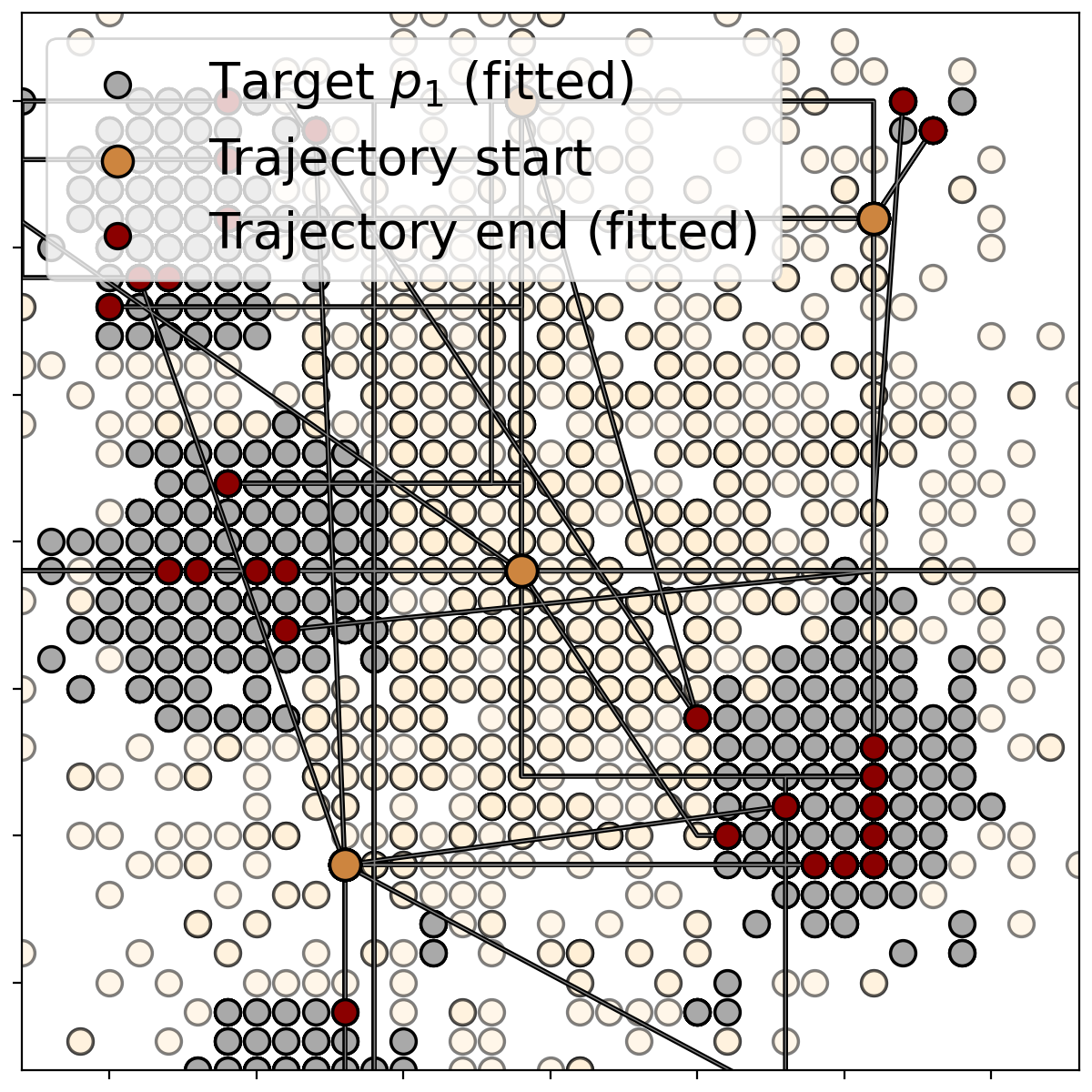}
        \caption{CSBM KL/16}
    \end{subfigure}
    \begin{subfigure}[b]{0.19\linewidth}
        \centering
        \includegraphics[width=0.995\linewidth]{images/csbm/d2_u0005_t63_kl.png}
        \caption{CSBM KL/64}
    \end{subfigure}
    \begin{subfigure}[b]{0.19\linewidth}
        \centering
        \includegraphics[width=0.995\linewidth]{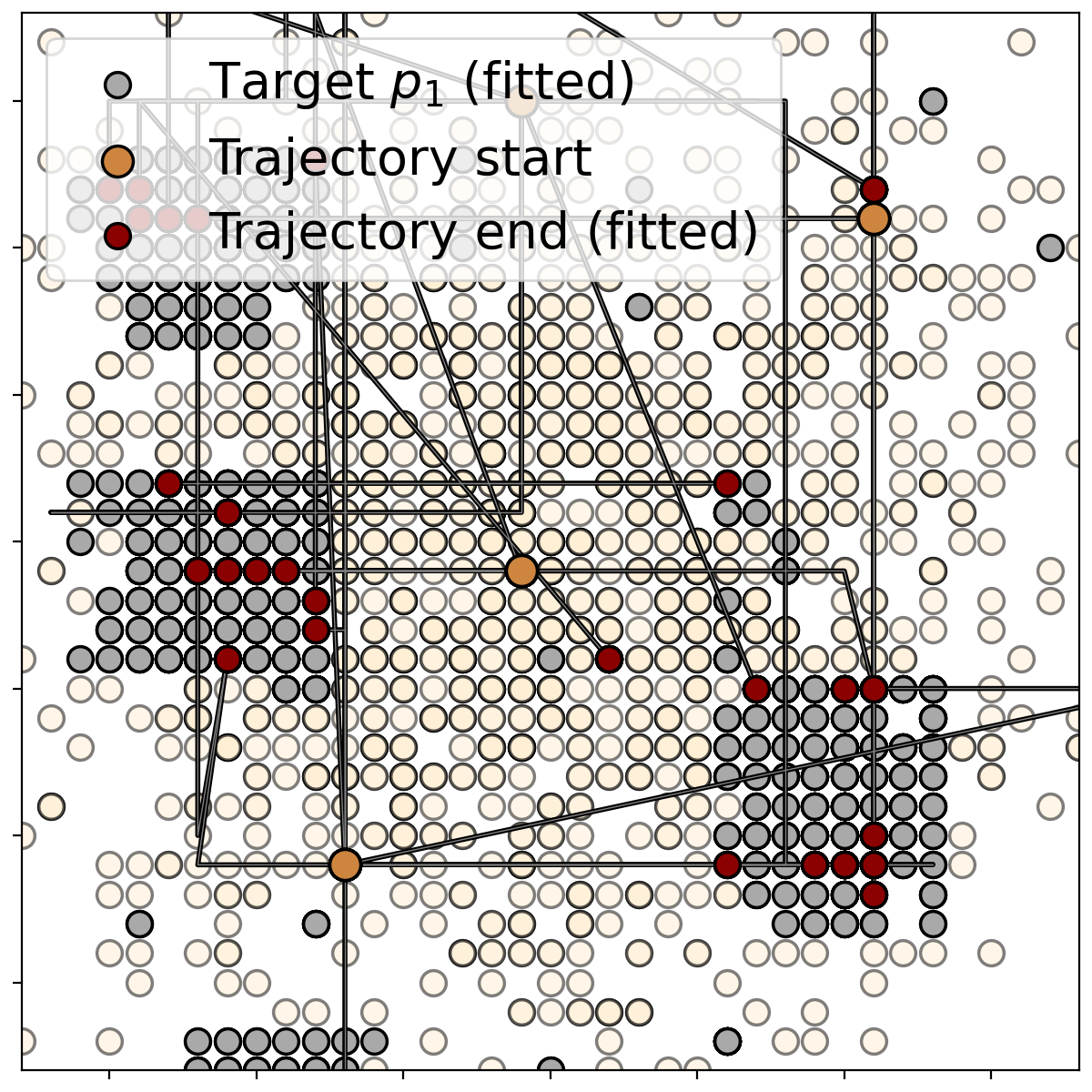}
        \caption{CSBM MSE/16}
    \end{subfigure}
    \begin{subfigure}[b]{0.19\linewidth}
        \centering
        \includegraphics[width=0.995\linewidth]{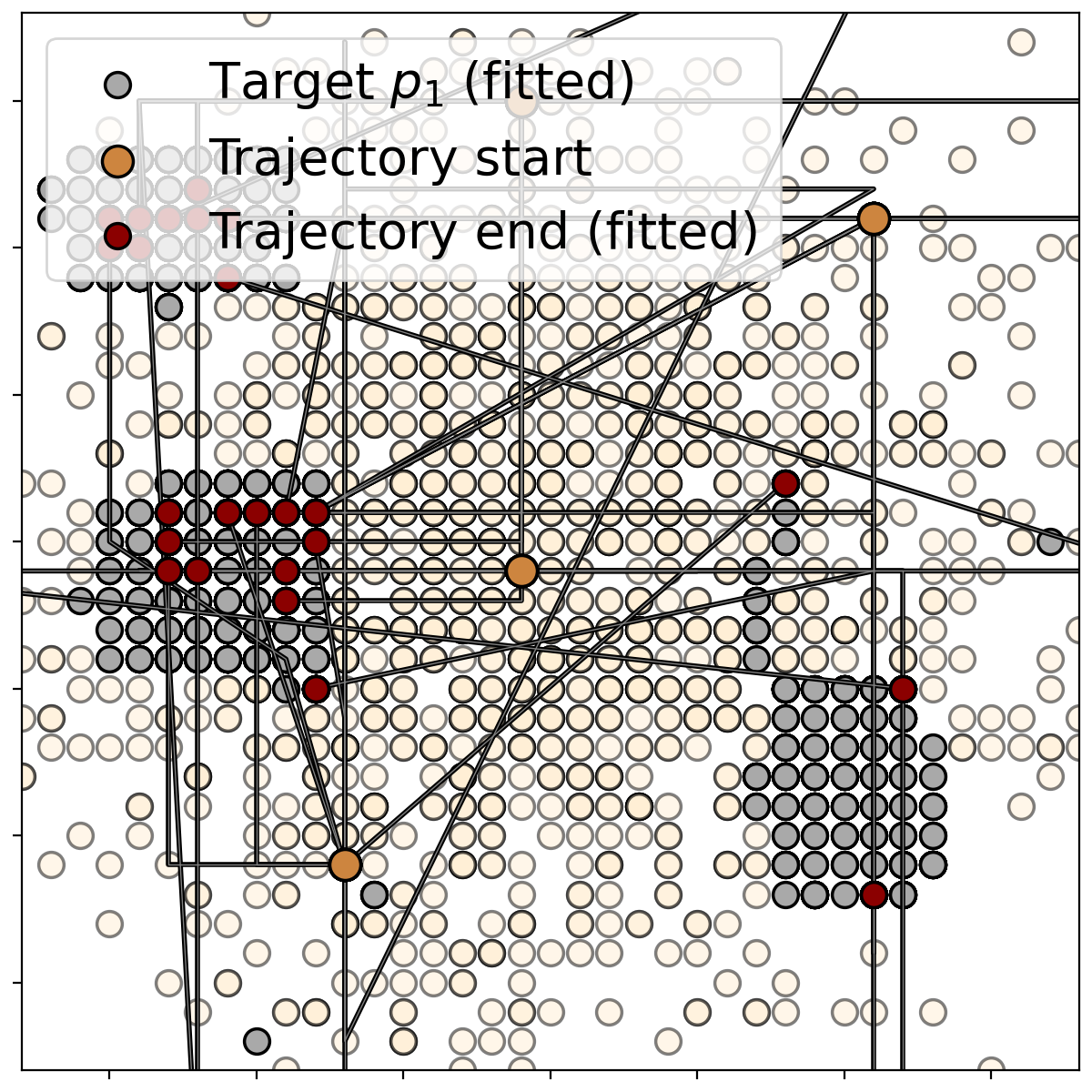}
        \caption{CSBM MSE/64}
    \end{subfigure}
    \\
    \begin{subfigure}[b]{0.19\linewidth}
        \centering
        \includegraphics[width=0.995\linewidth]{images/benchmark/d2_u0005.png}
        \caption{Benchmark}
    \end{subfigure}
    \begin{subfigure}[b]{0.19\linewidth}
        \centering
        \includegraphics[width=0.995\linewidth]{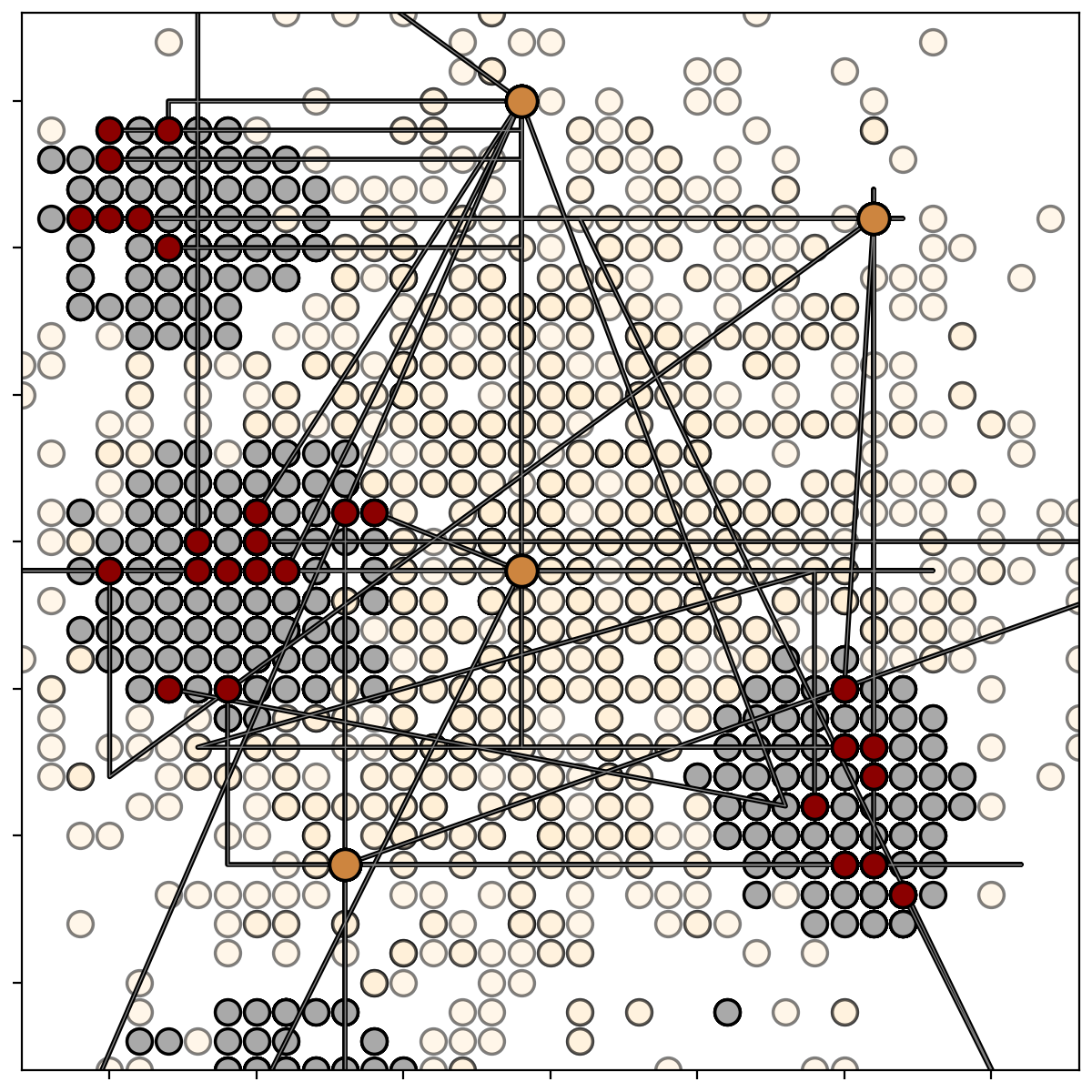}
        \caption{$\alpha$-CSBM KL/16}
    \end{subfigure}
    \begin{subfigure}[b]{0.19\linewidth}
        \centering
        \includegraphics[width=0.995\linewidth]{images/alpha_csbm/d2_u0005_t63_kl.png}
        \caption{$\alpha$-CSBM KL/64}
    \end{subfigure}
    \begin{subfigure}[b]{0.19\linewidth}
        \centering
        \includegraphics[width=0.995\linewidth]{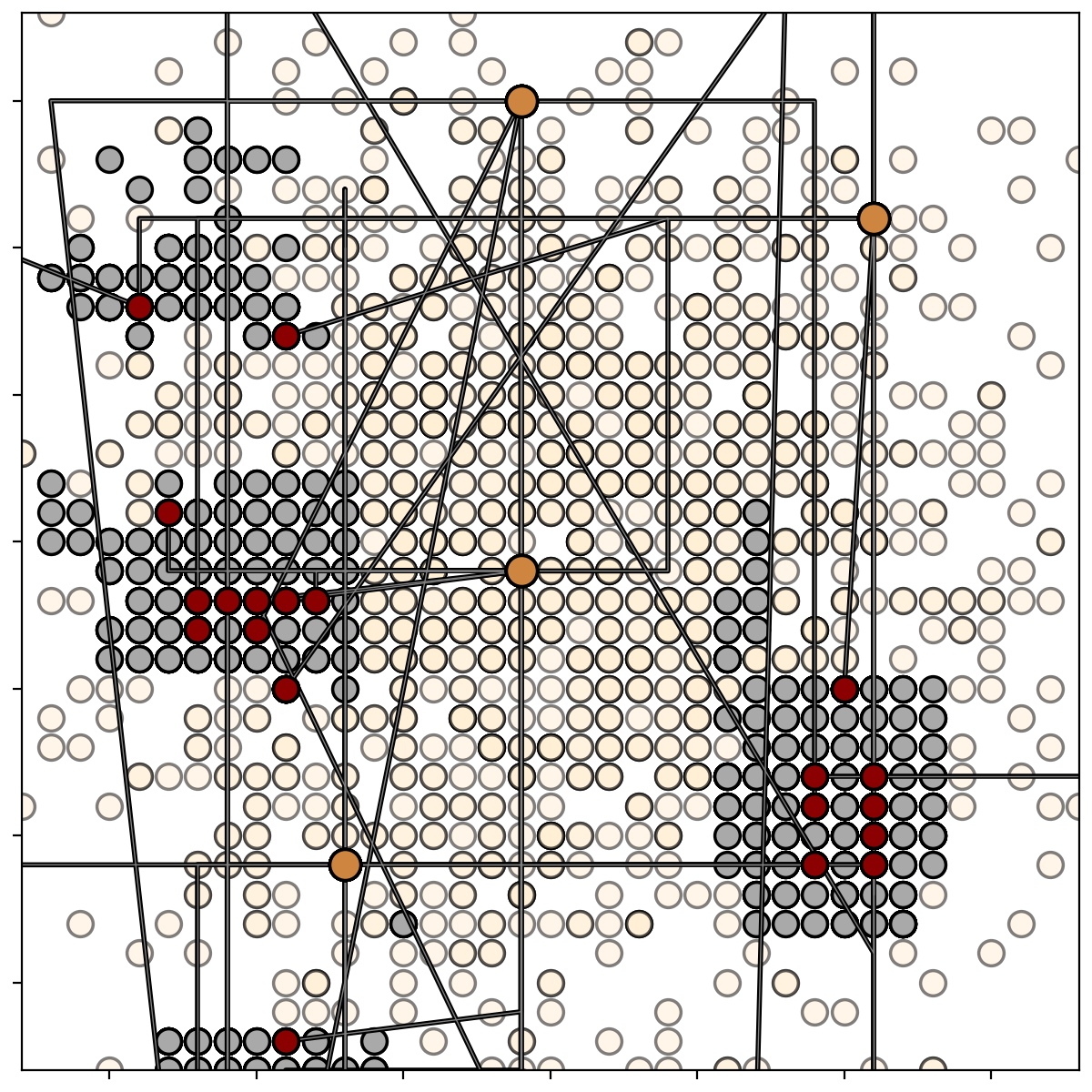}
        \caption{$\alpha$-CSBM MSE/16}
    \end{subfigure}
    \begin{subfigure}[b]{0.19\linewidth}
        \centering
        \includegraphics[width=0.995\linewidth]{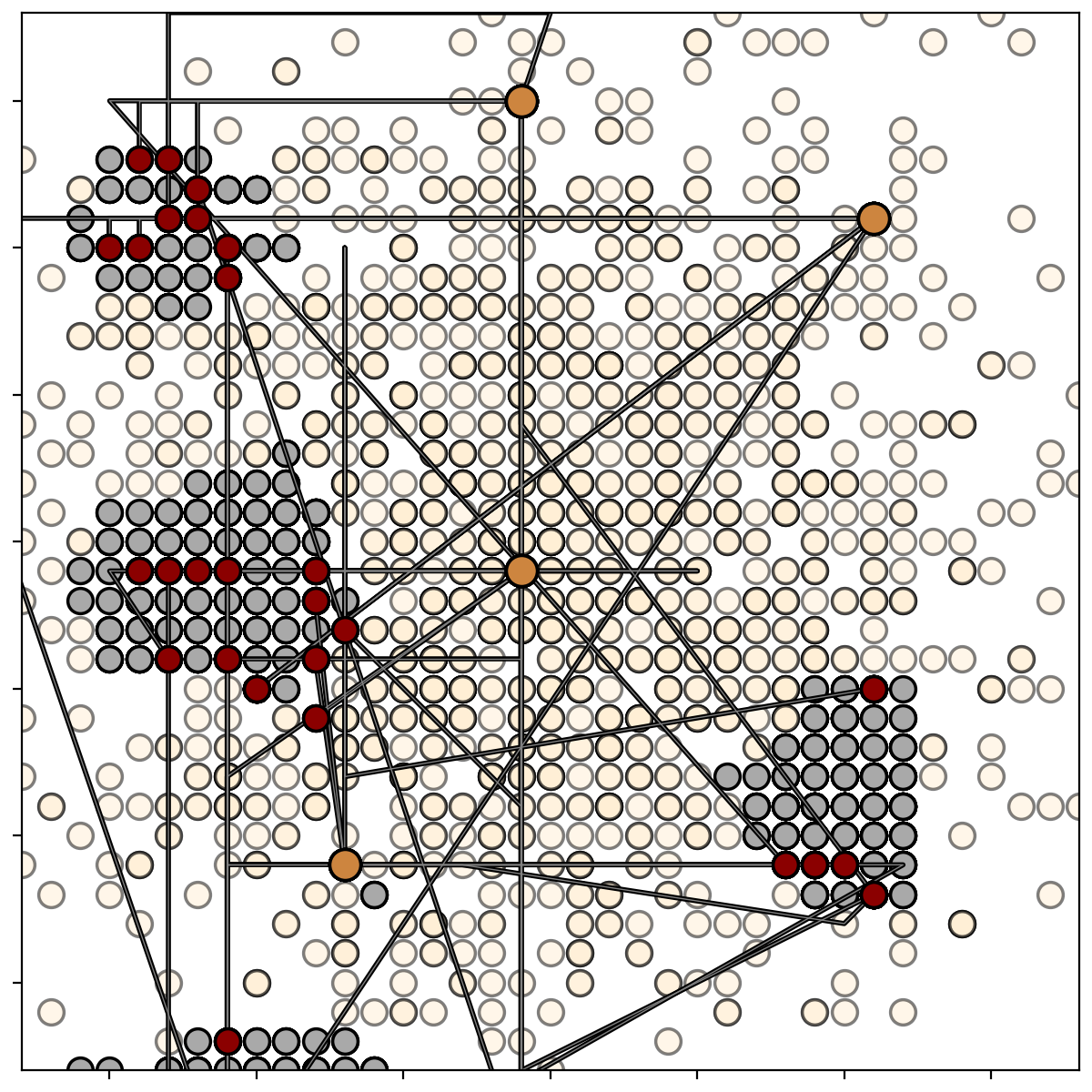}
        \caption{$\alpha$-CSBM MSE/64}
    \end{subfigure}
    \\
    \begin{subfigure}[b]{0.19\linewidth}
        \centering
        \includegraphics[width=0.995\linewidth]{images/dlight_sb/d2_u0005.png}
        \caption{DLightSB}
    \end{subfigure}
    \begin{subfigure}[b]{0.19\linewidth}
        \centering
        \includegraphics[width=0.995\linewidth]{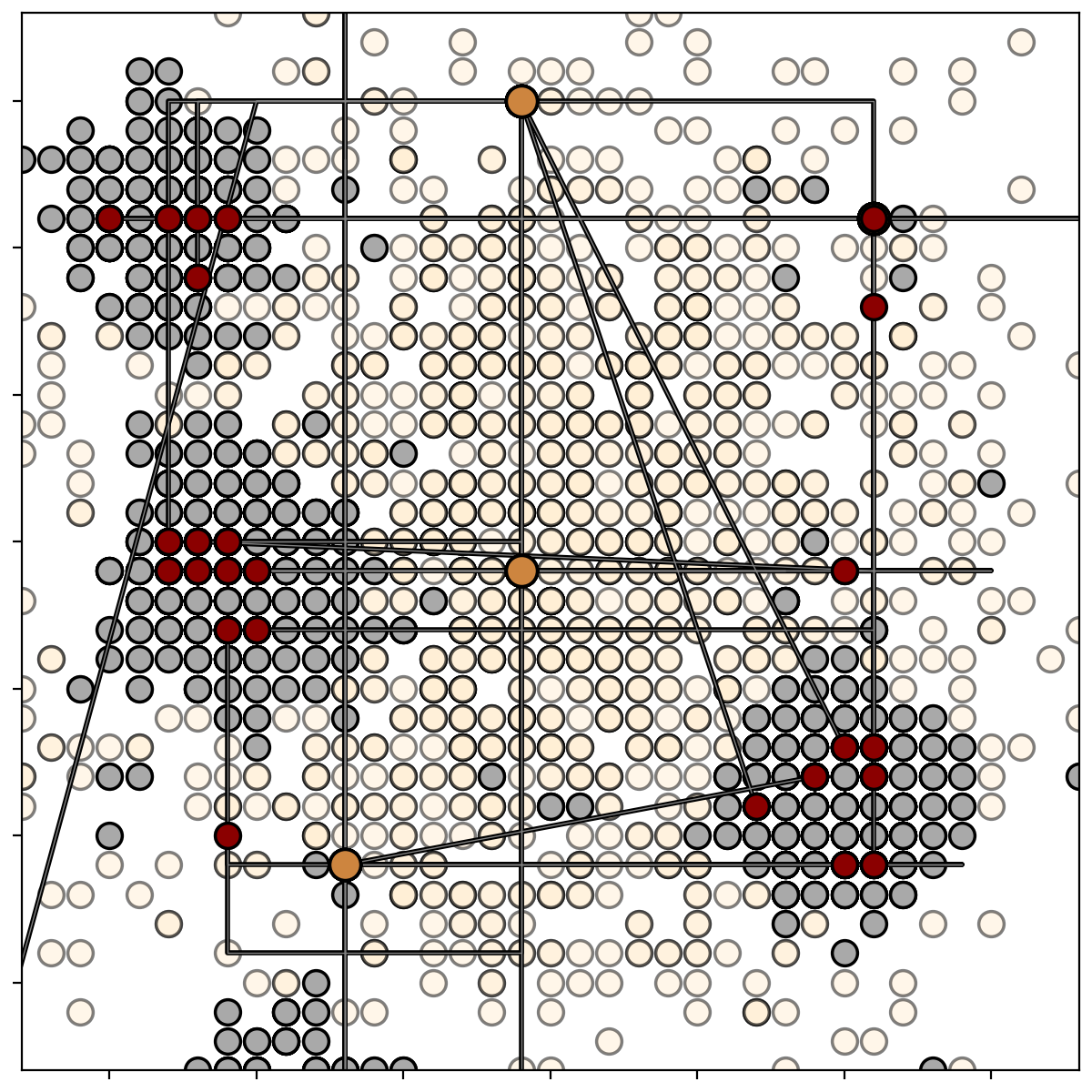}
        \caption{DLightSB-M KL/16}
    \end{subfigure}
    \begin{subfigure}[b]{0.19\linewidth}
        \centering
        \includegraphics[width=0.995\linewidth]{images/dlight_sb_m/d2_u0005_t63_kl.png}
        \caption{DLightSB-M KL/64}
    \end{subfigure}
    \begin{subfigure}[b]{0.19\linewidth}
        \centering
        \includegraphics[width=0.995\linewidth]{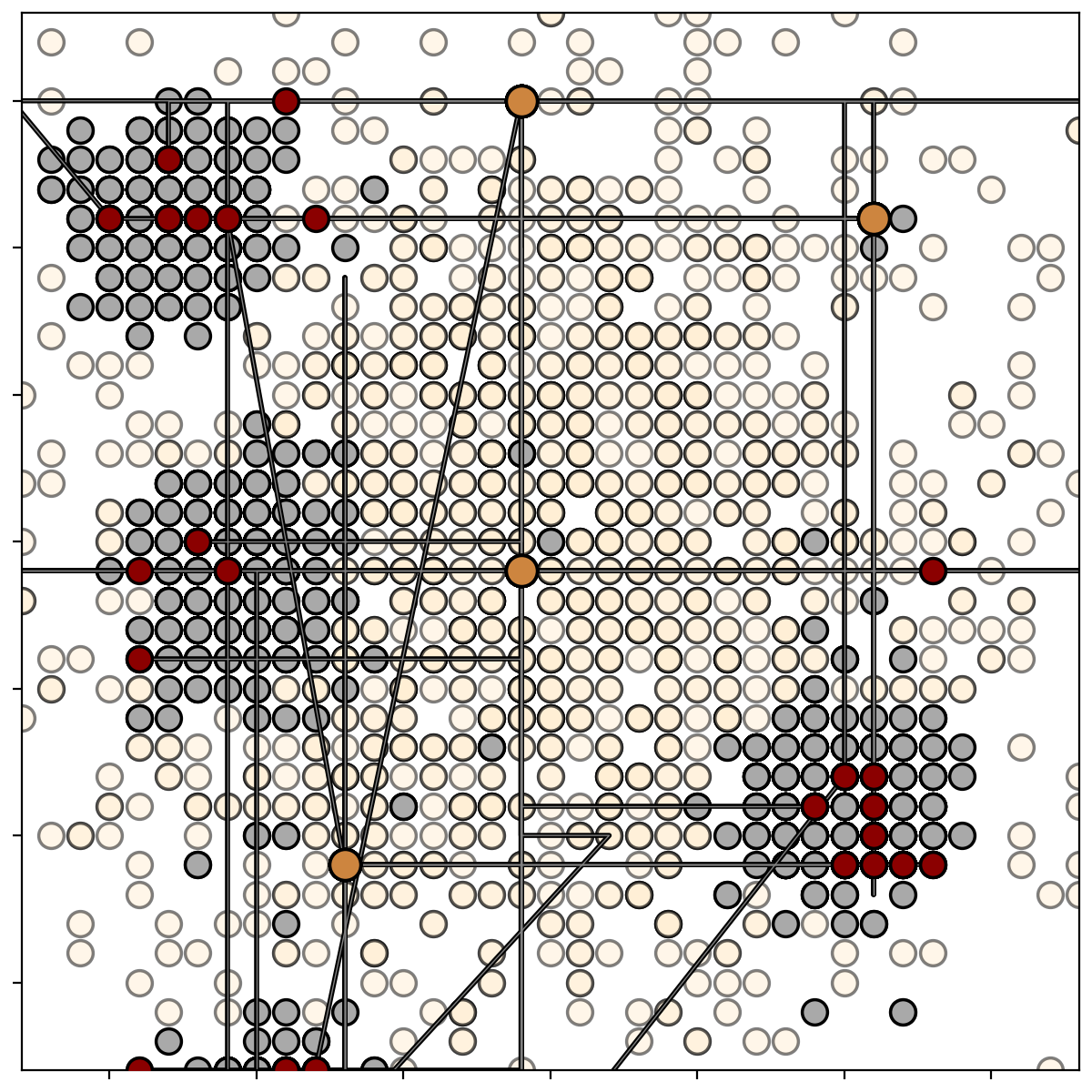}
        \caption{DLightSB-M MSE/16}
    \end{subfigure}
    \begin{subfigure}[b]{0.19\linewidth}
        \centering
        \includegraphics[width=0.995\linewidth]{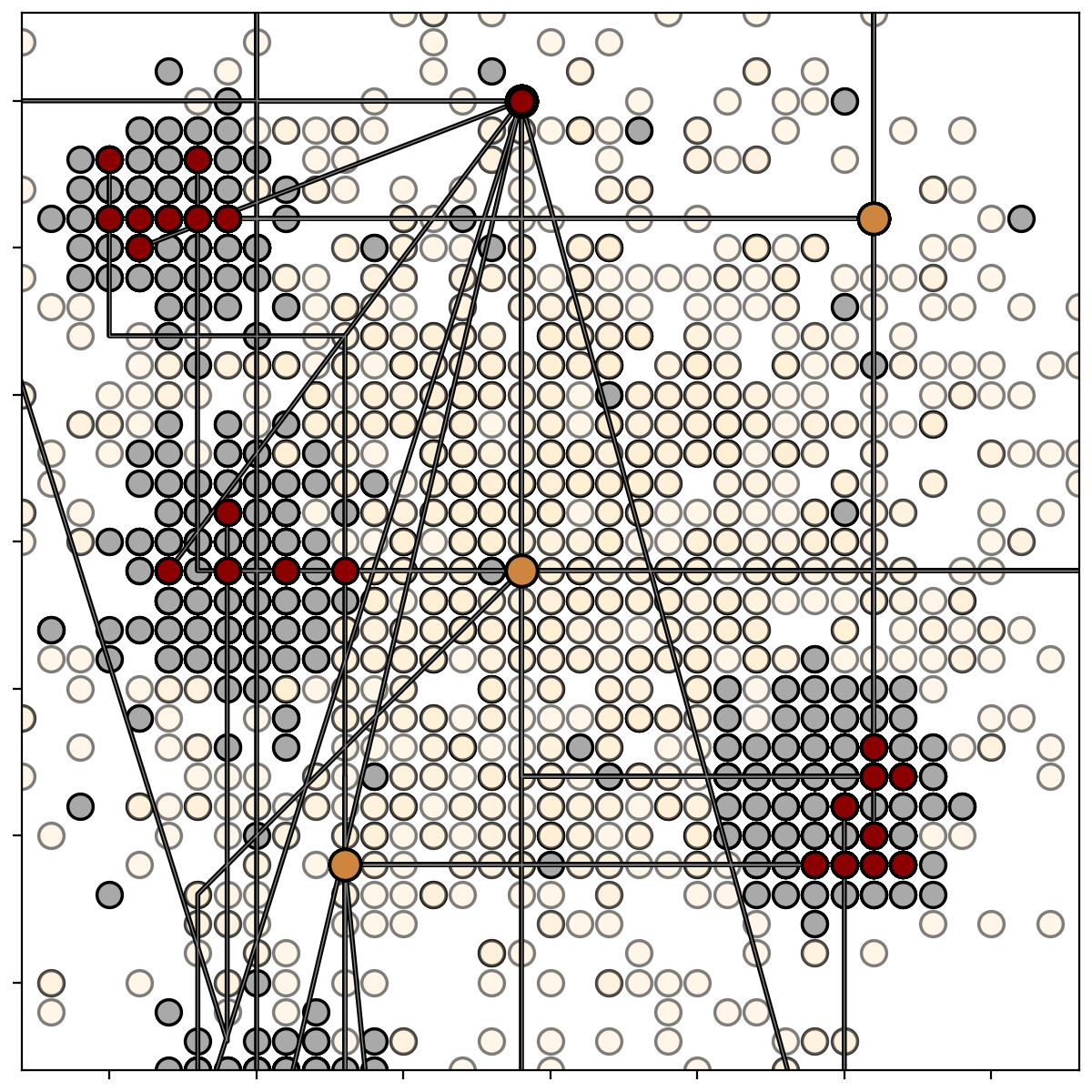}
        \caption{DLightSB-M MSE/64}
    \end{subfigure}
    \caption{\centering Samples from all methods on the high-dimensional Gaussian mixture benchmark using the uniform reference process $q^{\text{unif}}$ with $\gamma = 0.005$.}
    \label{figure:d2_u0005_samples}
\end{figure}

\begin{figure}[h]
    \centering
    \captionsetup[subfigure]{font=scriptsize}
    \begin{subfigure}[b]{0.19\linewidth}
        \centering
        \includegraphics[width=0.995\linewidth]{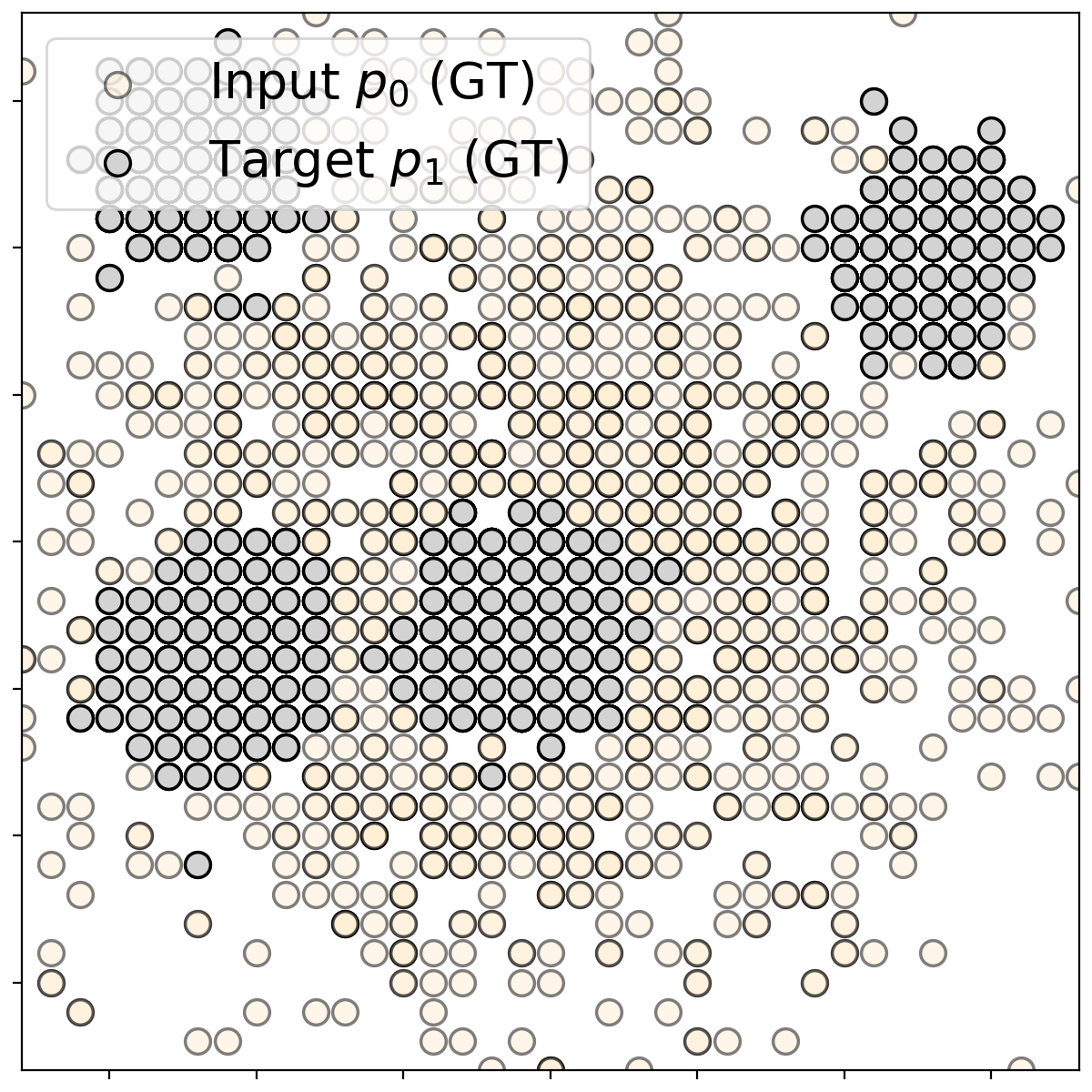}
        \caption{Input and Target}
    \end{subfigure}
    \begin{subfigure}[b]{0.19\linewidth}
        \centering
        \includegraphics[width=0.995\linewidth]{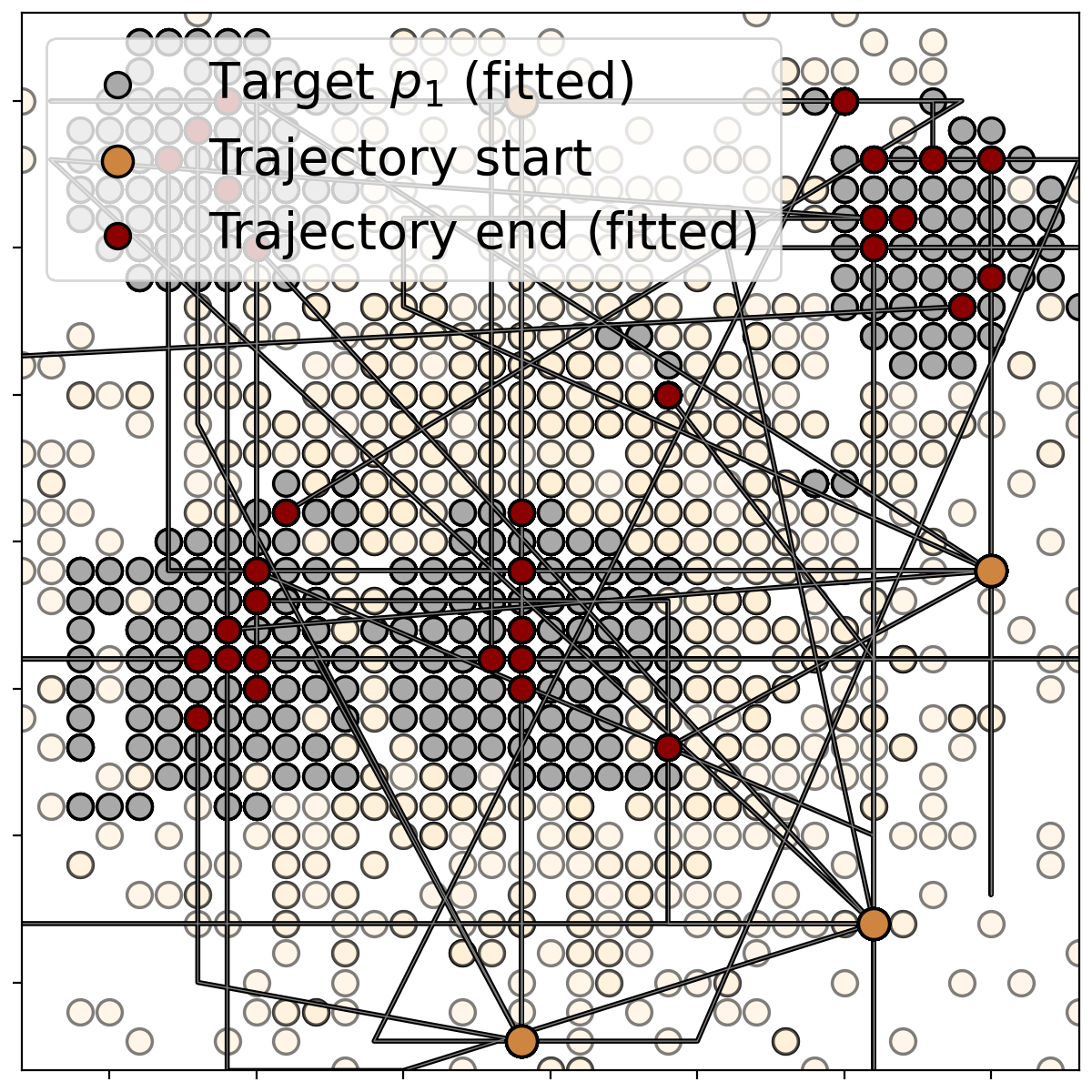}
        \caption{CSBM KL/16}
    \end{subfigure}
    \begin{subfigure}[b]{0.19\linewidth}
        \centering
        \includegraphics[width=0.995\linewidth]{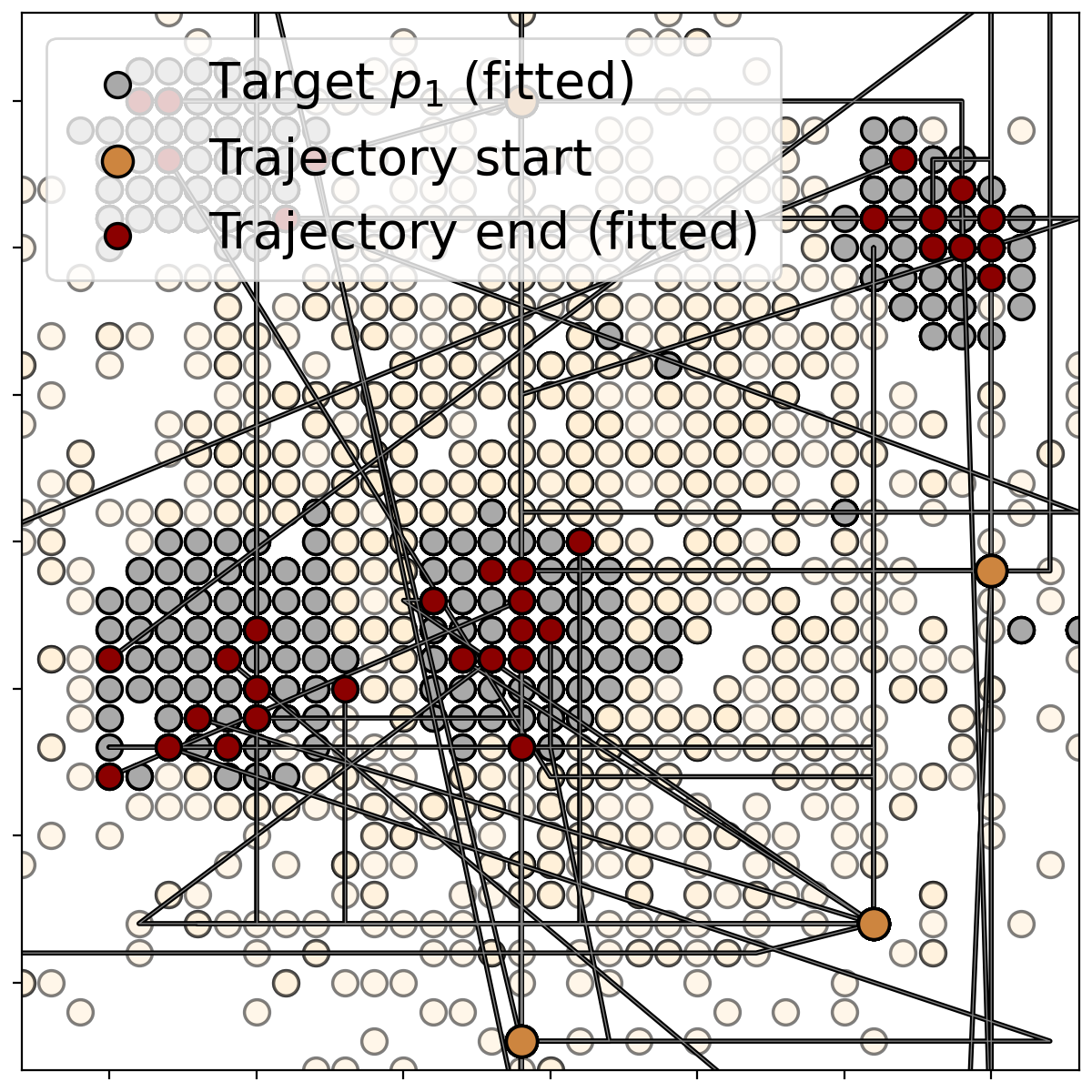}
        \caption{CSBM KL/64}
    \end{subfigure}
    \begin{subfigure}[b]{0.19\linewidth}
        \centering
        \includegraphics[width=0.995\linewidth]{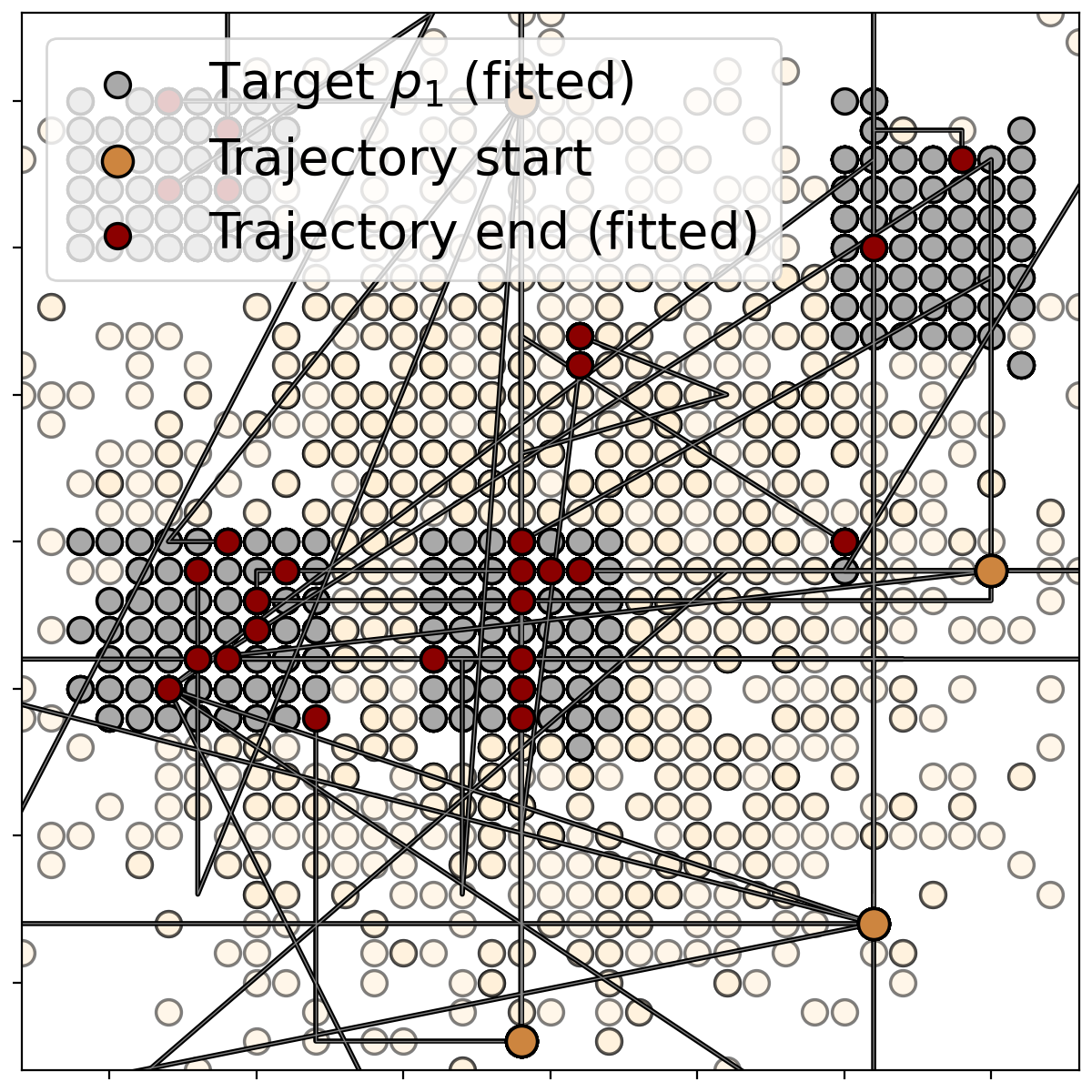}
        \caption{CSBM MSE/16}
    \end{subfigure}
    \begin{subfigure}[b]{0.19\linewidth}
        \centering
        \includegraphics[width=0.995\linewidth]{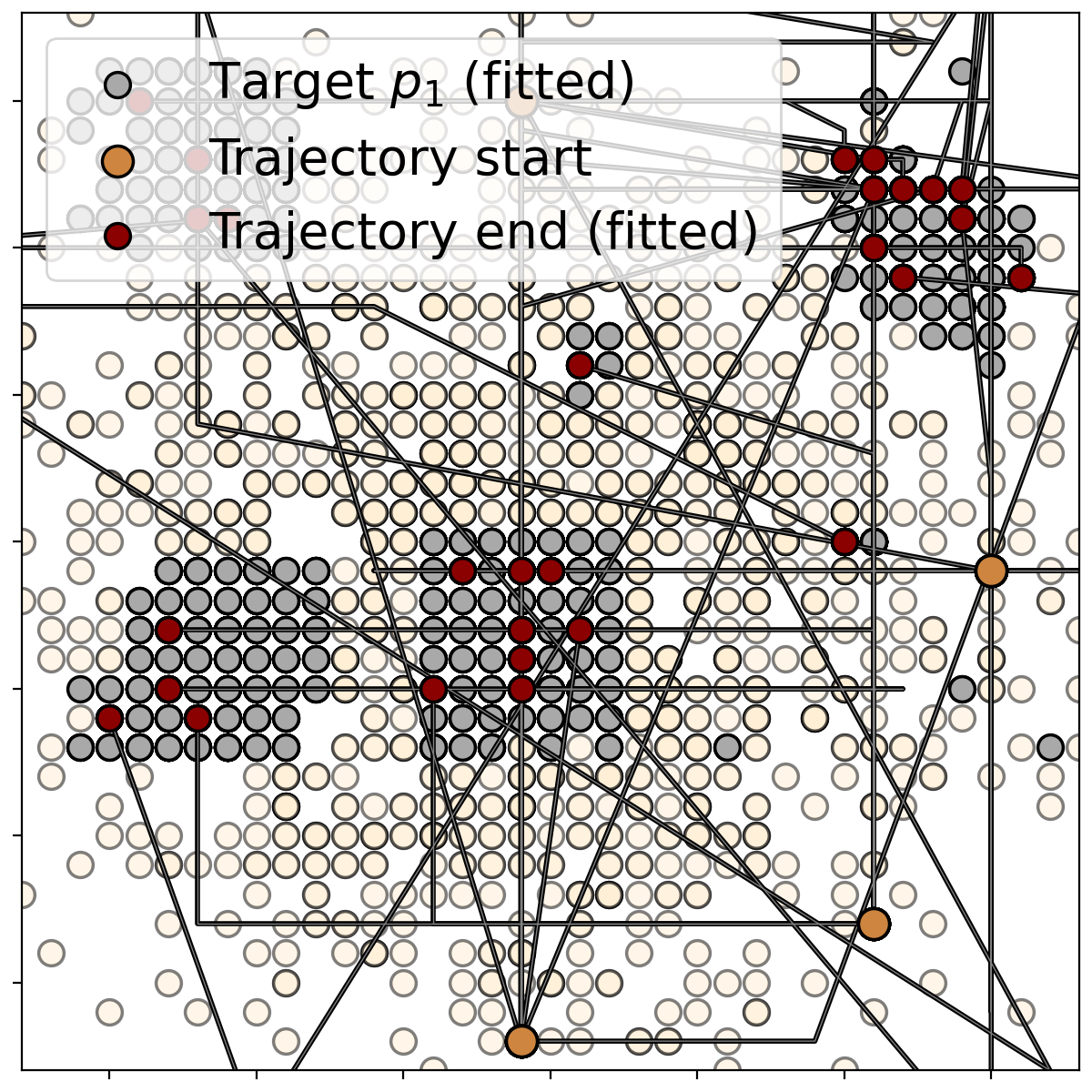}
        \caption{CSBM MSE/64}
    \end{subfigure}
    \\
    \begin{subfigure}[b]{0.19\linewidth}
        \centering
        \includegraphics[width=0.995\linewidth]{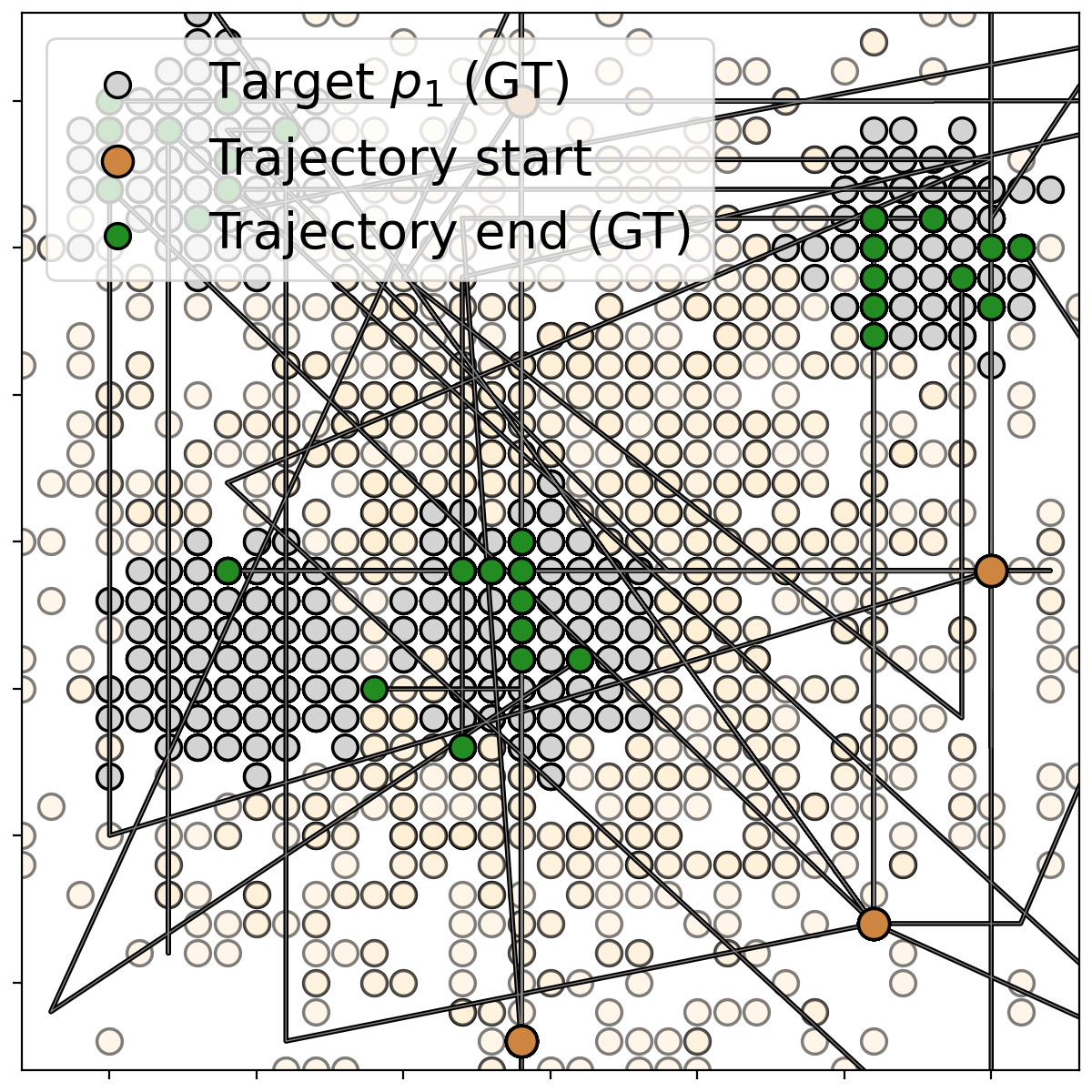}
        \caption{Benchmark}
    \end{subfigure}
    \begin{subfigure}[b]{0.19\linewidth}
        \centering
        \includegraphics[width=0.995\linewidth]{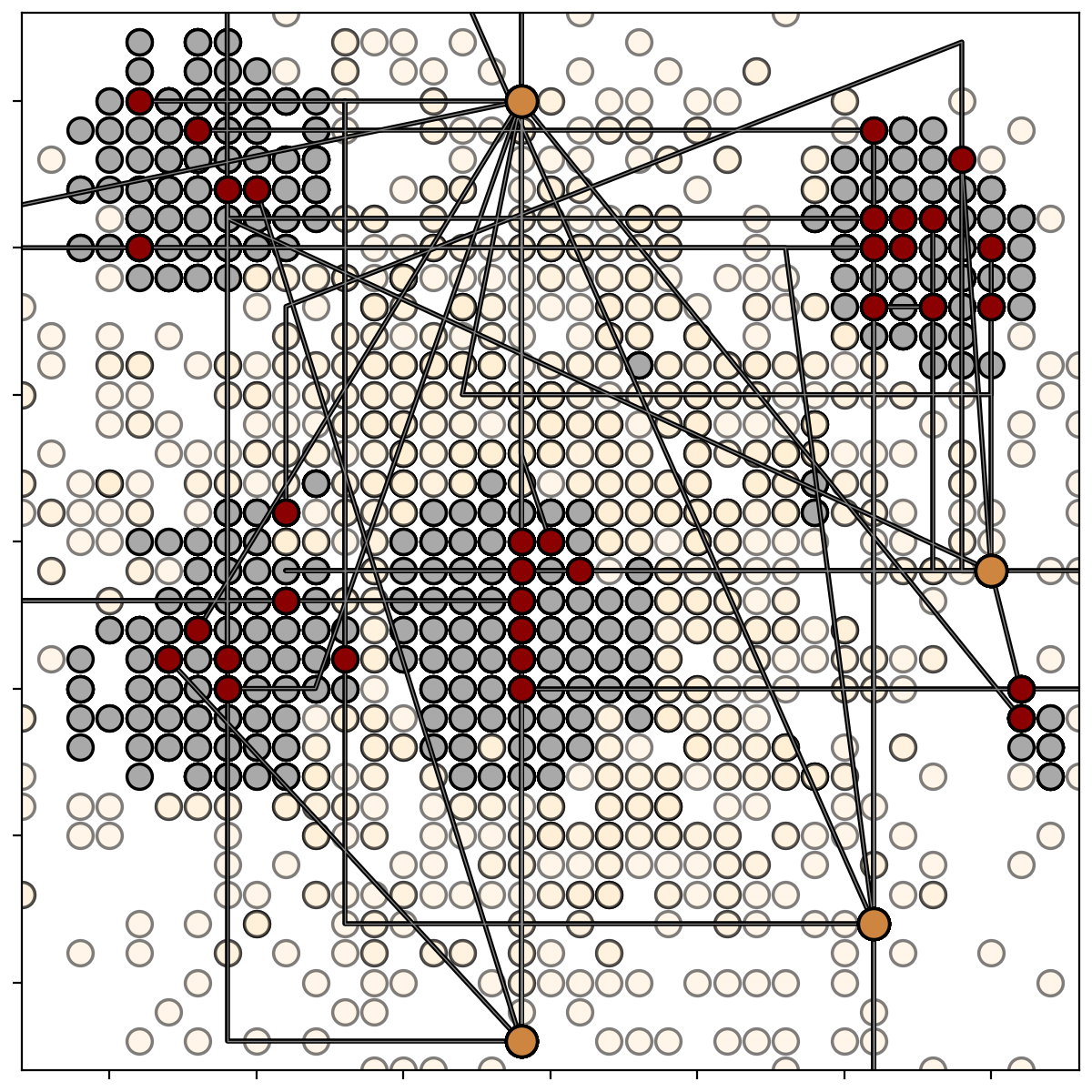}
        \caption{$\alpha$-CSBM KL/16}
    \end{subfigure}
    \begin{subfigure}[b]{0.19\linewidth}
        \centering
        \includegraphics[width=0.995\linewidth]{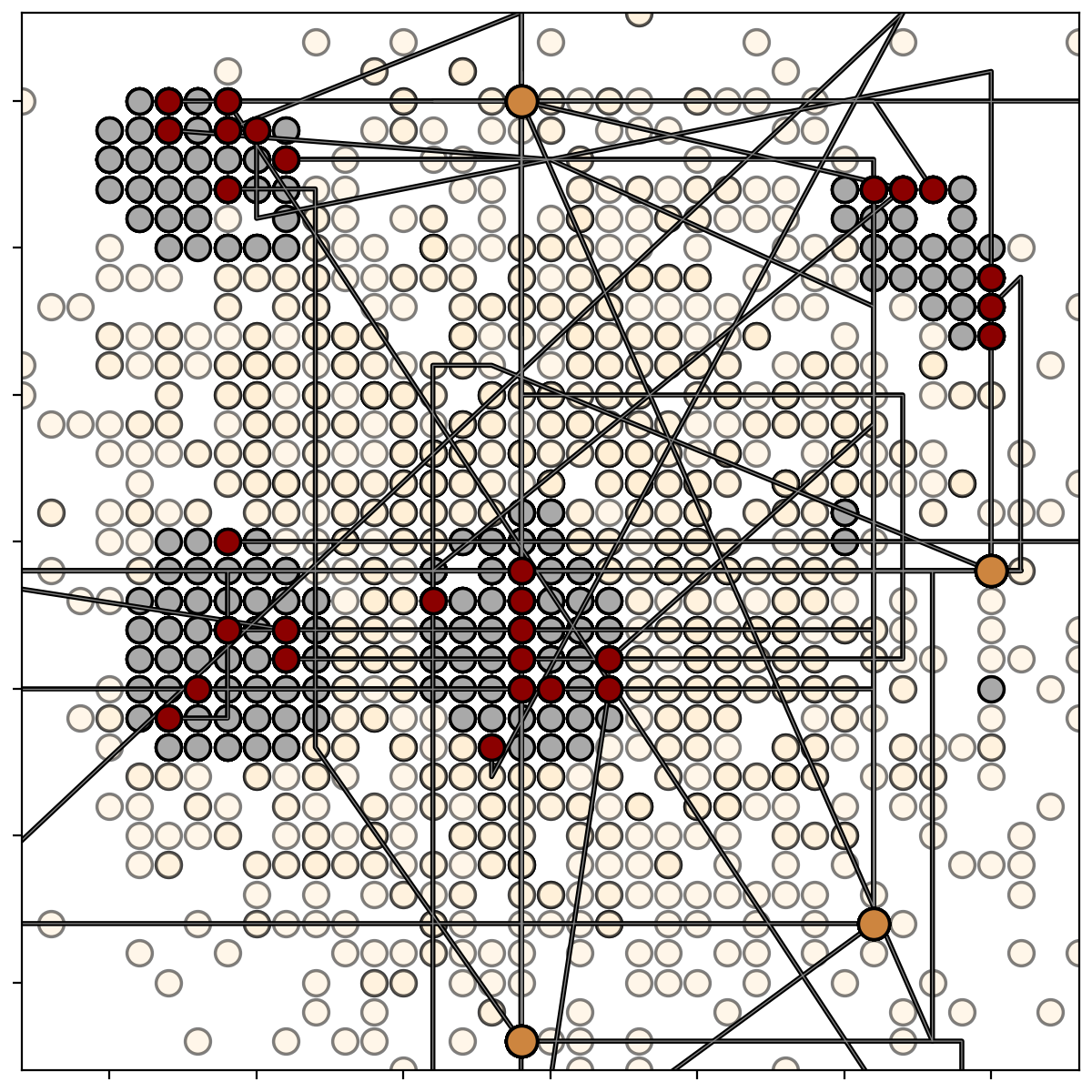}
        \caption{$\alpha$-CSBM KL/64}
    \end{subfigure}
    \begin{subfigure}[b]{0.19\linewidth}
        \centering
        \includegraphics[width=0.995\linewidth]{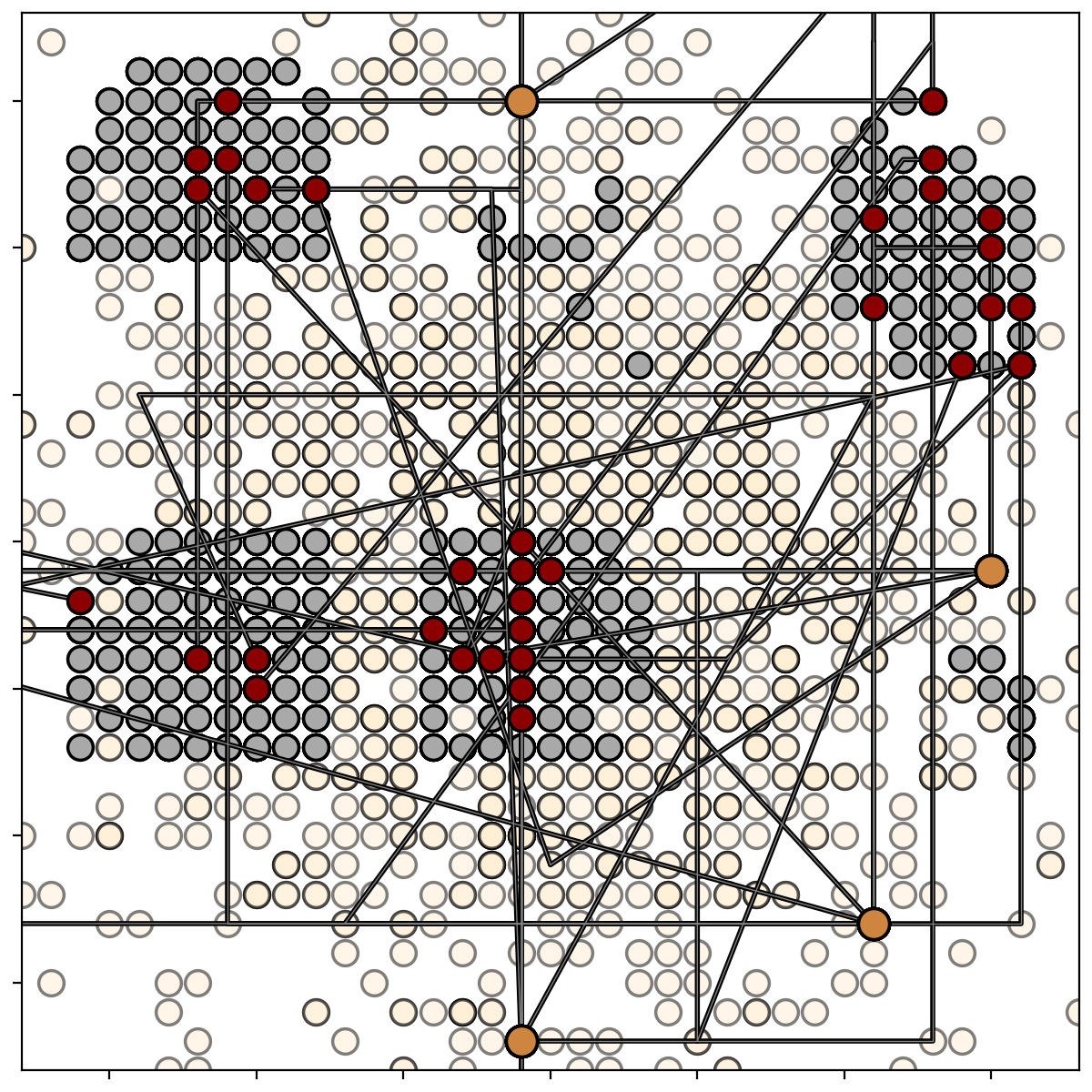}
        \caption{$\alpha$-CSBM MSE/16}
    \end{subfigure}
    \begin{subfigure}[b]{0.19\linewidth}
        \centering
        \includegraphics[width=0.995\linewidth]{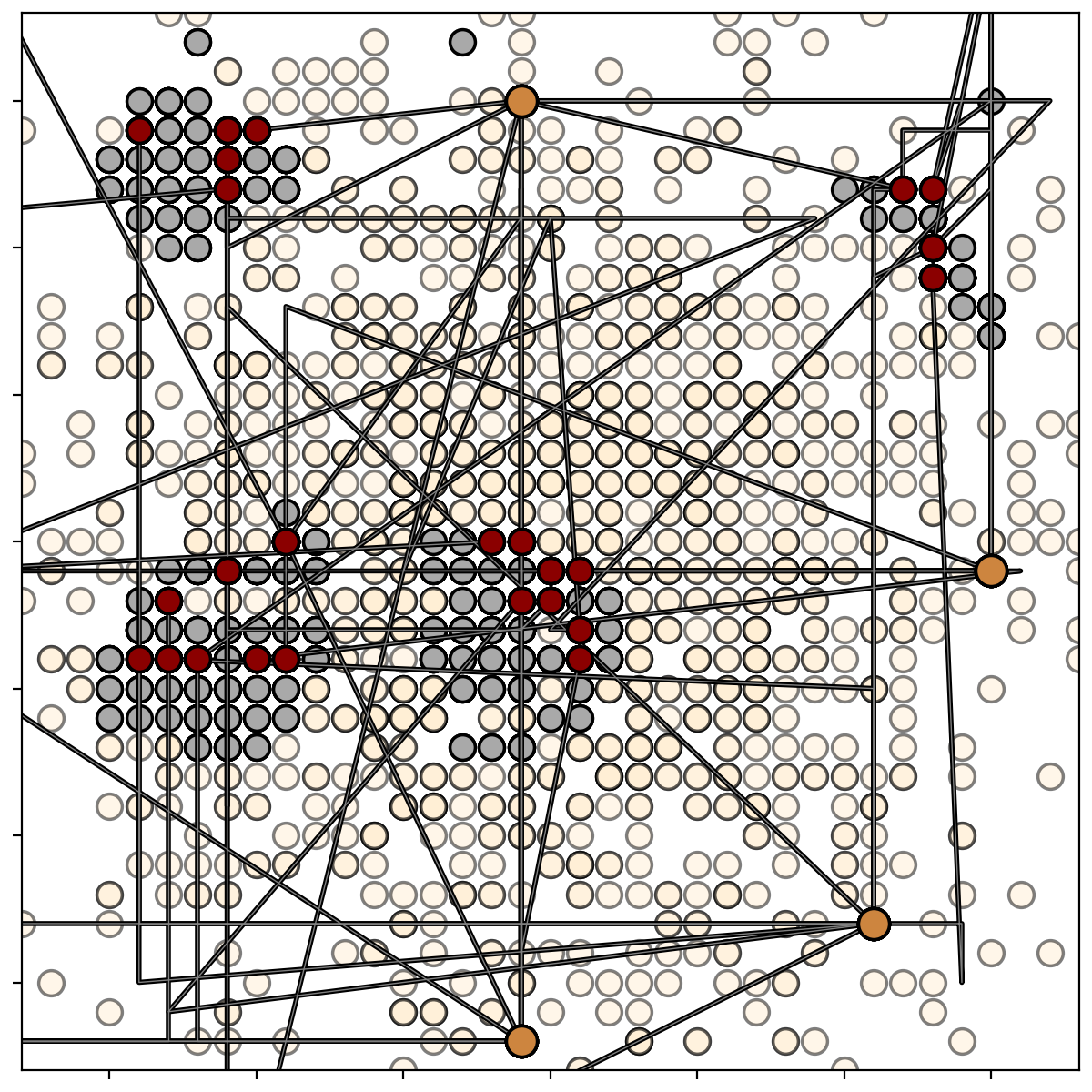}
        \caption{$\alpha$-CSBM MSE/64}
    \end{subfigure}
    \\
    \begin{subfigure}[b]{0.19\linewidth}
        \centering
        \includegraphics[width=0.995\linewidth]{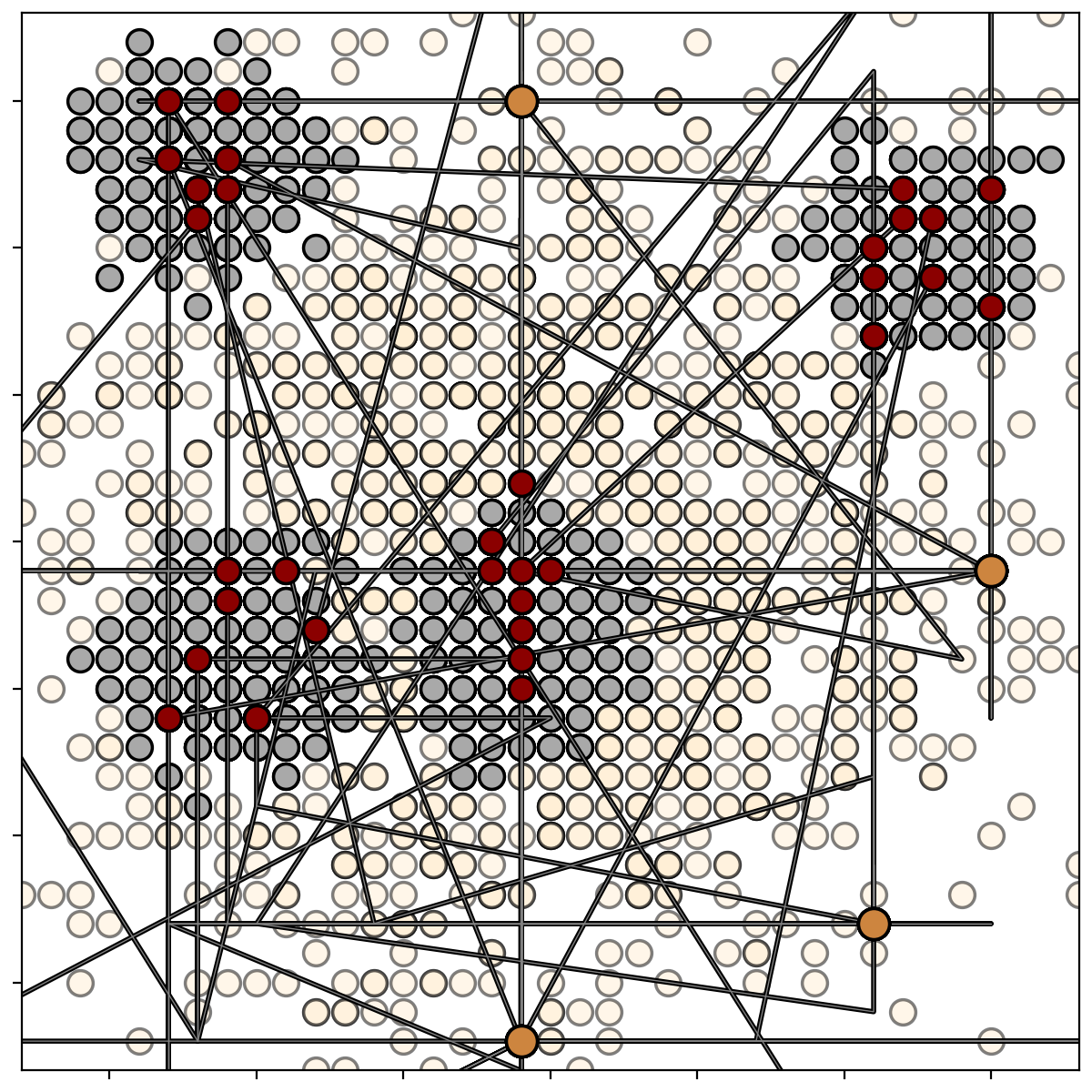}
        \caption{DLightSB}
    \end{subfigure}
    \begin{subfigure}[b]{0.19\linewidth}
        \centering
        \includegraphics[width=0.995\linewidth]{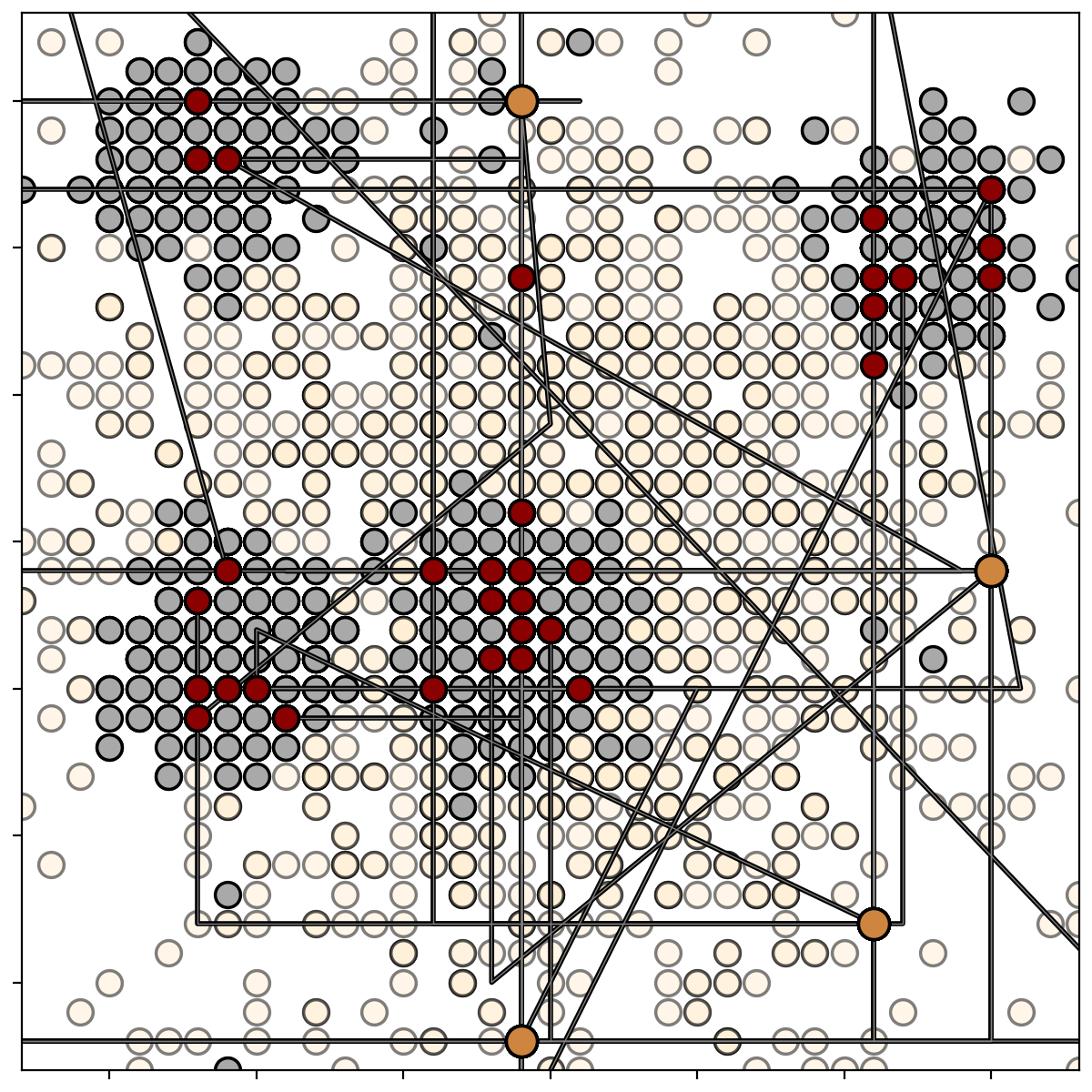}
        \caption{DLightSB-M KL/16}
    \end{subfigure}
    \begin{subfigure}[b]{0.19\linewidth}
        \centering
        \includegraphics[width=0.995\linewidth]{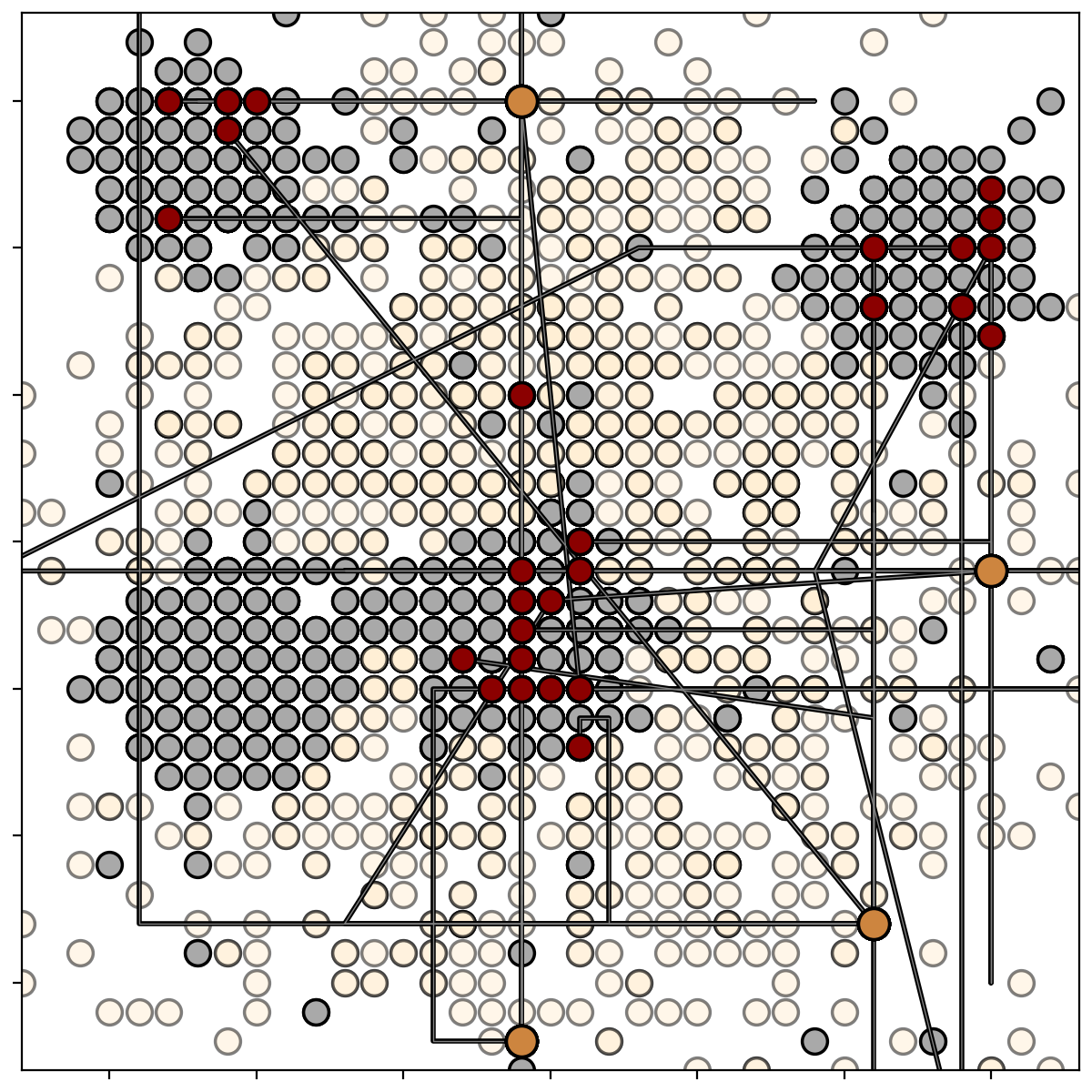}
        \caption{DLightSB-M KL/64}
    \end{subfigure}
    \begin{subfigure}[b]{0.19\linewidth}
        \centering
        \includegraphics[width=0.995\linewidth]{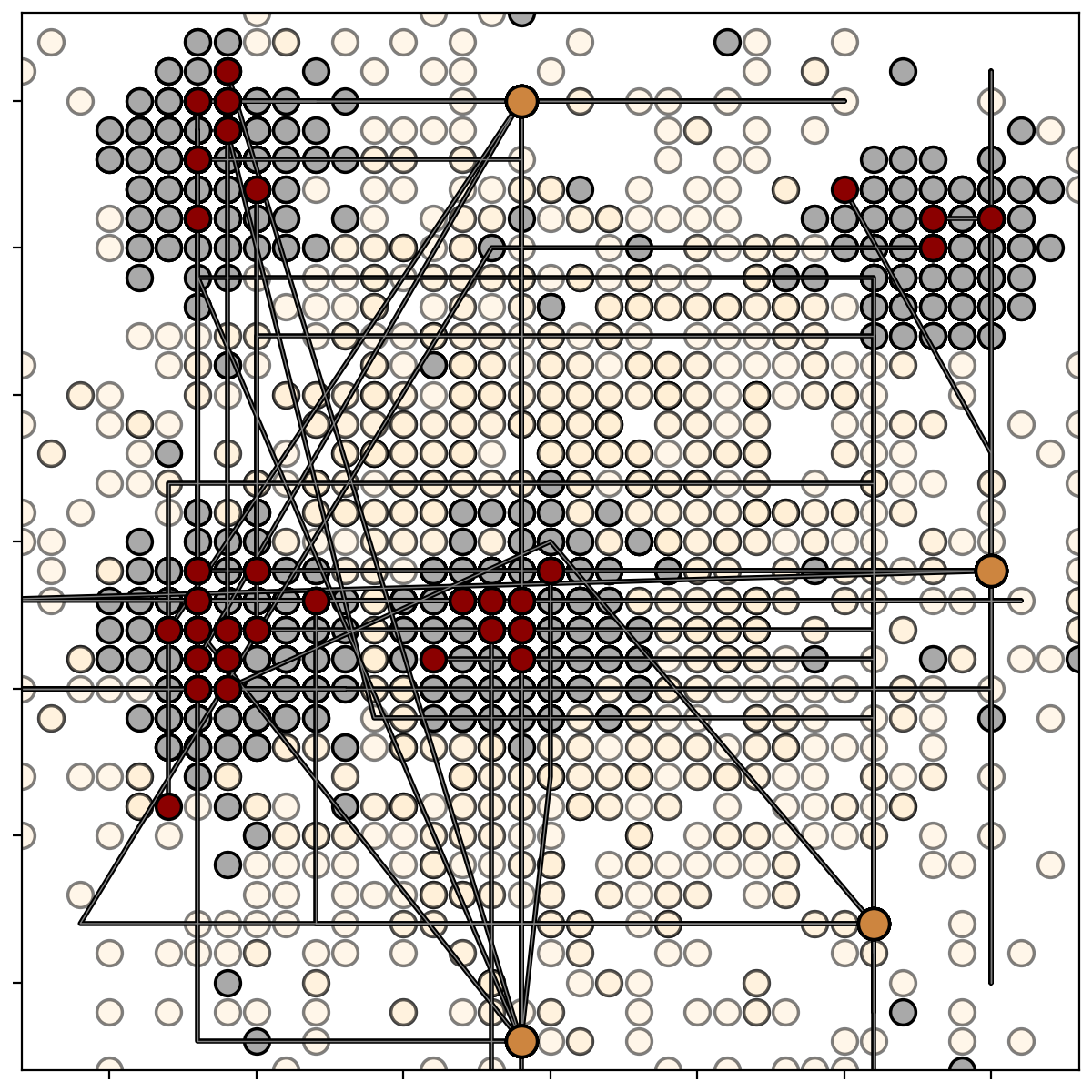}
        \caption{DLightSB-M MSE/16}
    \end{subfigure}
    \begin{subfigure}[b]{0.19\linewidth}
        \centering
        \includegraphics[width=0.995\linewidth]{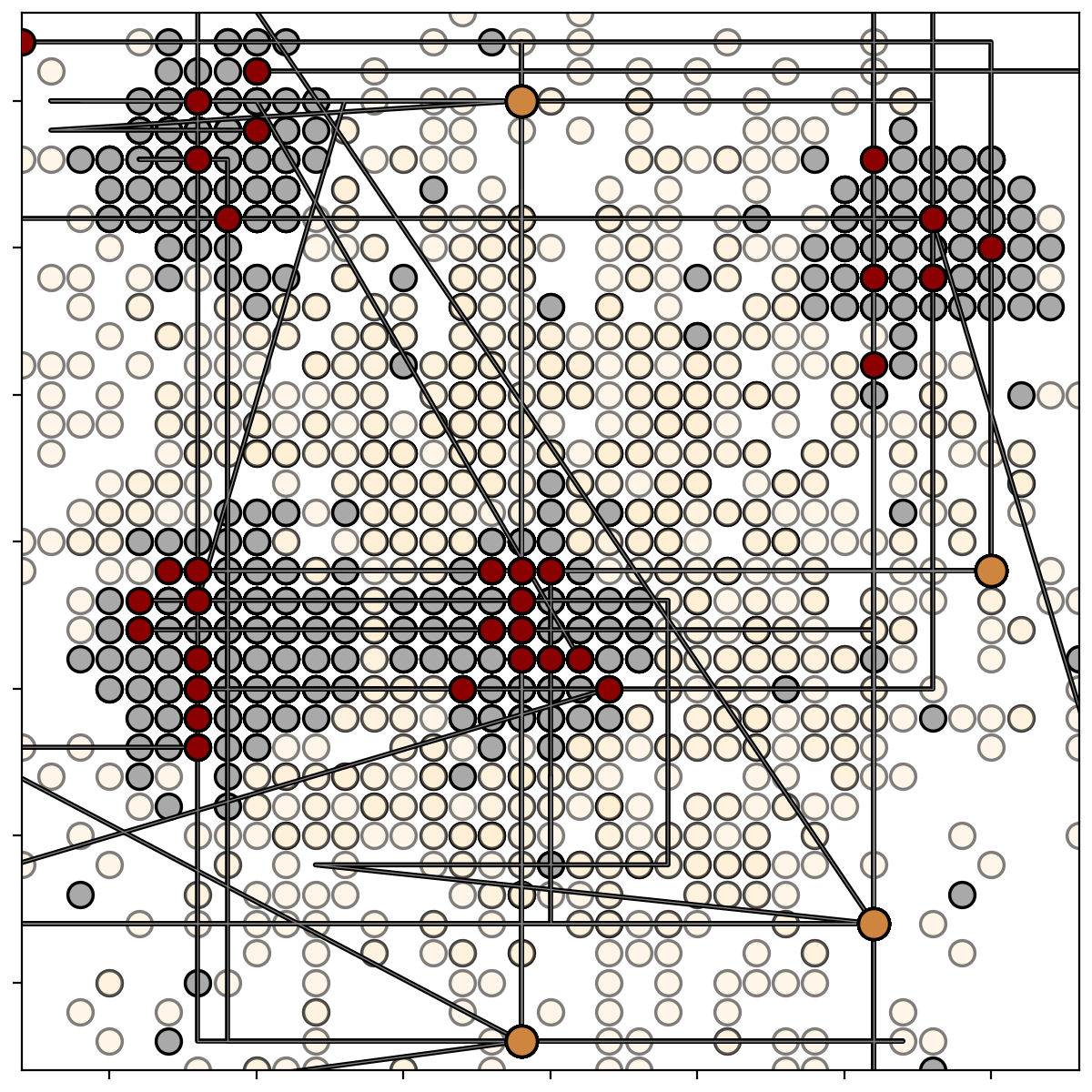}
        \caption{DLightSB-M MSE/64}
    \end{subfigure}
    \caption{\centering Samples from all methods on the high-dimensional Gaussian mixture benchmark using the uniform reference process $q^{\text{unif}}$ with $\gamma = 0.01$.}
    \label{figure:d2_u001_samples}
\end{figure}

\end{document}